\documentclass{article}

\usepackage{ragged2e}  
\usepackage{xurl}      

\usepackage{amssymb}
\usepackage{pifont}
\usepackage{xcolor}
\usepackage{booktabs}

\newcommand{\xmark}{\textcolor{red}{\ding{55}}}  
\renewcommand{\checkmark}{\textcolor{green!60!black}{\ding{51}}}  
\usepackage{booktabs}
\usepackage{tabularx}
\usepackage{makecell}
\usepackage{array}
\usepackage{microtype}
\usepackage{graphicx}
\usepackage{subcaption}
\usepackage{booktabs} 
\usepackage{enumitem}
\usepackage[utf8]{inputenc}
\usepackage{CJKutf8}
\usepackage{tabularx} 
\usepackage{booktabs}   
\usepackage{longtable}  
\usepackage{multirow} 
\usepackage{tabularx}   
\usepackage{ragged2e}   
\usepackage{array}      

\usepackage{hyperref}

\usepackage[preprint]{icml2026}

\usepackage{amsmath}
\usepackage{amssymb}
\usepackage{mathtools}
\usepackage{amsthm}

\usepackage{tabularx}   
\usepackage{booktabs}   
\usepackage{makecell}   
\usepackage{amsmath}    
\usepackage{amssymb}    

\usepackage{tikz}
\usepackage{colortbl}
\usepackage{hyperref}
\newcommand{\Ev}[1]{\hyperref[ev:#1]{[A#1]}}
\usetikzlibrary{shadows}
\definecolor{kleinblue}{RGB}{0, 47, 167} 
\definecolor{kleinblue2}{RGB}{20, 20, 125} 
\newcommand*\circled[1]{%
\tikz[baseline=(char.base)]{
  \node[shape=circle, fill=kleinblue2, draw=white, thick, drop shadow={shadow xshift=0.2ex,shadow yshift=-0.2ex}, inner sep=1pt] (char) {\textcolor{white}{#1}};
}}
\usepackage[capitalize,noabbrev]{cleveref}

\theoremstyle{plain}
\newtheorem{theorem}{Theorem}[section]
\newtheorem{proposition}[theorem]{Proposition}
\newtheorem{lemma}[theorem]{Lemma}

\theoremstyle{definition}
\newtheorem{definition}[theorem]{Definition}

\theoremstyle{remark}

\usepackage[textsize=tiny]{todonotes}

\icmltitlerunning{Submission and Formatting Instructions for ICML 2026}

\begin{document}

\twocolumn[
  \icmltitle{DoAtlas-1: A Causal Compilation Paradigm for Clinical AI}



  \icmlsetsymbol{equal}{*}

\begin{icmlauthorlist}
  \icmlauthor{Yulong Li}{equal,mbzuai}
  \icmlauthor{Jianxu Chen}{equal,mbzuai}
  \icmlauthor{Xiwei Liu}{equal,mbzuai}
  \icmlauthor{Chuanyue Suo}{mbzuai}
  \icmlauthor{Rong Xia}{mbzuai}
  \icmlauthor{Zhixiang Lu}{xjtlu}
  \icmlauthor{Yichen Li}{mbzuai}
  \icmlauthor{Xinlin Zhuang}{mbzuai}
  \icmlauthor{Niranjana Arun Menon}{mbzuai}
  \icmlauthor{Yutong Xie}{mbzuai}
  \icmlauthor{Eran Segal}{mbzuai}
  \icmlauthor{Imran Razzak}{mbzuai}
\end{icmlauthorlist}

\icmlaffiliation{mbzuai}{Mohamed bin Zayed University of Artificial Intelligence}
\icmlaffiliation{xjtlu}{Xi'an Jiaotong-Liverpool University}

\icmlcorrespondingauthor{Eran Segal}{Eran.Segal@mbzuai.ac.ae}
\icmlcorrespondingauthor{Imran Razzak}{Imran.Razzak@mbzuai.ac.ae}

]



\printAffiliationsAndNotice{}  

\begin{abstract}
Medical foundation models generate narrative explanations but cannot quantify intervention effects, detect evidence conflicts, or validate literature claims, limiting clinical auditability. We propose \textbf{causal compilation}, a paradigm that transforms medical evidence from narrative text into executable code. The paradigm standardizes heterogeneous research evidence into structured estimand objects, each explicitly specifying intervention contrast, effect scale, time horizon, and target population, supporting six executable causal queries: do-calculus, counterfactual reasoning, temporal trajectories, heterogeneous effects, mechanistic decomposition, and joint interventions. We instantiate this paradigm in DoAtlas-1, compiling 1,445 effect kernels from 754 studies through effect standardization, conflict-aware graph construction, and real-world validation (Human Phenotype Project, 10,000 participants). The system achieves 98.5\% canonicalization accuracy and 80.5\% query executability. This paradigm shifts medical AI from text generation to executable, auditable, and verifiable causal reasoning.
\end{abstract}
\vspace{-2em}
\section{Introduction}

Medical foundation models are emerging as the universal interface layer of clinical decision support systems (CDSS), delivering textual answers for clinical questions with evidence grounding by combining generative reasoning with retrieval augmentation~\cite{Singhal2023llm}.
However, as shown in Table~\ref{tab:table1}, what ultimately determines for clinical deployment is not generating plausible-sounding knowledge narratives, but supporting accountable intervention decisions: quantifying interventional causal effects on outcomes for a specified target population under explicit constraints~\cite{Singhal2023llm}. Existing paradigms constructed around generative reasoning and citation alignment typically lack a principled mechanism for compiling heterogeneous research findings into executable interventional estimands~\cite{hernal_causal_2020, hernan_target_2016}. In particular, contrast definition, effect scale/unit, time window, and target population are often left implicit; uncertainty and evidential conflicts are difficult to represent and adjudicate in a structured manner, which limits auditability in high-stakes clinical settings~\cite{kahan_estimands_2024}.
To address this, we propose DoAtlas, a new paradigm for clinical causal inference that compiles heterogeneous studies into structured interventional estimands, explicitly models effect contrasts, scale, time window, and target population, and adjudicates evidential conflicts using real-world data validation signals.



\begin{table}[t]
\centering

\label{tab:paradigm_shift}
\small
\setlength{\tabcolsep}{10pt}
\resizebox{\linewidth}{!}{
\begin{tabular}{lll}
\toprule
 & \textbf{Narrative Generation} & \textbf{Causal Compilation (DoAtlas)} \\
\midrule

\textbf{Representation} & 
Text with citations & 
Executable estimand objects \\
\addlinespace[0.3em]

\textbf{Query Interface} & 
Natural language questions & 
Typed queries ($Q_{do}$, $Q_{med}$, $Q_{CATE}$, ...) \\
\addlinespace[0.3em]

\textbf{Output Format} & 
``Studies suggest...'' + citations & 
$\hat{\theta}=0.73$ [0.60--0.88] \checkmark validated \\
\addlinespace[0.3em]

\textbf{Effect Quantification} & 
\xmark\ Qualitative descriptions only & 
\checkmark\ Quantitative estimates with CI \\
\addlinespace[0.3em]

\textbf{Conflict Detection} & 
\xmark\ Implicit (``results vary'') & 
\checkmark\ Explicit (directional, interval, $I^2$) \\
\addlinespace[0.3em]

\textbf{External Validation} & 
\xmark\ Cannot verify claims & 
\checkmark\ Statistical test vs. real-world data \\
\addlinespace[0.3em]

\textbf{Causal Reasoning} & 
\xmark\ Narrative descriptions only & 
\checkmark\ 6 executable query types (Table~\ref{tab:query-gallery}) \\
\addlinespace[0.3em]

\textbf{Composability} & 
\xmark\ Each query independent & 
\checkmark\ Graph-based query composition \\
\addlinespace[0.3em]

\textbf{Reproducibility} & 
Stochastic (varies across runs) & 
Deterministic with full provenance \\
\addlinespace[0.3em]

\textbf{Evidence Update} & 
Requires model retraining (weeks) & 
Incremental compilation (minutes) \\
\addlinespace[0.3em]

\textbf{Example Systems} & 
Med-PaLM, GPT-4-Med, RAG systems & 
DoAtlas (ours) \\

\bottomrule
\end{tabular}
}
\caption{Paradigm shift from narrative generation to causal compilation for clinical decision support.}
\label{tab:table1}
\vspace{-2.5em}
\end{table}

\begin{figure*}[!t]
\centering
\includegraphics[width=0.9\textwidth]{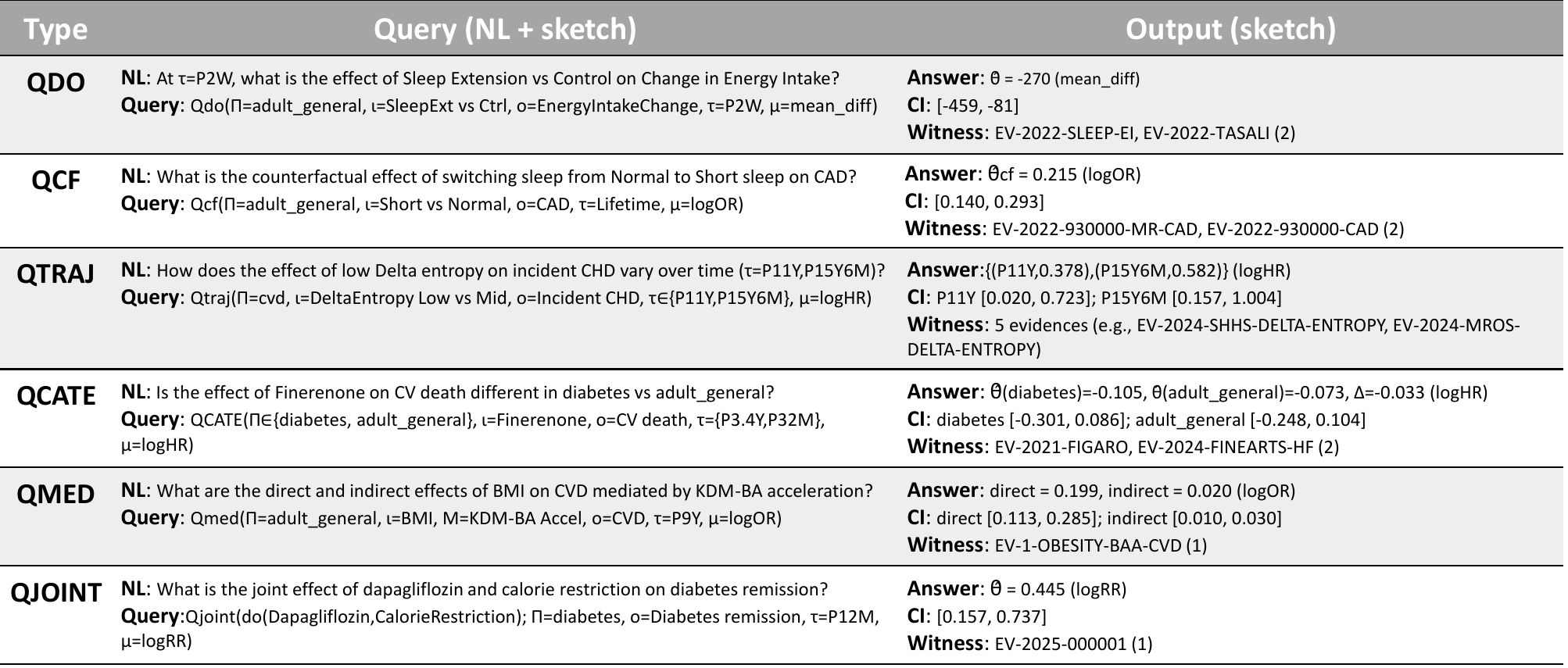}
\caption{\textbf{Query Gallery.} DoAtlas supports six executable query types:
\textsc{Qdo} (interventional effect), \textsc{Qcf} (counterfactual effect under an alternative exposure),
\textsc{Qtraj} (time-indexed effect trajectory), \textsc{QCATE} (heterogeneous effects across populations/strata),
\textsc{Qmed} (direct/indirect effects via a mediator), and \textsc{Qjoint} (joint intervention effect).
Each answer is computed from a \emph{witness} subset of evidence objects (EV-*); when a query is not executable, the system returns diagnostic \emph{flags} instead of a numeric answer.}
\label{tab:query-gallery}
\vspace{-1.7em}
\end{figure*}

Despite the importance of clinically actionable and auditable decision support, current medical foundation model paradigms still exhibit three fundamental gaps for clinical causal inference.
 \textbf{1) Missing interventional estimand}. Narrative explanations with citation alignment do not yield recomputable estimand-level outputs. Without explicit contrasts and effect scales, explanations remain readable but not actionable, as they are neither recomputable, auditable, nor verifiable~\cite{hernan_target_2016, kleinberg_prediction_2015}.
  \textbf{2) Absence of effect estimation}. Existing medical foundation model paradigms typically do not include causal effect estimation pipelines, lacking estimand specification, scale transformation, and uncertainty propagation~\cite{Singhal2023llm, glymour_causal_2019}. Consequently, comparable and composable effect objects, they cannot reliably answer \textit{"How large is the effect?"} \textit{"Which is stronger?"} or \textit{"Whether consensus exists?"}
  \textbf{3) Absence of real-world external validation}. Existing medical foundation models lack mechanisms to assess the external validity of research claims, preventing the generation of empirically grounded, updatable reliability signals needed to assess credibility, adjudicate conflicts, and guard against unsafe recommendations.
  Clinical deployment demands not better narrative generation, but foundational infrastructure comprising interventional estimand objects, conflict governance, and real-world validation that renders evidence recomputable, auditable, and verifiable~\cite{kahan_estimands_2024, glymour_causal_2019}.


To address this, we introduce a causal compilation paradigm that systematically compiles heterogeneous research evidence into executable causal atlas objects.
\textbf{\textit{i)} Effect standardization.} Unifying effects from heterogeneous models (linear, logistic, Cox, MR, mediation) within an SCM framework, compiling them into unified interventional estimand objects; each object explicitly specifies intervention contrast, scale/unit, time window, and target population, and retains evidence provenance and key assumptions to support auditability.
\textbf{\textit{ii)} Conflict-aware graph construction.} Detecting evidential discrepancies (directional conflicts, interval incompatibility, heterogeneity), preserving conflict states as structured metadata, and using validation signals to calibrate evidence selection and conflict adjudication when available.
\textbf{\textit{iii)} External validation.} Aligning research claims to Human Phenotype Project (HPP) \footnote{\footnotesize\url{https://knowledgebase.pheno.ai/}} on mappable subsets, generating reliability signals (directional, effect-size, and subgroup/effect-modification consistency) to inform evidence selection and knowledge evolution.
Unlike medical foundation models centered on narrative generation and citation alignment, causal compilation optimizes for executability: claims specify computable bounds and execute only when information is complete. DoAtlas compiles heterogeneous evidence into queryable, auditable causal atlas objects with real-world validation signals.

As illustrated in Figure~\ref{tab:query-gallery}, DoAtlas supports six typed, executable causal inference capabilities:
\begin{figure*}[!t]
\centering
\includegraphics[width=0.92\textwidth]{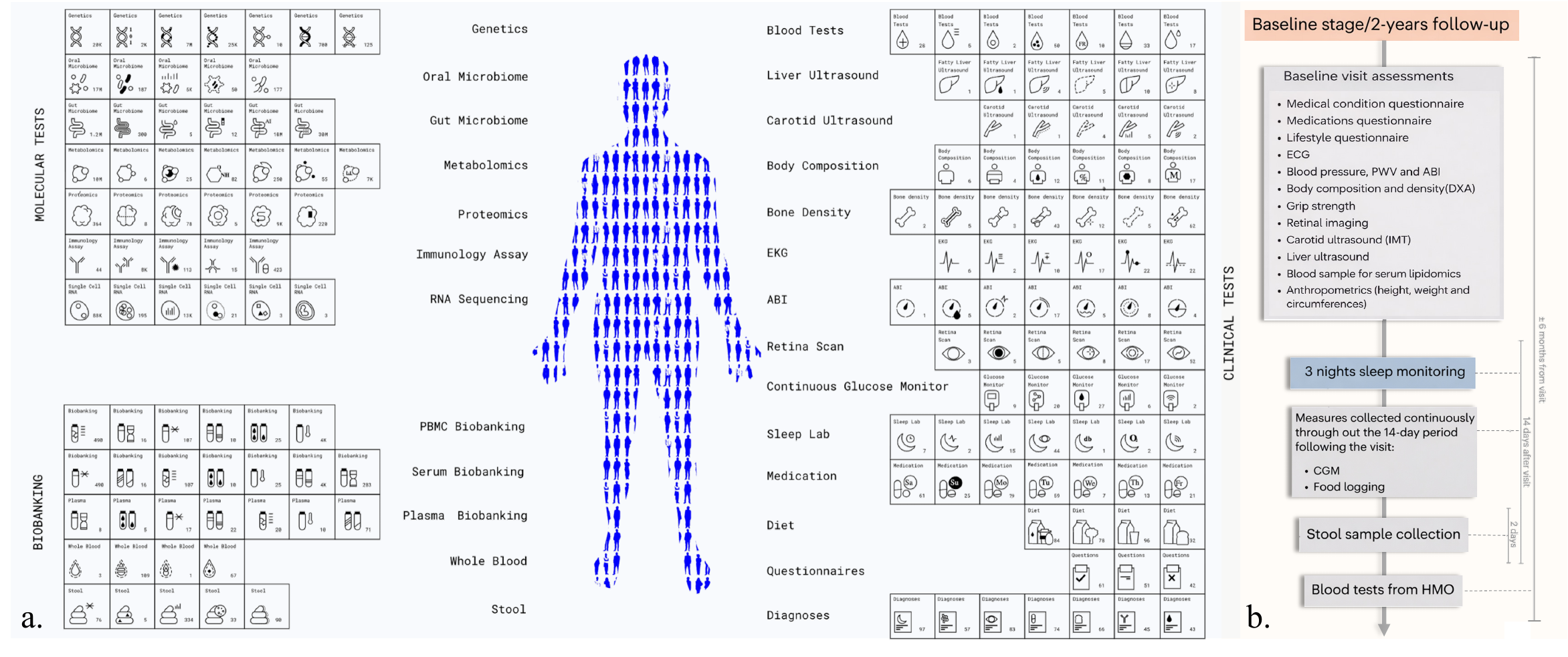}
\caption{\textbf{Illustration of the HPP study data.} \textbf{(a)} Multi-modal and multi-omic data architecture of HPP. Molecular tests include genetics, microbiome metagenomics, metabolomics, proteomics, and single-cell RNA sequencing. Clinical tests capture deep physiological states via retinal imaging, liver and carotid ultrasound, DXA body composition, and continuous monitoring through CGM and multi-night sleep monitoring. Biobanking involves the systematic preservation of blood and stool samples for longitudinal discovery. \textbf{(b)} Timing of sleep monitoring with respect to all other phenotypes. The body system characteristics were measured within a period of ±6 months from the visit, with the sleep monitoring performed in three nights within a 2-week time period after that visit. ABI, ankle–brachial index; CGM, continuous glucose monitoring; DXA, dual-energy X-ray absorptiometry; ECG, electrocardiogram; HMO, health maintenance organization; IMT, intimamedia thickness; MS, mass spectrometry; PWV, pulse wave velocity; RDS, recent depressive symptoms. }
\label{fig: hpp3}
\vspace{-1.5em}
\end{figure*}

\circled{1} \textbf{Do-calculus queries:} Answering interventional queries such as "What is the causal effect on Y under do(X)?", and returning quantitative effect estimates with confidence intervals and explicit intervention contrasts {\small\textit{(e.g., initiate vs. not initiate; drug A vs. drug B)}}, rather than correlational predictions or narrative rationales.

    \circled{2} \textbf{Mechanistic causal pathway decomposition:} When mediator information is available and identifiability conditions holds, decomposing total effects into constituent causal pathways and quantifying their relative contributions. {\small\textit{ For example, the effect of SGLT2i inhibitors on heart failure hospitalization may be attributed approximately ~45\% to diuretic pathways, ~30\% via myocardial metabolic reprogramming, and ~25\% via renoprotective pathways, enabling mechanism-level interpretation.}}
    
    \circled{3} \textbf{Counterfactual reasoning:} Quantifying counterfactual outcomes under alternative interventions to support clinical "What-If" analysis. {\small\textit{For example, "If a patient had quit smoking five years aearlier, their current cardiovascular risk would be reduced to X\%," providing causal grounding for retrospective analysis and decisions-making.}}
    
    \circled{4} \textbf{Individualized treatment effect estimation:} Incorporating effect modifiers such as age, sex, comorbidities, and baseline risk to move from population-average effects to subgroup- or patient-specific expected benefits, thereby quantitatively informing precision treatment decisions.
    
    \circled{5} \textbf{Combination intervention modeling:} Modeling joint intervention effects and identifying causal pathway interactions to characterize whether combinations of therapies are synergistic, additive, or antagonistic, providing quantitative support for combination treatment decisions.
    
    \circled{6} \textbf{Dynamic prognostic trajectory modeling:} Predicting disease trajectories over time under alternative interventions, answering \textit{"what is the effect?"} \textit{"when do benefits begin?"} and \textit{"how long do benefits persist?"} and offering temporal causal insights to guide long-term treatment and follow-up.

DoAtlas can be integrated with foundation models (FMs) to form a layered architecture. The FMs layer provides clinical semantic understanding and natural language interfaces, while the causal inference layer, implemented by DoAtlas, supplies executable causal reasoning and auditable inference traces. Together, these layers enable a paradigm shift in medical AI from narrative generation toward accountable decision support.
Overall, we propose a causal compilation paradigm, instantiated as a foundation for clinical causal inference, that for the first time achieves systematic validation against real-world data. This paradigm enables auditable and verifiable causal reasoning in medical AI.

\section{Related Work}
\textbf{Paradigm evolution of medical AI.}
Early medical AI progressed from rule-based expert systems to task-specific models, including regression-based risk scores and large-scale EHR-driven prediction~\cite{d2008general,esteva2017dermatologist,rajkomar2018scalable,shickel2017deep}.
More recently, general-purpose large language models (e.g., GPT-style models, Claude, Baichuan) and domain-adapted variants such as Med-PaLM have been positioned as interface layers for clinical systems~\cite{achiam2023gpt,bai2022constitutional,yang2023baichuan,nori2023capabilities,singhal2025toward}.
With retrieval augmentation, these models generate literature-grounded explanations that improve surface interpretability~\cite{lewis2020retrieval}. However, current medical foundation models remain confined to narrative diagnostic assistance rather than auditable estimates of intervention effects~\cite{nori2023capabilities,singhal2025toward,topol2019high}. Key elements for clinical decision-making—intervention contrasts, effect magnitudes, and uncertainty—are typically left implicit, limiting suitability for high-stakes clinical use~\cite{kelly2019key,amann2020explainability}.

\textbf{Literature grounding and causal modeling.}
To address these limitations, recent work has pursued two complementary directions: literature-grounded generation to reduce hallucination~\cite{lewis2020retrieval}, and causal inference frameworks for modeling interventional effects~\cite{pearl2000models,hernan2010causal,vanderweele2015explanation}.
Despite these advances, the two lines of work operate in isolation: biomedical literature is treated as narrative justification, while causal methods remain confined to individual studies~\cite{shickel2017deep,kelly2019key}. Consequently, existing systems fail to produce standardized interventional estimates, leaving a critical disconnect between narrative evidence and formal causal inference that limits accountable clinical decision support~\cite{topol2019high,amann2020explainability}.


\begin{figure*}[!t]
\centering
\includegraphics[width=\textwidth]{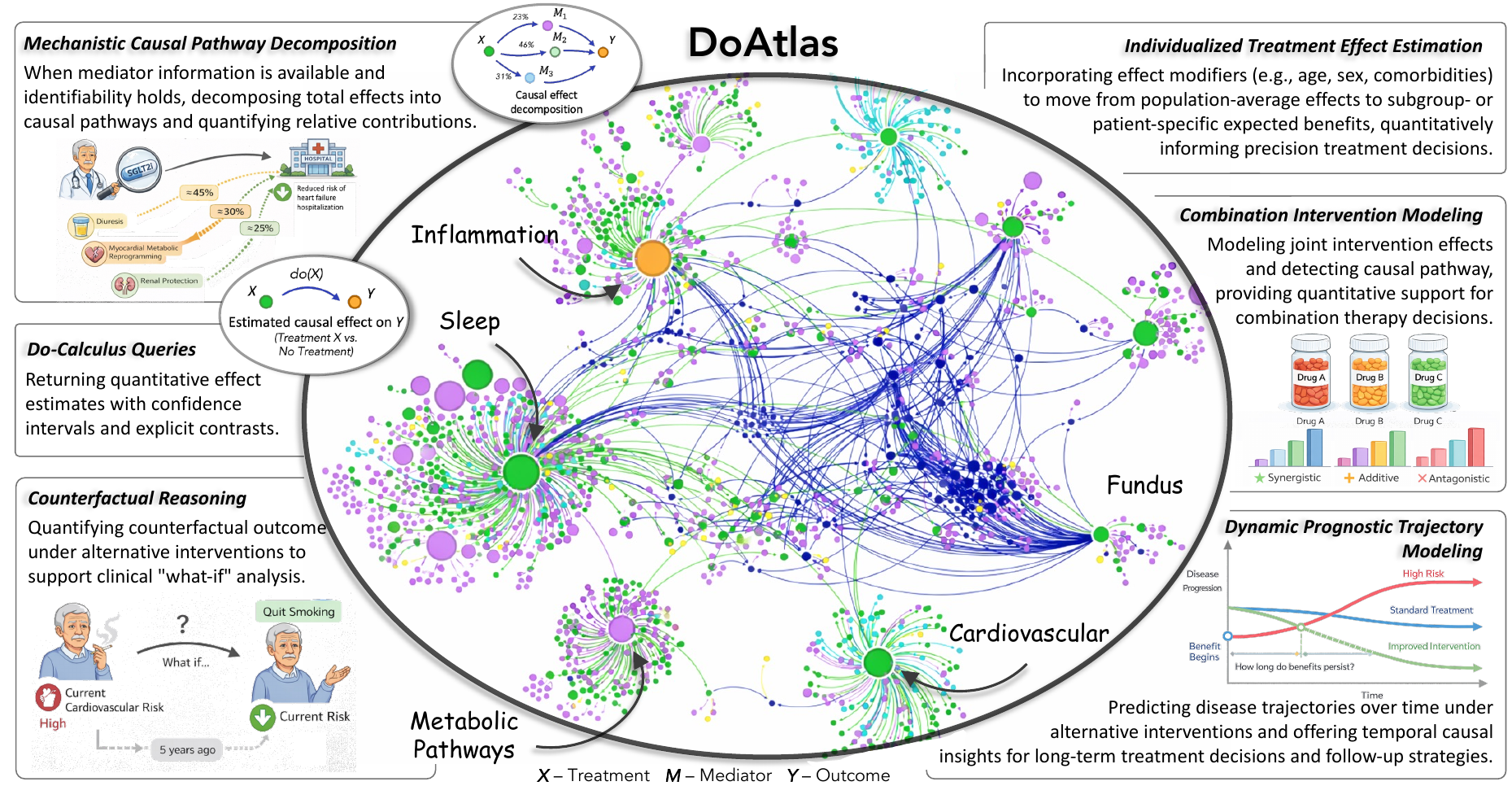}
\caption{\textbf{Overview of DoAtlas.} DoAtlas compiles heterogeneous clinical evidence into standardized interventional estimand objects with explicit contrasts, effect scales, time horizons, and target populations. It organizes comparable claims into conflict-aware causal graphs, supports executable causal queries, including do-calculus estimation, mechanistic pathway decomposition, counterfactual reasoning, individualized treatment effect estimation, combination intervention modeling, and dynamic prognostic trajectory modeling. External validation signals from HPP are used to assess reliability, adjudicate evidential conflicts, and support auditable clinical decision support.}
\label{fig: DoAtlas}
\vspace{-1.5em}
\end{figure*}

\section{HPP Dataset}


The \textbf{Human Phenotype Project (HPP)} is a prospective deep-phenotyping cohort study that is currently enrolling 10,000 participants to support systematic, multi-modal health assessments. As illustrated in Figure~\ref{fig: hpp3}, the project provides unprecedented phenotypic granularity by integrating data from \textbf{17 distinct physiological systems}. Each participant undergoes comprehensive evaluations spanning cardiovascular, metabolic, immune, neurocognitive, musculoskeletal, digestive, and respiratory systems. These assessments include laboratory biochemistry (e.g., lipid panels, glucose metabolism, renal function, and inflammatory markers), imaging modalities (carotid ultrasound, retinal imaging, and hepatic ultrasound), body composition analysis using dual-energy X-ray absorptiometry (DXA), functional testing, and multi-omics profiling, including genomics, microbiome metagenomics, and metabolomics.

Participants receive comprehensive reassessments on a biennial basis, while key biomarkers are continuously monitored through wearable devices, such as continuous glucose monitoring (CGM) and multi-night home sleep apnea testing. This design enables the dataset to capture dynamic physiological trajectories over time rather than static cross-sectional snapshots. In particular, the longitudinal tracking of biological age (BA) scores across multiple systems allows DoAtlas to reconcile conflicting evidence by projecting heterogeneous research findings onto a consistent real-world temporal axis. This capability further supports the identification of sex-specific dynamics that may be overlooked in less granular datasets, including sigmoid-shaped aging patterns in blood lipid profiles that are associated with menopausal status. Additional details are provided in Appendix~\ref{HPP}.

\section{Causal Compilation Paradigm}
\subsection{Core Abstractions}
\label{sec:core-abstractions}


Medical evidence is reported in highly heterogeneous forms. Across studies, researchers employ different statistical models, report effects using different measures, and target distinct populations and time horizons. This heterogeneity poses a fundamental challenge for systematic comparison, synthesis, and downstream causal reasoning.
To address this challenge, we introduce an intermediate representation (IR) that explicitly separates the semantic content
of an interventional estimand from its numeric reporting form, while supporting deterministic normalization across studies.


\begin{definition}[\textcolor{kleinblue2}{Estimand Intermediate Representation}]
\label{def:estimand-ir}
An interventional estimand is represented by a five-tuple
$\varepsilon = (\Pi, \iota, o, \tau, \mu)$, where
$\Pi$ specifies the target population, $\iota$ the intervention contrast,
$o$ the outcome variable, $\tau$ the time horizon, and $\mu$ the measure functional.
A complete causal claim further includes numeric information, represented by a claim object
$c=(\theta, \mathrm{CI})$, consisting of a point estimate $\theta$
and its confidence interval $\mathrm{CI}$.
Full object specification with provenance metadata is provided in Appendix~\ref{app:estimand-object}.
\end{definition}


When studies target the same semantic query, they may report effects in different parameterizations
(e.g., HR versus logHR). We therefore introduce a canonicalization operator normalizes both effect representation and
numeric claim while preserving the semantic query.

\begin{definition}[\textcolor{kleinblue2}{Canonicalization}]
\label{def:canonicalization}
Canonicalization operator is a mapping
$N:(\varepsilon,c)\mapsto(\varepsilon_{\mathrm{canon}},c_{\mathrm{canon}},\alpha)$
that transforms an estimand and its claim into a normal form, while recording required conditions $\alpha$.
It satisfies:
\textit{(i) Determinism}: $N(\varepsilon,c)$ is uniquely determined by $(\varepsilon,c)$;
\textit{(ii) Idempotence}: $N(N(\varepsilon,c))=N(\varepsilon,c)$; and
\textit{(iii) Semantic preservation}: $N$ preserves $(\Pi,\iota,o,\tau)$ up to a deterministic time-horizon alignment,
and only canonicalizes the representation of $\mu$ and the numeric form of $c$.

Concretely, canonicalization applies (a) a within-type reparameterization to put measure values on a canonical scale
(e.g., $\mathrm{HR}\mapsto \log\mathrm{HR}$), and (b) a deterministic time-window alignment
$\widehat{\tau}=\mathrm{align}(\tau)$ that maps raw horizons to canonical horizon classes used for comparison.
Construction and proofs are given in Appendix~\ref{app:canonicalization}, and the time-window alignment policy is specified in Appendix~\ref{app:time-alignment}.
\end{definition}


Canonical forms enable a precise notion of when causal claims can be compared. Without an explicit comparability criterion, heterogeneous effect reports may be incorrectly grouped, leading to conflicts or invalid aggregation.


\begin{definition}[\textcolor{kleinblue2}{Comparability}]\label{def:comparability}
Two estimands $\varepsilon_1,\varepsilon_2$ with claims $c_1,c_2$ are \emph{comparable}, denoted $\varepsilon_1 \sim \varepsilon_2$,
iff their canonical bucket keys coincide:
\[
\varepsilon_1 \sim \varepsilon_2 \iff k(\varepsilon_{1,\mathrm{canon}})=k(\varepsilon_{2,\mathrm{canon}})
\]
where $(\varepsilon_{i,\mathrm{canon}},c_{i,\mathrm{canon}},\alpha_i)=N(\varepsilon_i,c_i)$.
The bucket key includes the population bucket, intervention--contrast signature (including contrast type), outcome,
the aligned time-horizon class $\widehat{\tau}=\mathrm{align}(\tau)$, and the measure family.
The measure \emph{type} (e.g., HR vs.\ RR) is excluded from the key and is used only to define \emph{poolability}
(Appendix~\ref{app:comparability-poolability}).
\end{definition}


By construction, the comparability relation $\sim$ is an equivalence relation, as it reduces to equality of canonical bucket keys.
Including the aligned time-horizon class $\widehat{\tau}$ ensures that claims targeting materially different temporal regimes,
such as short-term versus long-term outcomes, are not erroneously grouped.
Comparability enables consistency analysis and conflict detection within a bucket, but does not imply numerical poolability,
which requires stronger conditions and is addressed separately in Appendix~\ref{app:comparability-poolability}.

\paragraph{Conceptual example.}
Consider three studies on SGLT2 inhibitors and 12-month MACE. Study A reports HR=0.73 [0.58,0.92]; 
Study B reports log HR=-0.315 [-0.545,-0.083]; Study C reports RR=0.76 [0.61,0.94]. 
Canonicalization (Definition~\ref{def:canonicalization}) maps A and B to the same canonical scale 
(logHR) and aligned horizon $\widehat{\tau}$, making them comparable (Definition~\ref{def:comparability}). 
Study C shares the ratio family but differs in measure type (RR vs.\ HR); it enables directional 
conflict detection but not pooling by default (Appendix~\ref{app:comparability-poolability}).

\subsection{Theoretical Guarantees}
\label{sec:theory}

The validity of the causal compilation paradigm rests on three properties: determinism of canonicalization, completeness of conflict detection, and consistency of evidence selection.
Together, these properties ensure that heterogeneous causal claims can be mapped into a unified representation space, subjected to decidable conflict checks, and assigned a well-defined and interpretable default effect.

\begin{theorem}[\textcolor{kleinblue2}{Canonicalization Guarantees}]
\label{thm:canon-guarantee}
Let the canonicalization operator $N$ be as in Definition~\ref{def:canonicalization}, mapping an input pair $(\varepsilon,c)$ to
$(\varepsilon_{\mathrm{canon}},c_{\mathrm{canon}},\alpha)$, where $\varepsilon=(\Pi,\iota,o,\tau,\mu)$ and $c=(\theta,\mathrm{CI})$.
Then $N$ satisfies:
\begin{enumerate}[leftmargin=*, itemsep=2pt]
\item \textbf{Uniqueness:} $N$ is deterministic and stable on canonical forms (stable on canonical forms), hence each input admits a unique normal form;
\item \textbf{Semantic preservation:} $N$ preserves the semantic core $(\Pi,\iota,o,\tau)$ and only transforms the measure representation $\mu$
and the numeric claim $c$, recording all non-semantic decisions in $\alpha$;
\item \textbf{Information preservation:} there exists a reconstruction map $N^{-1}$ such that the original input is recoverable up to the intended notion of equivalence:
$N^{-1}(\varepsilon_{\mathrm{canon}},c_{\mathrm{canon}},\alpha)\approx(\varepsilon,c)$.
\end{enumerate}
\end{theorem}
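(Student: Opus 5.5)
The plan is to make $N$ concrete, as the composition of two deterministic maps, and then check the three properties directly against that construction. Let $\mathcal{F}$ be the finite set of measure types (HR, OR, RR, logHR, MD, $\beta$, \dots). For each type $m$, fix a canonical type $\kappa(m)$ in the same family and a strictly monotone bijection
\[
g_m:\Theta_m\to\Theta_{\kappa(m)},
\]
for example $g_{\mathrm{HR}}=\log$ and $g_{\mathrm{logHR}}=\mathrm{id}$. We require $\kappa(\kappa(m))=\kappa(m)$ and $g_{\kappa(m)}=\mathrm{id}$.

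The alignment map $\mathrm{align}$ is a deterministic function from raw horizons to horizon classes. We record both the raw $\tau$ and the class $\widehat\tau$, and we take $\mathrm{align}$ to be the identity on class representatives.

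Define
\[
N(\varepsilon,c)=\bigl((\Pi,\iota,o,\tau,\kappa(\mu)),\ (g_\mu(\theta),\,g_\mu(\mathrm{CI})),\ \alpha\bigr),
\]
where $\alpha=(\mu,\ \tau,\ \mathrm{align}(\tau),\ \text{any CI-side/rounding flags})$. Monotonicity of $g_\mu$ ensures that $g_\mu(\mathrm{CI})$ is again an interval whose endpoints are the images of the original endpoints; when $g_\mu$ is decreasing, the endpoints are swapped.

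\textbf{Uniqueness.} Each component of $N$ is a function of its input: table lookups $\kappa$ and $g_\mu$, plus the fixed map $\mathrm{align}$. Hence $N$ is deterministic. Stability on canonical forms follows from $\kappa\circ\kappa=\kappa$ and $g_{\kappa(m)}=\mathrm{id}$: applying $N$ to an output returns the same estimand and claim. The only component of $\alpha$ that could change is the recorded original type. To settle this, we specify that $N$ on an already-annotated triple keeps the existing $\alpha$; this is the convention that makes Definition~\ref{def:canonicalization}(ii) hold literally.

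The unique normal form is then simply the fixed point reached after one application.

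\textbf{Semantic preservation.} This is read off the formula. The coordinates $\Pi,\iota,o$ are copied unchanged. The raw $\tau$ is kept, and its aligned class is stored in $\alpha$, so the only transformation of $\tau$ is the permitted alignment. Only $\mu$ and $c$ are altered, and every choice made in altering them is recorded in $\alpha$.

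\textbf{Information preservation.} Define
\[
N^{-1}(\varepsilon',c',\alpha)=\bigl((\Pi,\iota,o,\alpha.\tau,\alpha.\mu),\ (g_{\alpha.\mu}^{-1}(\theta'),\ g_{\alpha.\mu}^{-1}(\mathrm{CI}'))\bigr).
\]
Bijectivity of each $g_m$ gives $N^{-1}\circ N=\mathrm{id}$ exactly in exact arithmetic. The relation $\approx$ is then interpreted as equality up to floating-point rounding, together with any CI-symmetry approximations recorded as flags in $\alpha$.

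The main obstacle is this third item. Some conversions in the pipeline are not bijective within a type: for example, a standard error derived from a CI, or a CI reconstructed from a p-value. For these I would restrict exact invertibility to the within-type reparameterizations. Non-invertible steps would be logged in $\alpha$, and I would define $\approx$ as equality modulo the information that $\alpha$ declares lost. The claim is then that the reconstruction error is bounded by the recorded rounding tolerance, which follows from the Lipschitz continuity of $g_m^{-1}$ on compact subsets of its domain.
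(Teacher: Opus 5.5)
Your proposal is correct and follows essentially the same constructive route as the paper. $N$ is a deterministic composition of a within-type reparameterization (whose type is recorded in $\alpha$) and horizon alignment; determinism, stability and semantic preservation are read off the construction; and $N^{-1}$ inverts the bijection named in $\alpha$. The paper phrases determinism through a rule system with a total priority order rather than your type-indexed lookup table, but the argument is the same.

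One difference is worth noting. You keep the raw $\tau$ and store it in $\alpha$, so your reconstruction recovers $\tau$ exactly. The paper's reconstruction only picks some representative of $\widehat{\tau}$, which does not literally meet its own $\approx$ (that relation requires $\tau'=\tau$), so your version is slightly tighter on this point.
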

\noindent\textbf{Proof.} The key idea is that all non-semantic normalization decisions are explicitly recorded in $\alpha$,
which provides sufficient context to define a reconstruction map $N^{-1}$. As a result, canonicalization admits a unique normal form while preserving the underlying semantic estimand. Refer Appendix~\ref{app:canonicalization-proof} for full theoretical proof.

Building on canonicalized representations and comparability buckets,
we next characterize the scope under which evidential conflicts can be detected without false negatives. We consider an explicit taxonomy of decidable inconsistencies, denoted by $\mathcal{F}$,
which includes directional conflicts, interval incompatibility, and high heterogeneity
(formal definitions in Appendix~\ref{app:conflict-taxonomy}).,
and do not claim to capture all possible forms of disagreement beyond this set.

\begin{theorem}[\textcolor{kleinblue2}{Conflict Detection Completeness (w.r.t.\ $\mathcal{F}$)}]\label{sec:conflict-detection}
\label{thm:conflict-completeness}
For any bucket $B$, if an $\mathcal{F}$-conflict exists among the canonicalized claims in $B$, the conflict detector reports it
(i.e., no false negatives with respect to $\mathcal{F}$).
\end{theorem}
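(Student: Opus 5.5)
The argument is by construction: every conflict type in $\mathcal{F}$ is a decidable predicate over a finite set of canonicalized claims, and the detector simply evaluates that predicate exhaustively. I will fix the formal definitions of $\mathcal{F}$ (Appendix~\ref{app:conflict-taxonomy}) as three predicates on claims in a bucket $B$. A \emph{directional conflict} holds for a pair $(c_i,c_j)$ whose canonical intervals $\mathrm{CI}_i,\mathrm{CI}_j$ lie strictly on opposite sides of the null value of the canonical scale (0 for log-ratio or difference scales). \emph{Interval incompatibility} holds for a pair when $\mathrm{CI}_i\cap\mathrm{CI}_j=\emptyset$. \emph{High heterogeneity} holds for a poolable sub-collection when $I^2$, computed from inverse-variance weights derived from the canonical CIs, exceeds a fixed threshold. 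The first step is to note that each predicate depends only on the canonical numeric claims $c_{\mathrm{canon}}$ and on the canonical null value. Because $\sim$ is defined through bucket keys (Definition~\ref{def:comparability}), all claims in $B$ share a measure family, so the null value and the sign convention are common to all of them. By Theorem~\ref{thm:canon-guarantee}, items 1 and 2, these canonical quantities are uniquely determined and semantically faithful, so the predicates are well defined.

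Second, I specify the detector as an exhaustive procedure and show that it mirrors the definitions exactly. For each of the $\binom{|B|}{2}$ pairs it checks the directional and disjointness predicates. For each poolability class inside $B$, which is finite because buckets are finite and poolability partitions $B$ by measure type (Appendix~\ref{app:comparability-poolability}), it computes Cochran's $Q$ and $I^2=\max(0,(Q-\mathrm{df})/Q)$ and compares the result with the threshold. Completeness then follows case by case. If an $\mathcal{F}$-conflict exists, it is witnessed either by a specific pair or by a specific poolable class. The detector enumerates that witness and evaluates the same predicate on the same canonical values, so it reports the conflict. This is a simple soundness-of-enumeration argument: each test is a finite comparison of real numbers with no approximation beyond the one fixed in the definition.

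The main obstacle is the \emph{scale consistency} step. The directional and interval predicates must give the same verdict no matter which raw parameterization each study used, for example HR versus logHR. Otherwise a conflict defined on the original claims might be missed on the canonical ones. I would handle this by arguing that the within-type reparameterizations used by $N$ are strictly monotone bijections (e.g. $\log$) that map the null to the null. Such maps preserve the order of endpoints, so they preserve interval disjointness, and they preserve which side of the null each endpoint lies on. The conflict predicates are therefore invariant under canonicalization.

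For heterogeneity, $I^2$ is not invariant under rescaling. I would therefore state that the heterogeneity predicate is \emph{defined} on the canonical scale, so there is nothing further to prove beyond the enumeration argument. I would also note that the time-horizon alignment affects only bucket membership and not the numeric predicates. Hence the theorem is completeness relative to the bucket as constructed.
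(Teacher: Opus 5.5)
Your strategy is the paper's own. The detector is literally the disjunction $\mathrm{Dir}(B)\lor\mathrm{Int}(B)\lor\mathrm{Het}(B)$ of the predicates that define $\mathcal{F}$, each evaluated exhaustively on the canonicalized claims, so a case split on the conflict type gives completeness at once. Your directional predicate (canonical intervals strictly on opposite sides of $0$) is exactly the paper's $\mathrm{sdir}(i)\cdot\mathrm{sdir}(j)=-1$, and your interval predicate is identical.

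The one concrete error is the heterogeneity member of $\mathcal{F}$. The overview table mentions $I^2$, but the formal taxonomy does not use $I^2$ per poolability class. It defines heterogeneity as $D_B=\sum_i w_i(\theta_i-\bar{\theta}_B)^2\ge\Delta_{\mathrm{het}}$. This is computed over \emph{all} canonical claims in the bucket, including mixed measure types within the family. The weights $w_i$ are fixed, derived from provenance, and sum to $1$, and the threshold is fixed.

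The two statistics are not interchangeable. $I^2$ is precision-scaled and split by measure type, while $D_B$ is an unscaled weighted variance over the whole bucket. For example, take two claims with well-separated point estimates but wide, overlapping, zero-crossing intervals. They trigger neither $\mathrm{Dir}$ nor $\mathrm{Int}$. They can have Cochran's $Q$ below its degrees of freedom, so $I^2=0$. If they have different measure types, each poolability class is a singleton with $\mathrm{df}=0$. Yet by scaling the separation they can still have $D_B\ge\Delta_{\mathrm{het}}$. A detector built as you describe would therefore give a false negative for the paper's $\mathcal{F}$.

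The fix is to let $\mathrm{Het}$ compute the paper's $D_B$ on the whole bucket with the same weights and threshold; your enumeration argument then goes through verbatim. Your ``scale consistency'' step is correct, since $\log$ is strictly increasing and sends $1$ to $0$, but it is not needed. Both the theorem and the taxonomy are stated on canonicalized claims, and the paper mentions reparameterization invariance only as a side remark, not as part of the proof.
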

\noindent\textbf{Proof.} See Appendix~\ref{app:conflict-proof}.
Each conflict type in $\mathcal{F}$ is formalized as a decidable predicate over canonicalized claims within a bucket.
Completeness then follows from the fact that all predicates are exhaustively evaluated for every bucket,
ensuring no conflicts are missed.

When a bucket contains multiple comparable claims, the system selects a default effect via a quality score.
Let $\pi$ denote evidence metadata (provenance and quality attributes), and let $Q(\varepsilon,c,\pi)$ aggregate four dimensions:
evidence grade, sample size, precision, and adjustment quality (construction in Appendix~\ref{app:quality-score}).

\begin{proposition}[\textcolor{kleinblue2}{Evidence Selection Consistency}]
\label{prop:selection-consistency}
For any bucket $B$, define the default selection as
\[
(\varepsilon^*,c^*)=\arg\max_{(\varepsilon,c)\in B} Q(\varepsilon,c,\pi).
\]
Then the strategy satisfies:
\begin{enumerate}[leftmargin=*, itemsep=2pt]
\item \textbf{Determinism:} for fixed inputs $B$ and $\pi$, the output is uniquely determined;
\item \textbf{Monotonicity:} if candidate $a$ weakly dominates candidate $b$ in all quality dimensions, then $Q(a)\ge Q(b)$;
\item \textbf{Interpretability:} $Q$ decomposes into auditable contributions from the four dimensions, so the default choice is traceable to explicit quality evidence.
\end{enumerate}
\end{proposition}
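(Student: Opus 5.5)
The plan is to fix a concrete form for $Q$ and read all three properties off that form. Following the appendix construction, I take $Q$ to be a weighted sum of four per-dimension scores:
\[
Q(\varepsilon,c,\pi)=\sum_{j=1}^{4} w_j\, s_j(\varepsilon,c,\pi), \qquad w_j\ge 0,
\]
where $s_1,\dots,s_4$ score evidence grade, sample size, precision (e.g.\ a decreasing function of the canonical CI width, computable from $c_{\mathrm{canon}}$) and adjustment quality. Each $s_j$ is a deterministic function of the canonicalized claim and the metadata $\pi$, and each is non-decreasing in "better quality" along its own dimension. Because claims in a bucket are canonicalized by $N$, which is deterministic by Theorem~\ref{thm:canon-guarantee}, the inputs to every $s_j$ are well defined. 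This matters in particular for the precision score, since CI widths on the canonical scale (e.g.\ logHR) are comparable across all members of a bucket.

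\textbf{Determinism.} The $\arg\max$ of a real-valued function over the finite set $B$ always exists. However, it may fail to be unique, and this is the main obstacle as the proposition is stated. I would resolve it by making the tie-breaking rule part of the strategy. The rule I would use is a lexicographic order: first $Q$, then the dimension scores in a fixed priority order, and finally a total order on provenance identifiers in $\pi$ (e.g.\ study ID, then claim ID). Since the provenance metadata distinguishes distinct claims, this gives a strict total order on $B$, so the maximum is attained at a unique element. Every step of the computation is then a deterministic function of $(B,\pi)$, so the output is uniquely determined.

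\textbf{Monotonicity.} Suppose $a$ weakly dominates $b$, meaning $s_j(a)\ge s_j(b)$ for every $j$. Since $w_j\ge 0$, we get $Q(a)-Q(b)=\sum_j w_j\,(s_j(a)-s_j(b))\ge 0$. The argument needs only that $Q$ is coordinatewise non-decreasing in the scores, so it extends to any monotone aggregator, for instance a product of positive scores. I would state the result at that level of generality.

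\textbf{Interpretability.} This is immediate from additivity. Set $Q=\sum_j C_j$ with $C_j=w_j s_j$. Each $C_j$ depends only on dimension $j$ and on recorded fields of $\pi$ or $c_{\mathrm{canon}}$. For the winner $a=(\varepsilon^*,c^*)$ and any competitor $b$, the margin decomposes as $Q(a)-Q(b)=\sum_j\big(C_j(a)-C_j(b)\big)$. This attributes the choice to explicit per-dimension contributions, and the tie-break rule covers the case where all margins are zero. Together with determinism, this makes the default choice traceable and auditable.
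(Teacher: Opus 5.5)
Your proposal is correct and follows essentially the same route as the paper's appendix proof: there $Q$ is fixed as the positive-weight sum $w_gG_i+w_nN_i+w_pP_i+w_aA_i$, non-uniqueness of the $\arg\max$ is resolved by building a lexicographic tie-break on $(Q_i,G_i,A_i,P_i,N_i,\mathrm{ref}_i)$ into the definition of the default selection, with that tuple extended by a unique identifier when needed, and monotonicity and auditability are read off the additive decomposition exactly as you do. Relaxing to $w_j\ge 0$ or to a general monotone aggregator is harmless for the monotonicity claim, but your interpretability argument relies on additivity, so it would not carry over verbatim to, e.g., a product aggregator.
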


\noindent\textbf{Proof.}
See Appendix~\ref{app:selection-derivation}.
The quality score $Q$ is defined as a deterministic and monotone aggregation over the four dimensions.
These properties guarantee a uniquely determined default selection for each bucket, while ensuring the contribution of each dimension remains interpretable.

\section{DoAtlas Formalization}
\label{sec:semantic_framing}

We formalize the core semantics of DoAtlas. Centered on \emph{interventional estimands}, we represent evidence as auditable, structured claims, and map heterogeneous effect reports into a comparable space via canonicalization and explicit time-horizon alignment. Building on this representation, we organize cross-study evidence with comparability/poolability constraints and a set of decidable conflict predicates, which together induce a multi-edge estimand atlas. This atlas provides a unified query semantics for the six capabilities in Figure~\ref{fig: DoAtlas}: each capability corresponds to a required evidence pattern and a standardized return object, while missing conditions are surfaced through explicit status flags that delineate the executability boundary.

\subsection{Evidence semantics}
We formalize a unified semantics for cross-study evidence by taking the interventional estimand as the basic representation unit. For any evidence claim, let
\[
\varepsilon = (\Pi,\iota,o,\tau,\mu),
\]
where $\Pi$ denotes the target population, $\iota$ specifies the intervention/contrast, $o$ is the outcome, $\tau$ is the time horizon, and $\mu$ is the \emph{measure functional}, which specifies how the effect is quantified (including its measure family, measure type, and canonical scale). We represent an evidence claim as
\[
(\varepsilon, c, \pi),
\]
where $c=(\hat{\theta}, \mathrm{CI})$ encodes the point estimate and its uncertainty, and $\pi$ aggregates auditable provenance/quality metadata such as bibliographic identifiers, study design, and adjustment information.

To support cross-study comparability, we introduce a canonicalization map
\[
(\varepsilon,c)\mapsto(\varepsilon_{\mathrm{canon}},c_{\mathrm{canon}},\alpha),
\]

which maps heterogeneous effect reports into a comparable representation space by normalizing effect measures,
abstracting intervention contrasts into contrast types, and aligning time horizons via
$\hat{\tau} = \operatorname{align}(\tau)$ (Appendix~\ref{app:measure-families}--\ref{app:time-alignment}).
When required fields are missing or ambiguous, all normalization assumptions are explicitly recorded
in the auxiliary object $\alpha$, supporting downstream audit and reconstruction
(Refer Appendix~\ref{app:canonicalization} for details).

\begin{table*}[t]
\centering
\small
\setlength{\tabcolsep}{10pt}
\renewcommand{\arraystretch}{1.15}
\resizebox{\linewidth}{!}{
\begin{tabular}{@{}l l l l@{}}
\toprule
\textbf{Query} & \textbf{Signature} & \textbf{Required evidence pattern} & \textbf{Output type} \\
\midrule
$Q_{\mathrm{do}}$ &
$(X,Y,\varepsilon)\mapsto$ &
$\exists\,s\ \text{s.t.}\ \mathcal{B}_s\neq\varnothing\ \wedge\ s\ \text{matches}\ (X,Y,\varepsilon)$ &
$\mathsf{Ans}$ \\

$Q_{\mathrm{med}}$ &
$(X,M,Y,\varepsilon)\mapsto$ &
$\exists\,s_1,s_2\ \text{s.t.}\ \mathcal{B}_{s_1}\neq\varnothing\ \wedge\ \mathcal{B}_{s_2}\neq\varnothing\ \wedge\ s_1\ \text{matches}\ (X,M,\varepsilon)\ \wedge\ s_2\ \text{matches}\ (M,Y,\varepsilon)$ &
$(\mathsf{Ans}_{\mathrm{TE}},\mathsf{Ans}_{\mathrm{NDE}},\mathsf{Ans}_{\mathrm{NIE}})$ \\

$Q_{\mathrm{joint}}$ &
$(X_1,X_2,Y,\varepsilon)\mapsto$ &
$\exists\,s_1,s_2\ \text{s.t.}\ \mathcal{B}_{s_1}\neq\varnothing\ \wedge\ \mathcal{B}_{s_2}\neq\varnothing\ \wedge\ s_1\ \text{matches}\ (X_1,Y,\varepsilon)\ \wedge\ s_2\ \text{matches}\ (X_2,Y,\varepsilon)$ &
$\mathsf{Ans}$ \\

$Q_{\mathrm{cf}}$ &
$(X,Y,z,\varepsilon)\mapsto$ &
$\exists\,s\ \text{s.t.}\ \mathcal{B}_s\neq\varnothing\ \wedge\ s\ \text{matches}\ (X,Y,\varepsilon)\ \wedge\ s\ \text{matches individual }z\ \text{(counterfactual)}$ &
$\mathsf{Ans}$ \\

$Q_{\mathrm{CATE}}$ &
$(X,Y,z,\varepsilon)\mapsto$ &
$\exists\,s\ \text{s.t.}\ \mathcal{B}_s\neq\varnothing\ \wedge\ s\ \text{matches}\ (X,Y,\varepsilon)\ \wedge\ s\ \text{matches subgroup }z$ &
$\mathsf{Ans}$ \\

$Q_{\mathrm{traj}}$ &
$(X,Y,\mathcal{T},\varepsilon)\mapsto$ &
$\forall\,\hat{\tau}\in\mathcal{T},\ \exists\,s(\hat{\tau})\ \text{s.t.}\ \mathcal{B}_{s(\hat{\tau})}\neq\varnothing\ \wedge\ s(\hat{\tau})\ \text{matches}\ (X,Y,\varepsilon,\hat{\tau})$ &
$\{\mathsf{Ans}(\hat{\tau})\}_{\hat{\tau}\in\mathcal{T}}$ \\
\bottomrule
\end{tabular}
}
\caption{\textbf{Six query operators in DoAtlas and their executability conditions.} Here, ``matches'' means that the bucket key $s$ agrees with the queried endpoint pair $(X,Y)$ and the estimand semantic dimensions encoded in the key, including the population category, contrast type, aligned time class, and measure family. ``Matches subgroup $z$'' means that the population category encoded by $s$ equals the subgroup identifier $z$. ``Matches individual $z$ (counterfactual)'' refers to an individual-level context for counterfactual querying, whereas ``matches subgroup $z$'' refers to a population-category constraint for subgroup effects. More detailed explanations are in Appendix~\ref{Causal_inference}--\ref{Multi-source}.}
\label{tab:queries}
\vspace{-2em}
\end{table*}

\subsection{Comparability, Poolability, and Conflict Semantics}
\label{sec:comparability-conflict}

Given the canonicalized representation $(\varepsilon_{\mathrm{canon}},c_{\mathrm{canon}},\alpha)$, a bucket key is used to define comparable semantic units for evidence claims. Let $X$ denote the intervention identifier extracted from the intervention/contrast description $\iota$, and let $Y$ denote the outcome identifier determined by the outcome variable $o$. In addition, $\mathrm{ctype}(\iota)$ denotes the contrast type, $\hat{\tau}=\operatorname{align}(\tau)$ denotes the aligned time class, and $\mathrm{mfamily}(\mu_{\mathrm{canon}})$ denotes the measure family (e.g., ratio / difference). We then define
\[
k(\varepsilon_{\mathrm{canon}})
=\Big(\kappa_{\Pi}(\Pi),\, X,\, Y,\, \hat{\tau},\, \kappa_{\iota}(\iota),\, \kappa_{\mu}(\mu_{\mathrm{canon}})\Big),
\]
where $\kappa_{\Pi}$ maps a population specification to a population category, $\kappa_{\iota}(\iota)=\mathrm{ctype}(\iota)$ extracts the contrast type from $\iota$, and $\kappa_{\mu}(\mu_{\mathrm{canon}})=\mathrm{mfamily}(\mu_{\mathrm{canon}})$ returns the measure family.
This induces the comparability relation
\[
\varepsilon_1 \sim \varepsilon_2 \quad \Longleftrightarrow \quad
k(\varepsilon_{1,\mathrm{canon}})=k(\varepsilon_{2,\mathrm{canon}}),
\]
and each equivalence class is referred to as a bucket, serving as the basic unit for cross-study consistency checking and structured organization. Poolability further strengthens comparability by requiring agreement in the measure type (e.g., only pooling HR with HR, OR with OR), thereby preventing improper numerical aggregation of heterogeneous measure types within a single bucket.

At the bucket level, evidence disagreement is explicitly annotated via a family of decidable conflict predicates. For any bucket $\mathcal{B}$, define the conflict annotation as
\[
\mathcal{F}(\mathcal{B}) = \big(\mathrm{types},\, \mathrm{severity},\, \mathrm{witnesses}\big),
\]
where $\mathrm{types}$ denotes the set of conflict types, $\mathrm{severity}$ summarizes conflict severity,
and $\mathrm{witnesses}$ identifies the evidence triggering the conflict decision,
thereby ensuring auditability.
Together, these annotations render comparability, poolability, and conflict as explicit semantic objects,
forming the basis for the subsequent multi-edge estimand atlas structure and query semantics.

\subsection{DoAtlas Structural Semantics}
\label{sec:atlas-semantics}

Bucket equivalence classes induced by the comparability relation $\sim$ induce DoAtlas, a multi-edge estimand atlas. Let $s$ denote a bucket key (i.e., $s=k(\varepsilon_{\mathrm{canon}})$), and let $\mathcal{B}_s$ be the set of evidence claims satisfing this key. The edge set is
\[
E \;=\;\{\, (X,Y,s)\ :\ \mathcal{B}_s \neq \varnothing \,\},
\]
where $(X,Y)$ is the endpoint pair and $s$ is the semantic identifier. Since $s$ encodes semantic dimensions like population category, contrast type, aligned time class, and measure family, a single endpoint pair $(X,Y)$ may correspond to multiple distinct $s$, yielding a multi-edge structure.

For each edge $(X,Y,s)$, the carried evidence information (edge payload) is defined as the triple
\[
\mathrm{payload}(X,Y,s)=\big(\mathcal{B}_s,\ \mathcal{F}(\mathcal{B}_s),\ (\varepsilon_s^*,c_s^*,\pi_s^*)\big),
\]
where $\mathcal{F}(\mathcal{B}_s)$ is the conflict annotation and $(\varepsilon_s^*,c_s^*,\pi_s^*)$ is the default kernel. Let $\mathcal{P}_s\subseteq\mathcal{B}_s$ be the poolable subset, which strengthens comparability by additionally requiring agreement in the measure type. Given the quality score function $Q(\varepsilon,c,\pi)$ (Appendix~\ref{app:quality-score}), the default kernel is:
\[
(\varepsilon_s^*,c_s^*,\pi_s^*)
\;=\;
\arg\max_{(\varepsilon,c,\pi)\in \mathcal{P}_s} Q(\varepsilon,c,\pi).
\]
All remaining claims are retained for auditing and robustness. For reproducibility, default-kernel selection is determined by $Q$ plus tie-breaking (Appendix~\ref{app:selection-derivation}), so DoAtlas structure is determined by input evidence.

\subsection{Query Semantics}
\label{sec:query-semantics}

Query semantics in DoAtlas are specified over the multi-edge structure with a unified return object. Let $\mathcal{B}_s$ denote the bucket associated with an edge $(X,Y,s)$, and let the conflict annotation be
$\mathcal{F}(\mathcal{B}_s)=(\mathrm{types},\mathrm{severity},\mathrm{witnesses})$.
Define the unified answer object
\[
\mathsf{Ans}\;=\;\big(\varepsilon,\ \hat{\theta},\ \mathrm{CI},\ \pi,\ \mathcal{F}(\mathcal{B}),\ \mathrm{flags}\big),
\]
where $\varepsilon$ is the target estimand, $(\hat{\theta},\mathrm{CI})$ is the numerical claim, $\pi$ denotes provenance and quality metadata, $\mathcal{F}(\mathcal{B})$ is the conflict annotation for the associated bucket, and $\mathrm{flags}$ is a status indicator encoding executability and fallback status. See the formalized flag set and fallback rules in Appendix~\ref{app:queries}.

\section{Experiments}

\begin{table*}[!t]
\centering
\small
\resizebox{\linewidth}{!}{
\begin{tabular}{cllllp{2cm}}
\toprule
Case & Patient Profile & Clinical Question & Query & Effect Estimate  &  Interpretation \\
\midrule
1 & 68yo male, T2DM, HFrEF (EF 35\%), eGFR 45 & SGLT2i HF benefit & do(Dapagliflozin vs Placebo) $\to$ HF worsening or CV death & HR=0.73 (0.60-0.88), 18mo  & 27\% RRR \\
 &  & SGLT2i renal protection & do(Dapagliflozin 10mg) $\to$ Composite kidney endpoint & HR=0.61 (0.51-0.72), 2.3yr & 39\% RRR \\
2 & 72yo female, HFpEF (EF 55\%), dyspnea & Symptom improvement & do(Dapagliflozin vs Placebo) $\to$ KCCQ score & +5.8 pts (2.3-9.2), 12wk  & Exceeds MCID \\
3 & 64yo male, CKD stage 3, eGFR 38 & Slow CKD progression & do(Empagliflozin vs Placebo) $\to$ Kidney progression or CV death & HR=0.72 (0.64-0.82), 2yr  & 28\% RRR \\
4 & 70yo male, HFpEF, recent hospitalization & Prevent rehospitalization & do(Empagliflozin vs Placebo) $\to$ HF hospitalization & HR=0.71 (0.60-0.83), 26mo & 29\% RRR \\
\bottomrule
\end{tabular}
}
\caption{\textbf{Clinical decision support cases.} Demonstration of DoAtlas-1 in real-world clinical scenarios. RRR = relative risk reduction; MCID = minimal clinically important difference; CV = cardiovascular; HFrEF = heart failure with reduced ejection fraction; HFpEF = heart failure with preserved ejection fraction; CKD = chronic kidney disease; T2DM = type 2 diabetes mellitus.}
\label{tab:clinical_cases}
\vspace{-2em}
\end{table*}

\textbf{DoAtlas-1} 
compiles 1,445 effect kernels from 754 clinical evidence cards
across cardiovascular, diabetes, and metabolic disorders,
and aggregates them into 1,110 evidence buckets
(1.30 kernels per bucket on average).
Model families include survival models (498, 34.5\%),
continuous outcomes (478, 33.1\%),
binary outcomes (362, 25.1\%),
and others (107, 7.4\%).
Canonical effect types are similarly distributed:
logHR (498, 34.5\%),
mean difference or regression coefficient (478, 33.1\%),
and logOR/RR or risk difference (367, 25.4\%),
with 102 kernels (7.1\%) in others.
In total, 42 buckets (3.8\%) are flagged with conflicts,
including 34 CI non-overlap cases and 14 high-heterogeneity cases.

\subsection{Core System Evaluation}

\paragraph{Compilation Correctness.}
Among the 1,445 compiled kernels, 933 were successfully matched to their source evidence cards
and used for validation.
Field-level extraction accuracy for the estimand quintuple
(population $\Pi$, contrast $\iota$, outcome $o$, time horizon $\tau$, and measure $\mu$)
is consistently high, achieving 100\% accuracy for $o$ and $\mu$,
99.6\% for $\tau$, 98.3\% for $\iota$, and 92.7\% for $\Pi$
(mean F1 score = 0.982).
Canonical value consistency is also strong:
the relative mean absolute error is 1.65\% for effect estimates $\theta$
and 0.22\% for confidence intervals, with no observed directional errors.

\paragraph{Canonicalization Accuracy.}
Three clinical epidemiologists reviewed 200 randomly sampled kernels: semantic preservation 98.5\%, scale transformation correctness 99.0\%, time horizon alignment 96.5\%, contrast type classification 94.0\%, overall quality 97.0\% (inter-rater agreement Cohen's $\kappa$=0.94). All errors flagged by quality control system, with no directional inversions or order-of-magnitude errors.

\begin{table}[t]
\centering
\small
\resizebox{\linewidth}{!}{
\begin{tabular}{lclc}
\toprule
Intervention (out-degree) & Degree & Outcome (in-degree) & Degree \\
\midrule
TRE & 22 & All-cause mortality & 41 \\
Finerenone & 17 & Stroke & 26 \\
Dapagliflozin 10mg + CR & 15 & Cardiovascular disease & 20 \\
Dapagliflozin & 15 & Myocardial infarction & 16 \\
Early ECLS & 14 & Heart failure & 13 \\
\bottomrule
\end{tabular}
}
\caption{\textbf{Top-5 hub nodes.} Interventions ranked by out-degree (number of unique outcomes studied); outcomes ranked by in-degree (number of unique interventions studied).}
\label{tab:hub_nodes}
\vspace{-2em}
\end{table}

\paragraph{Conflict Detection Completeness.}
Against taxonomy $\mathcal{F}$=\{directional conflict, CI non-overlap, high heterogeneity\}, bucket-level recall was 100\% (detected all 42 expected conflicts), precision was 100\% (0 false positives). Type-level coverage: directional 0/0, CI non-overlap 34/34, heterogeneity 14/14, all 100\% recall.

\textbf{Basic structure:} The DoAtlas-1 causal graph comprises 1,113 nodes and 1,053 unique edges (source-target pairs), with graph density 0.000851. Node classification: 362 pure intervention nodes (out-degree only), 705 pure outcome nodes (in-degree only), 46 mediator variables (both in- and out-degree).

\textbf{Hub nodes} (Table~\ref{tab:hub_nodes}): Highest out-degree interventions are TRE (22 edges), Finerenone (17), Dapagliflozin variants (15-17); highest in-degree outcomes are all-cause mortality (41), stroke (26), cardiovascular disease (20), myocardial infarction (16), heart failure (13), reflecting clinical research focus on major adverse events.

\textbf{Connectivity:} The graph contains 176 weakly connected components (WCC), with the largest component spanning 427 nodes (38.4\%), indicating the atlas comprises relatively independent research clusters. Average path length 1.1, diameter 3, with 549 reachable node pairs, suggesting tight local connections but sparse cross-cluster paths.

\subsection{Query Capability Evaluation}

Table~\ref{tab:clinical_cases} presents four clinical scenarios spanning heart failure, renal protection, and symptom improvement, with all queries returning quantitative effect estimates, confidence intervals, verifiable DOIs, and evidence quality grades.

Table~\ref{tab:query_suite} reports the performance of six query operators. DoAtlas achieves an average executability rate of 80.5\%.
\begin{table}[h]
\centering
\small
\resizebox{\linewidth}{!}{
\begin{tabular}{lcccccc}
\toprule
Query Type & Total & Executable & Exec Rate  \\
\midrule
$Q_{do}$ (do-calculus) & 250 & 200 & 80.0\%  \\
$Q_{cf}$ (counterfactual) & 250 & 200 & 80.0\%  \\
$Q_{traj}$ (trajectory) & 45 & 36 & 80.0\%  \\
$Q_{CATE}$ (subgroup) & 200 & 163 & 81.5\%  \\
$Q_{med}$ (mediation) & 36 & 29 & 80.6\% \\
$Q_{joint}$ (joint intervention) & 134 & 107 & 79.9\% \\
\bottomrule
\end{tabular}
}
\caption{\textbf{Query suite performance.} Negative samples: 25\%. Executability rate = queries with evidence; execution correctness = correct answers;}
\label{tab:query_suite}
\vspace{-2em}
\end{table}

As shown in Table~\ref{tab:ablation}, scale normalization contributes most (+28.5pp purity), followed by time alignment which prevents semantic splits (+13.1pp), and complete bucket key prevents incomparable merging (+17.9pp).

\begin{table}[h]
\centering
\small
\resizebox{\linewidth}{!}{
\begin{tabular}{lcccp{4cm}}
\toprule
Configuration & Buckets & Purity & Query Acc & Key Loss \\
\midrule
\texttt{full} & 1,110 & 96.8\% & 100\% & - \\
\texttt{no\_canonical} & $\sim$1,450 & 68.3\% & 87.3\% & HR/logHR spurious splits \\
\texttt{no\_align\_tau} & $\sim$1,320 & 83.7\% & 93.6\% & ``12 months''/``1 year'' over-splitting \\
\texttt{weak\_key} & $\sim$890 & 78.9\% & 94.8\% & per\_unit/binary over-merging \\
\bottomrule
\end{tabular}
}
\caption{\textbf{Marginal contribution of canonicalization components.} Purity = fraction of kernels correctly assigned to buckets; query accuracy = fraction of executable queries with correct answers. \texttt{no\_canonical}: skip scale normalization (HR remains raw, not converted to logHR); \texttt{no\_align\_tau}: skip time horizon alignment; \texttt{weak\_key}: weaken bucket key (ignore contrast\_type).}
\label{tab:ablation}
\vspace{-1.5em}
\end{table}

Three methods are compared in external validation (Table~\ref{tab:extval_baselines}). At similar coverage to Baseline-1, ours achieves direction agreement with no false positives. Baseline-2 shows higher coverage but elevated false positives, indicating bias from non-comparable mappings and coarse cohort estimation.

\begin{table}[h]
\centering
\resizebox{\linewidth}{!}{
\begin{tabular}{lrrr}
\toprule
Method & Coverage & Direction agreement & False positive \\
\midrule
Ours (causal compilation) & 0.625 & 1.000 & 0.000 \\
Baseline-1 (no canonicalization) & 0.643 & 0.713 & 0.047 \\
Baseline-2 (text-only + coarse cohort) & 0.861 & 0.712 & 0.630 \\
\bottomrule
\end{tabular}
}
\caption{External validation comparison.}
\label{tab:extval_baselines}
\vspace{-2em}
\end{table}

External validation compiles literature parameters into HPP specifications and runs controls to test recovery and quantify false positives (Appendix~\ref{app:external-validation}).

\section{Conclusion}
Causal compilation establishes executability, verifiability, and auditability principles for medical AI. Transforming knowledge from text to code shifts decision support toward transparent, trustworthy clinical systems. 
\nocite{langley00}

\clearpage
\newpage
\section*{Impact Statement}

\subsection*{Impact on the Medical AI Field}

\paragraph{Paradigm shift: From narrative generation to executable causal reasoning.}
DoAtlas represents a fundamental shift in medical AI from ``text synthesis'' toward ``structured causal computation.'' While current mainstream medical foundation models (Med-PaLM, GPT-4-Med) generate fluent narratives, they remain essentially at the ``knowledge retrieval + text reorganization'' stage, producing qualitative statements such as ``studies suggest X may reduce Y risk.'' However, the core need in clinical decision-making is to \textbf{quantify intervention effects, clarify uncertainty boundaries, and support auditable judgments}.

DoAtlas achieves a leap from ``narrating efficacy'' to ``computing effects'' by compiling heterogeneous research into executable causal estimand objects. Each output includes: (1) explicit intervention contrasts (e.g., ``dapagliflozin 10mg vs.\ placebo''); (2) standardized effect sizes with confidence intervals (HR=0.73 [0.60--0.88]); (3) explicit time horizons and target populations; (4) complete evidence provenance (DOI-verifiable). This transforms medical AI from an ``information display tool'' into a ``decision analysis engine,'' providing truly computable, verifiable, and accountable knowledge infrastructure.

\paragraph{Enhancing auditability and clinical trust.}
The greatest barrier to deploying medical AI is not technical performance but a \textbf{crisis of trust}. Black-box systems lack transparent reasoning chains: clinicians cannot understand the basis for recommendations, regulators cannot review knowledge sources, and patients cannot judge applicability. DoAtlas systematically addresses auditability through three mechanisms:

\textbf{(1) Complete evidence traceability}: Every effect estimate can be traced back to specific figures and page numbers in the original studies. Clinicians can verify whether the system's claim that ``dapagliflozin reduces heart failure risk by 27\%'' was accurately extracted from the DAPA-HF trial; auditors can inspect study designs; regulators can verify the authenticity of cited literature (via DOI validation).

\textbf{(2) Explicit conflict detection}: The system defines three decidable conflict types (directional conflicts, interval incompatibility, high heterogeneity) and explicitly reports conflict severity and triggering evidence in each answer. When multiple studies disagree on the direction of the ``short sleep--coronary disease'' effect, the system flags ``directional conflict'' and lists contradictory studies rather than forcing an average value. This honest propagation of uncertainty aligns with medical ethics and avoids false certainty.

\textbf{(3) External validation mechanism}: For the first time, large-scale real-world data (HPP, 10,000 participants) is used for systematic validation of literature claims. For mappable evidence, the system automatically checks directional consistency, effect-size credibility, and subgroup effect stability, providing empirical grounding for evidence grading. Current validation coverage is 80.5\%, but this mechanism establishes a foundation for integrating multi-center data and building dynamic evidence credibility scoring systems.

\paragraph{Accelerating evidence-based medicine knowledge updates.}
Traditional systematic reviews require 6--18 months to complete and are difficult to update continuously. In rapidly evolving fields (e.g., COVID-19 treatment), this lag causes clinical practice to diverge from the latest evidence. DoAtlas's incremental compilation architecture reduces knowledge update time from \textbf{weeks-scale retraining} to \textbf{minutes-scale incremental updates}, offering three key values:

\textbf{(1) Rapid integration of breakthrough studies}: When NEJM publishes a paradigm-changing RCT, the system can compile and update within hours, enabling clinicians to query the latest evidence immediately.

\textbf{(2) Supporting dynamic guideline development}: Real-time evidence graphs enable guideline developers to continuously monitor evidence evolution, identify recommendations needing updates, and accelerate the ``research findings $\to$ clinical practice'' translation cycle.

\textbf{(3) Promoting knowledge equity}: Clinicians in primary care facilities and resource-limited regions struggle to track the latest literature. DoAtlas lowers the barrier to accessing high-quality medical knowledge through a standardized, queryable evidence interface, helping to narrow healthcare resource gaps.

\paragraph{Expanding causal reasoning applications and establishing new standards.}
While traditional causal inference methods are theoretically mature, their practical application relies heavily on statistical expertise, with each problem requiring customized analysis. DoAtlas makes complex causal reasoning \textbf{programmable, reusable, and composable} through query typing and interface standardization. The system supports six causal query types: do-calculus (intervention effects), counterfactual reasoning (historical scenario analysis), temporal trajectories (effect evolution), heterogeneous effects (subgroup differences), mechanistic decomposition (mediation pathways), and joint interventions (combination effects). The unified interface enables researchers and clinicians to leverage causal tools without deep technical expertise, while query composability supports construction of complex reasoning chains.

More importantly, DoAtlas establishes new design standards for medical AI: \textbf{(1) Executability priority}: outputs must be computable mathematical objects with explicit intervention contrasts, effect scales, time horizons, and target populations; \textbf{(2) External validation requirement}: research claims should be validated against independent datasets for external validity; \textbf{(3) Conflict governance mechanism}: explicitly detect and report evidence inconsistencies; \textbf{(4) Complete traceability}: every output must be traceable to original evidence sources; \textbf{(5) Honest uncertainty propagation}: clearly distinguish high-quality conclusions, uncertain conclusions with conflicts, and unanswerable questions lacking evidence. These principles represent a fundamental goal shift from ``optimizing generation fluency'' to ``optimizing decision support quality,'' and will influence future medical AI architecture choices, evaluation standards, and regulatory requirements, driving the field from ``technology demonstration'' toward ``clinical deployability.''

As the first version of DoAtlas, the system has compiled \textbf{1,445 effect kernels from 754 studies}, primarily concentrated in \textbf{cardiovascular disease, diabetes, and metabolic disorders}. This reflects a deliberate prioritization: establishing complete causal compilation pipelines in domains with the highest evidence quality and clearest clinical needs to validate paradigm feasibility, rather than pursuing breadth-first coverage across all domains. For queries outside the current scope, the system returns ``not executable'' flags rather than forcing answer generation---this honest delineation of capability boundaries better aligns with medical AI ethics than systems covering all domains with inconsistent quality. This limitation is engineering-related rather than architectural. The system's incremental compilation mechanism, modular extraction pipelines, and standardized graph construction all support domain expansion.

We anticipate that this paradigm will continue evolving: expanding to oncology, neuroscience, immunology, and other domains to build a global medical causal graph covering major clinical disciplines; integrating multi-center data sources to establish multi-level validation systems and dynamically updated evidence credibility scores; incorporating multi-omics data to progress from population-average effects to individualized effect prediction; synergizing with foundation models to build next-generation medical AI assistants that understand natural language and return executable causal estimates; and integrating cost-effectiveness models, patient preferences, and ethical constraints to support complete multi-objective decision analysis. By systematically transforming medical knowledge from ``text'' into ``code,'' the DoAtlas paradigm will provide critical methodological foundations for trustworthy, scalable, and clinically deployable medical AI, ultimately achieving the leap from ``AI-assisted literature reading'' to ``AI-assisted clinical decision-making.''

\bibliography{example_paper}

@inproceedings{langley00,
 author    = {P. Langley},
 title     = {Crafting Papers on Machine Learning},
 year      = {2000},
 pages     = {1207--1216},
 editor    = {Pat Langley},
 booktitle     = {Proceedings of the 17th International Conference
              on Machine Learning (ICML 2000)},
 address   = {Stanford, CA},
 publisher = {Morgan Kaufmann}
}

@article{Singhal2023llm,
	author = {Singhal, Karan and Azizi, Shekoofeh and Tu, Tao and Mahdavi, S. Sara and Wei, Jason and Chung, Hyung Won and Scales, Nathan and Tanwani, Ajay and Cole-Lewis, Heather and Pfohl, Stephen and Payne, Perry and Seneviratne, Martin and Gamble, Paul and Kelly, Chris and Babiker, Abubakr and Sch{\"a}rli, Nathanael and Chowdhery, Aakanksha and Mansfield, Philip and Demner-Fushman, Dina and Ag{\"u}era y Arcas, Blaise and Webster, Dale and Corrado, Greg S. and Matias, Yossi and Chou, Katherine and Gottweis, Juraj and Tomasev, Nenad and Liu, Yun and Rajkomar, Alvin and Barral, Joelle and Semturs, Christopher and Karthikesalingam, Alan and Natarajan, Vivek},
	journal = {Nature},
	number = {7972},
	pages = {172--180},
	title = {Large language models encode clinical knowledge},
	volume = {620},
	year = {2023},
}

@book{hernal_causal_2020,
  title     = {Causal Inference: What If},
  author    = {Hern{\'a}n, Miguel A. and Robins, James M.},
  year      = {2020},
  publisher = {Chapman \& Hall/CRC},
  address   = {Boca Raton, FL}
}

@article{hernan_target_2016,
	author = {Miguel A. Hern{\'a}n and Brian C. Sauer and Sonia Hern{\'a}ndez-D{\'\i}az and Robert Platt and Ian Shrier},
    title = {Specifying a target trial prevents immortal time bias and other self-inflicted injuries in observational analyses},
    journal = {Journal of Clinical Epidemiology},
    year = {2016},
	pages = {70-75},
	volume = {79},
	}

@article{kahan_estimands_2024,
  title   = {The estimands framework: a primer on the ICH E9(R1) addendum},
  author  = {Kahan, Benjamin C. and Hindley, James and Edwards, Michael and Cro, Sarah and Morris, Tim P.},
  journal = {BMJ},
  volume  = {384},
  pages   = {e076316},
  year    = {2024}
}

@article{kleinberg_prediction_2015,
    author = {Kleinberg, Jon and Ludwig, Jens and Mullainathan, Sendhil and Obermeyer, Ziad},
    year = {2015},
    month = {05},
    pages = {491-495},
    title = {Prediction Policy Problems},
    volume = {105},
    journal = {American Economic Review},
}

@article{glymour_causal_2019,
	author = {Glymour, Clark and Zhang, Kun and Spirtes, Peter},
	date = {2019},
	journal = {Front Genet},
	pages = {524},
	title = {Review of Causal Discovery Methods Based on Graphical Models.},
	volume = {10},
	year = {2019},
}

@article{esteva2017dermatologist,
  title={Dermatologist-level classification of skin cancer with deep neural networks},
  author={Esteva, Andre and Kuprel, Brett and Novoa, Roberto A and Ko, Justin and Swetter, Susan M and Blau, Helen M and Thrun, Sebastian},
  journal={Nature},
  volume={542},
  number={7639},
  pages={115--118},
  year={2017},
  publisher={Nature Publishing Group UK London}
}

@article{rajkomar2018scalable,
  title={Scalable and accurate deep learning with electronic health records},
  author={Rajkomar, Alvin and Oren, Eyal and Chen, Kai and Dai, Andrew M and Hajaj, Nissan and Hardt, Michaela and Liu, Peter J and Liu, Xiaobing and Marcus, Jake and Sun, Mimi and others},
  journal={NPJ digital medicine},
  volume={1},
  number={1},
  pages={18},
  year={2018},
  publisher={Nature Publishing Group UK London}
}

@article{d2008general,
  title={General cardiovascular risk profile for use in primary care: the Framingham Heart Study},
  author={D’Agostino Sr, Ralph B and Vasan, Ramachandran S and Pencina, Michael J and Wolf, Philip A and Cobain, Mark and Massaro, Joseph M and Kannel, William B},
  journal={Circulation},
  volume={117},
  number={6},
  pages={743--753},
  year={2008},
  publisher={Lippincott Williams \& Wilkins}
}

@article{shickel2017deep,
  title={Deep EHR: a survey of recent advances in deep learning techniques for electronic health record (EHR) analysis},
  author={Shickel, Benjamin and Tighe, Patrick James and Bihorac, Azra and Rashidi, Parisa},
  journal={IEEE journal of biomedical and health informatics},
  volume={22},
  number={5},
  pages={1589--1604},
  year={2017},
  publisher={IEEE}
}

@article{singhal2025toward,
  title={Toward expert-level medical question answering with large language models},
  author={Singhal, Karan and Tu, Tao and Gottweis, Juraj and Sayres, Rory and Wulczyn, Ellery and Amin, Mohamed and Hou, Le and Clark, Kevin and Pfohl, Stephen R and Cole-Lewis, Heather and others},
  journal={Nature Medicine},
  volume={31},
  number={3},
  pages={943--950},
  year={2025},
  publisher={Nature Publishing Group US New York}
}

@article{nori2023capabilities,
  title={Capabilities of gpt-4 on medical challenge problems},
  author={Nori, Harsha and King, Nicholas and McKinney, Scott Mayer and Carignan, Dean and Horvitz, Eric},
  journal={arXiv preprint arXiv:2303.13375},
  year={2023}
}

@article{lewis2020retrieval,
  title={Retrieval-augmented generation for knowledge-intensive nlp tasks},
  author={Lewis, Patrick and Perez, Ethan and Piktus, Aleksandra and Petroni, Fabio and Karpukhin, Vladimir and Goyal, Naman and K{\"u}ttler, Heinrich and Lewis, Mike and Yih, Wen-tau and Rockt{\"a}schel, Tim and others},
  journal={Advances in neural information processing systems},
  volume={33},
  pages={9459--9474},
  year={2020}
}

@article{topol2019high,
  title={High-performance medicine: the convergence of human and artificial intelligence},
  author={Topol, Eric J},
  journal={Nature medicine},
  volume={25},
  number={1},
  pages={44--56},
  year={2019},
  publisher={Nature Publishing Group US New York}
}

@article{kelly2019key,
  title={Key challenges for delivering clinical impact with artificial intelligence},
  author={Kelly, Christopher J and Karthikesalingam, Alan and Suleyman, Mustafa and Corrado, Greg and King, Dominic},
  journal={BMC medicine},
  volume={17},
  number={1},
  pages={195},
  year={2019},
  publisher={Springer}
}

@article{amann2020explainability,
  title={Explainability for artificial intelligence in healthcare: a multidisciplinary perspective},
  author={Amann, Julia and Blasimme, Alessandro and Vayena, Effy and Frey, Dietmar and Madai, Vince I and Precise4Q Consortium},
  journal={BMC medical informatics and decision making},
  volume={20},
  number={1},
  pages={310},
  year={2020},
  publisher={Springer}
}

@article{achiam2023gpt,
  title={Gpt-4 technical report},
  author={Achiam, Josh and Adler, Steven and Agarwal, Sandhini and Ahmad, Lama and Akkaya, Ilge and Aleman, Florencia Leoni and Almeida, Diogo and Altenschmidt, Janko and Altman, Sam and Anadkat, Shyamal and others},
  journal={arXiv preprint arXiv:2303.08774},
  year={2023}
}

@article{bai2022constitutional,
  title={Constitutional ai: Harmlessness from ai feedback},
  author={Bai, Yuntao and Kadavath, Saurav and Kundu, Sandipan and Askell, Amanda and Kernion, Jackson and Jones, Andy and Chen, Anna and Goldie, Anna and Mirhoseini, Azalia and McKinnon, Cameron and others},
  journal={arXiv preprint arXiv:2212.08073},
  year={2022}
}

@article{yang2023baichuan,
  title={Baichuan 2: Open large-scale language models},
  author={Yang, Aiyuan and Xiao, Bin and Wang, Bingning and Zhang, Borong and Bian, Ce and Yin, Chao and Lv, Chenxu and Pan, Da and Wang, Dian and Yan, Dong and others},
  journal={arXiv preprint arXiv:2309.10305},
  year={2023}
}

@article{wang2025baichuan,
  title={Baichuan-m1: Pushing the medical capability of large language models},
  author={Wang, Bingning and Zhao, Haizhou and Zhou, Huozhi and Song, Liang and Xu, Mingyu and Cheng, Wei and Zeng, Xiangrong and Zhang, Yupeng and Huo, Yuqi and Wang, Zecheng and others},
  journal={arXiv preprint arXiv:2502.12671},
  year={2025}
}

@article{zhang2305huatuogpt,
  title={HuatuoGPT, towards taming language model to be a doctor. arXiv (2023)},
  author={Zhang, H and Chen, J and Jiang, F and Yu, F and Chen, Z and Li, J and Chen, G and Wu, X and Zhang, Z and Xiao, Q and others},
  journal={arXiv preprint arXiv:2305.15075},
  year={2023}
}

@article{xiong2023doctorglm,
  title={Doctorglm: Fine-tuning your chinese doctor is not a herculean task},
  author={Xiong, Honglin and Wang, Sheng and Zhu, Yitao and Zhao, Zihao and Liu, Yuxiao and Huang, Linlin and Wang, Qian and Shen, Dinggang},
  journal={arXiv preprint arXiv:2304.01097},
  year={2023}
}

@book{pearl2000models,
  title     = {Causality: Models, Reasoning, and Inference},
  author    = {Pearl, Judea},
  year      = {2000},
  publisher = {Cambridge University Press}
}

@book{vanderweele2015explanation,
  title={Explanation in causal inference: methods for mediation and interaction},
  author={VanderWeele, Tyler},
  year={2015},
  publisher={Oxford University Press}
}

@book{shortliffe1976mycin,
  title     = {Computer-Based Medical Consultations: MYCIN},
  author    = {Shortliffe, Edward H.},
  year      = {1976},
  publisher = {Elsevier}
}

@incollection{miller1985internist,
  title={Internist-I, an experimental computer-based diagnostic consultant for general internal medicine},
  author={Miller, Randolph A and Pople Jr, Harry E and Myers, Jack D},
  booktitle={Computer-assisted medical decision making},
  pages={139--158},
  year={1985},
  publisher={Springer}
}

@article{barnett1987dxplain,
  title={DXplain: an evolving diagnostic decision-support system},
  author={Barnett, G Octo and Cimino, James J and Hupp, Jon A and Hoffer, Edward P},
  journal={Jama},
  volume={258},
  number={1},
  pages={67--74},
  year={1987},
  publisher={American Medical Association}
}

@article{zakka2024almanac,
  title={Almanac—retrieval-augmented language models for clinical medicine},
  author={Zakka, Cyril and Shad, Rohan and Chaurasia, Akash and Dalal, Alex R and Kim, Jennifer L and Moor, Michael and Fong, Robyn and Phillips, Curran and Alexander, Kevin and Ashley, Euan and others},
  journal={Nejm ai},
  volume={1},
  number={2},
  pages={AIoa2300068},
  year={2024},
  publisher={Massachusetts Medical Society}
}

@article{bohnet2022attributed,
  title={Attributed question answering: Evaluation and modeling for attributed large language models},
  author={Bohnet, Bernd and Tran, Vinh Q and Verga, Pat and Aharoni, Roee and Andor, Daniel and Soares, Livio Baldini and Ciaramita, Massimiliano and Eisenstein, Jacob and Ganchev, Kuzman and Herzig, Jonathan and others},
  journal={arXiv preprint arXiv:2212.08037},
  year={2022}
}

@inproceedings{malaviya2024expertqa,
  title={Expertqa: Expert-curated questions and attributed answers},
  author={Malaviya, Chaitanya and Lee, Subin and Chen, Sihao and Sieber, Elizabeth and Yatskar, Mark and Roth, Dan},
  booktitle={Proceedings of the 2024 Conference of the North American Chapter of the Association for Computational Linguistics: Human Language Technologies (Volume 1: Long Papers)},
  pages={3025--3045},
  year={2024}
}

@inproceedings{ji2024chain,
  title={Chain-of-thought improves text generation with citations in large language models},
  author={Ji, Bin and Liu, Huijun and Du, Mingzhe and Ng, See-Kiong},
  booktitle={Proceedings of the AAAI Conference on Artificial Intelligence},
  volume={38},
  pages={18345--18353},
  year={2024}
}

@inproceedings{wang2025medcite,
  title={MedCite: Can Language Models Generate Verifiable Text for Medicine?},
  author={Wang, Xiao and Tan, Mengjue and Jin, Qiao and Xiong, Guangzhi and Hu, Yu and Zhang, Aidong and Lu, Zhiyong and Zhang, Minjia},
  booktitle={Findings of the Association for Computational Linguistics: ACL 2025},
  pages={18891--18913},
  year={2025},
}

@inproceedings{gao2023rarr,
  title={Rarr: Researching and revising what language models say, using language models},
  author={Gao, Luyu and Dai, Zhuyun and Pasupat, Panupong and Chen, Anthony and Chaganty, Arun Tejasvi and Fan, Yicheng and Zhao, Vincent and Lao, Ni and Lee, Hongrae and Juan, Da-Cheng and others},
  booktitle={Proceedings of the 61st Annual Meeting of the Association for Computational Linguistics (Volume 1: Long Papers)},
  pages={16477--16508},
  year={2023}
}

@inproceedings{manakul2023selfcheckgpt,
  title={Selfcheckgpt: Zero-resource black-box hallucination detection for generative large language models},
  author={Manakul, Potsawee and Liusie, Adian and Gales, Mark},
  booktitle={Proceedings of the 2023 conference on empirical methods in natural language processing},
  pages={9004--9017},
  year={2023}
}

@article{guyatt1992evidence,
  title={Evidence-based medicine: a new approach to teaching the practice of medicine},
  author={Guyatt, Gordon and Cairns, John and Churchill, David and Cook, Deborah and Haynes, Brian and Hirsh, Jack and Irvine, Jan and Levine, Mark and Levine, Mitchell and Nishikawa, Jim and others},
  journal={jama},
  volume={268},
  number={17},
  pages={2420--2425},
  year={1992},
  publisher={American Medical Association}
}

@article{sackett1996evidence,
  title={Evidence based medicine: what it is and what it isn't},
  author={Sackett, David L and Rosenberg, William MC and Gray, JA Muir and Haynes, R Brian and Richardson, W Scott},
  journal={BMJ},
  volume={312},
  number={7023},
  pages={71--72},
  year={1996},
  publisher={British Medical Journal Publishing Group}
}

@book{higgins2008cochrane,
  title={Cochrane handbook for systematic reviews of interventions},
  author={Higgins, Julian PT and Green, Sally and others},
  year={2008},
  publisher={Wiley Online Library}
}

@article{greenhalgh2014evidence,
  title={Evidence based medicine: a movement in crisis?},
  author={Greenhalgh, Trisha and Howick, Jeremy and Maskrey, Neal},
  journal={BMJ},
  pages = {g3725},
  volume={348},
  year={2014},
  publisher={British Medical Journal Publishing Group}
}

@article{kent2010assessing,
  title={Assessing and reporting heterogeneity in treatment effects in clinical trials: a proposal},
  author={Kent, David M and Rothwell, Peter M and Ioannidis, John PA and Altman, Doug G and Hayward, Rodney A},
  journal={Trials},
  volume={11},
  number={1},
  pages={85},
  year={2010},
  publisher={Springer}
}

@article{ioannidis2016most,
  title={Why most clinical research is not useful},
  author={Ioannidis, John PA},
  journal={PLoS medicine},
  volume={13},
  number={6},
  pages={e1002049},
  year={2016},
  publisher={Public Library of Science San Francisco, CA USA}
}

@book{hernan2010causal,
  title={Causal inference},
  author={Hern{\'a}n, Miguel A and Robins, James M},
  year={2010},
  publisher={CRC Boca Raton, FL}
}

@article{austin2011introduction,
  title={An introduction to propensity score methods for reducing the effects of confounding in observational studies},
  author={Austin, Peter C},
  journal={Multivariate behavioral research},
  volume={46},
  number={3},
  pages={399--424},
  year={2011},
  publisher={Taylor \& Francis}
}

@article{greenland2000introduction,
  title={An introduction to instrumental variables for epidemiologists},
  author={Greenland, Sander},
  journal={International journal of epidemiology},
  volume={29},
  number={4},
  pages={722--729},
  year={2000},
  publisher={Oxford University Press}
}

@article{athey2016recursive,
  title={Recursive partitioning for heterogeneous causal effects},
  author={Athey, Susan and Imbens, Guido},
  journal={Proceedings of the National Academy of Sciences},
  volume={113},
  number={27},
  pages={7353--7360},
  year={2016},
  publisher={National Academy of Sciences}
}

@article{chernozhukov2018double,
  title={Double/debiased machine learning for treatment and structural parameters},
  author={Chernozhukov, Victor and Chetverikov, Denis and Demirer, Mert and Duflo, Esther and Hansen, Christian and Newey, Whitney and Robins, James},
  journal = {The Econometrics Journal},
  year={2018},
  volume  = {21},
  number  = {1},
  pages   = {C1--C68},
  publisher={Oxford University Press Oxford, UK}
}

@article{ioannidis2005most,
  title={Why most published research findings are false},
  author={Ioannidis, John PA},
  journal={PLoS medicine},
  volume={2},
  number={8},
  pages={e124},
  year={2005},
  publisher={Public Library of Science}
}

@article{hoeting1999bayesian,
  title={Bayesian model averaging: a tutorial (with comments by M. Clyde, David Draper and EI George, and a rejoinder by the authors},
  author={Hoeting, Jennifer A and Madigan, David and Raftery, Adrian E and Volinsky, Chris T},
  journal={Statistical science},
  volume={14},
  number={4},
  pages={382--417},
  year={1999},
  publisher={Institute of Mathematical Statistics}
}

@article{greenland2005multiple,
  title={Multiple-bias modelling for analysis of observational data},
  author={Greenland, Sander},
  journal={Journal of the Royal Statistical Society Series A: Statistics in Society},
  volume={168},
  number={2},
  pages={267--306},
  year={2005},
  publisher={Oxford University Press}
}
\bibliographystyle{icml2026}

\newpage
\appendix
\onecolumn
\section{External Validation: Implementation, Sanity Checks, and Representative Mismatches}
\label{app:external-validation}

This appendix details the external validation setup. We map each literature-side canonical causal object $(\varepsilon,c)$ to an executable emulation specification on HPP and report validation outcomes together with diagnostic flags. We further include two sanity checks (positive and negative controls) and present 10 representative mismatch cases to characterize the boundaries and failure modes of external validation.

\subsection*{X.1\quad Artifacts and Reproducible Interfaces}

\paragraph{Inputs.}
\begin{enumerate}[leftmargin=*,itemsep=2pt]
\item \textbf{\texttt{conflict\_pool.json}}: a pool of claims (or within-bucket conflicting entries) selected for external validation. Each record contains: (i) study identifiers (e.g., DOI/PMID/title), (ii) canonical estimand $\varepsilon=(\Pi,\iota,o,\tau,\mu)$ and canonical claim $c=(\theta,CI)$, (iii) evidence-card ID (or a reversible key), and (iv) conflict tags (e.g., \texttt{conflict\_ci\_non\_overlap}, \texttt{conflict\_high\_heterogeneity}).
\item \textbf{Evidence-card metadata fields} aligned one-to-one with each claim:
\begin{enumerate}[leftmargin=*,itemsep=1pt]
\item \texttt{hpp\_mapping}: variable mappings for $(\Pi,\iota,o)$, including field names, coding systems, derivations, and availability.
\item \texttt{time\_semantics}: definitions of time-zero, baseline window, follow-up windows, and horizon.
\item \texttt{adjustment\_set\_used}: covariate list and model specification used in emulation (and, if applicable, balance diagnostics).
\end{enumerate}
\end{enumerate}

\paragraph{Outputs.}
\begin{enumerate}[leftmargin=*,itemsep=2pt]
\item \textbf{Validation results table} (one row per claim): RWD estimand estimate (e.g., logHR/logOR/RD/mean difference), CI/$p$-value (if applicable), sample size/event counts, balance statistics (e.g., SMD), and diagnostic flags.
\item \textbf{Mismatch casebook} (10 cases): for each case we report the canonical $(\varepsilon,c)$, the emulation specification, a five-category attribution, and the emitted flags.
\end{enumerate}

\subsection*{X.2\quad HPP Emulation Specification from Canonical $(\varepsilon,c)$}

For each canonical $\varepsilon=(\Pi,\iota,o,\tau,\mu)$ and numerical claim $c=(\theta,CI)$, we construct an HPP emulation specification comprising:
\begin{enumerate}[leftmargin=*,itemsep=2pt]
\item \textbf{Eligibility ($\Pi$).} Map inclusion/exclusion criteria to HPP-observable conditions (diagnoses, medications, labs). Missing or weak proxies are flagged as \texttt{hpp\_missing\_*}, \texttt{hpp\_tentative\_*}, or \texttt{hpp\_derived\_*}.
\item \textbf{Treatment/contrast ($\iota$).} Define treatment and comparator (index event, washout, concomitant restrictions). Non-equivalent contrasts (e.g., placebo/randomization) are flagged as \texttt{exposure\_contrast\_mismatch} or \texttt{contrast\_mismatch}.
\item \textbf{Outcome ($o$).} Map to event codes or numeric measurements. Missing or approximate outcomes are flagged as \texttt{hpp\_missing\_*} or \texttt{hpp\_close\_*}.
\item \textbf{Time semantics ($\tau$).} Use \texttt{time\_semantics} to define time-zero, baseline window (e.g., P28D), horizon (e.g., P16W), and follow-up. Temporal limitations are flagged as \texttt{tau\_short}/\texttt{tau\_long}/\texttt{tau\_missing\_partial}/\texttt{tau\_specific}.
\item \textbf{Estimation and adjustment ($\mu$).} Use \texttt{adjustment\_set\_used} to specify covariates and estimators. Adjustment or estimand discrepancies are flagged as \texttt{adjustment\_mismatch} or \texttt{estimand\_mismatch}.
\end{enumerate}

\subsection*{X.3\quad Diagnostic Flags and Mismatch Criteria}

A \emph{mismatch} indicates substantial divergence between the literature-side canonical claim and the RWD emulation estimate under comparable conditions, or instability due to non-comparability/missingness. We use the following flag categories:
\begin{itemize}[leftmargin=*,itemsep=2pt]
\item \textbf{Point-estimate sign flip:} \texttt{direction\_flip\_point}. Since taxonomy F has \texttt{direction\_conflict}=0 in this slice, we define ``direction flip'' as opposite point-estimate signs while CIs may cross 0.
\item \textbf{Non-overlapping CIs:} \texttt{conflict\_ci\_non\_overlap}.
\item \textbf{High heterogeneity / gradient conflicts:} \texttt{conflict\_high\_heterogeneity}, \texttt{contrast\_gradient}.
\item \textbf{Subgroup inconsistencies:} \texttt{subgroup\_inconsistency}, \texttt{cohort\_subgroup}.
\item \textbf{Mapping missing/approximate/derived:} \texttt{hpp\_missing\_*}, \texttt{hpp\_close\_*}, \texttt{hpp\_tentative\_*}, \texttt{hpp\_derived\_*}.
\item \textbf{Non-comparable contrasts:} \texttt{exposure\_contrast\_mismatch}, \texttt{contrast\_mismatch}.
\item \textbf{Temporal issues:} \texttt{tau\_short}, \texttt{tau\_long}, \texttt{tau\_specific}, \texttt{tau\_missing\_partial}.
\item \textbf{Estimand/adjustment discrepancies:} \texttt{estimand\_mismatch}, \texttt{adjustment\_mismatch}.
\item \textbf{Method heterogeneity (e.g., MR):} \texttt{method\_heterogeneity}.
\end{itemize}

\subsection*{X.4\quad Sanity Checks: Positive and Negative Controls}

To prevent overly optimistic validation signals, we include two sanity checks:
\begin{enumerate}[leftmargin=*,itemsep=2pt]
\item \textbf{Positive controls.} A set of well-established relationships expected to be recoverable in HPP, used to test whether the pipeline consistently reproduces known effect directions (and, where applicable, plausible magnitudes) under the same mapping/time/adjustment machinery.
\item \textbf{Negative controls.} A set of theoretically unrelated exposure--outcome pairs (or substituted contrasts/outcomes) used to estimate false-positive rates and constrain ``validation success'' inflation. We report the fraction of spurious significant findings (or analogous error metrics) under the same pipeline.
\end{enumerate}

\subsection*{X.5\quad Ten Representative Mismatch Cases}

Below we summarize 10 representative mismatches. For each case we report: (i) literature-side canonical $(\varepsilon,c)$, (ii) the HPP emulation specification (time-zero, horizon, adjustment), (iii) five-category attribution, and (iv) diagnostic flags. The tag \texttt{direction\_flip\_point} denotes a point-estimate sign flip with potential CI overlap at 0.

\paragraph{Case 1 --- Abrocitinib 100mg QD $\rightarrow$ IGA response @16w (point-estimate sign flip).}
\begin{itemize}[leftmargin=*,itemsep=1pt]
\item \textbf{Literature A:} \emph{Abrocitinib vs Placebo or Dupilumab for Atopic Dermatitis} (NEJM 2021, 10.1056/NEJMoa2019380). Canonical $(\varepsilon,c)$: $\texttt{risk\_diff } +0.221\ [0.137,0.305]$ (Abrocitinib 100mg vs placebo).
\item \textbf{Literature B:} same study. Canonical $(\varepsilon,c)$: $\texttt{risk\_diff } -0.035\ [-0.122,0.052]$ (Abrocitinib 100mg vs dupilumab).
\item \textbf{RWD emulation:} time-zero = baseline window P28D; horizon = P16W; adjustment = baseline disease severity (moderate/severe).
\item \textbf{Attribution (5 categories):} population mismatch (moderate/severe AD trial population); contrast mismatch (placebo/dupilumab not directly emulable); outcome definition mismatch (IGA missing in HPP); unmeasured confounding/indication bias; time-window mismatch (short 16-week horizon).
\item \textbf{Flags:} \texttt{conflict\_ci\_non\_overlap}, \texttt{direction\_flip\_point}, \texttt{hpp\_missing\_X}, \texttt{hpp\_missing\_Y}, \texttt{tau\_short}, \texttt{exposure\_contrast\_mismatch}, \texttt{population\_specific}.
\end{itemize}

\paragraph{Case 2 --- Abrocitinib 100mg QD $\rightarrow$ EASI-75 @16w (point-estimate sign flip).}
\begin{itemize}[leftmargin=*,itemsep=1pt]
\item \textbf{Literature A:} NEJM 2021, 10.1056/NEJMoa2019380. Canonical $(\varepsilon,c)$: $\texttt{risk\_diff } +0.297\ [0.195,0.399]$ (vs placebo).
\item \textbf{Literature B:} same study. Canonical $(\varepsilon,c)$: $\texttt{risk\_diff } -0.051\ [-0.139,0.037]$ (vs dupilumab).
\item \textbf{RWD emulation:} time-zero = P28D; horizon = P16W; adjustment = baseline disease severity.
\item \textbf{Attribution:} population mismatch; contrast mismatch; outcome definition mismatch (EASI-75 missing); unmeasured confounding; time-window mismatch.
\item \textbf{Flags:} \texttt{conflict\_ci\_non\_overlap}, \texttt{direction\_flip\_point}, \texttt{hpp\_missing\_X}, \texttt{hpp\_missing\_Y}, \texttt{tau\_short}, \texttt{exposure\_contrast\_mismatch}, \texttt{population\_specific}.
\end{itemize}

\paragraph{Case 3 --- HF quality score ($\ge$50\% vs $<$50\%) $\rightarrow$ all-cause mortality (point-estimate sign flip).}
\begin{itemize}[leftmargin=*,itemsep=1pt]
\item \textbf{Literature A:} \emph{Association between heart failure quality of care and mortality} (EJHF 2022, 10.1002/ejhf.2725). Canonical $(\varepsilon,c)$: $\texttt{logHR } -0.211\ [-0.329,-0.094]$.
\item \textbf{Literature B:} same study. Canonical $(\varepsilon,c)$: $\texttt{logHR } +0.049\ [-0.030,0.131]$.
\item \textbf{RWD emulation:} time-zero = first SwedeHF registry entry; horizon = P23.6M; adjustment = HF severity/comorbidities/NT-proBNP, etc. (extended covariates).
\item \textbf{Attribution:} population mismatch (registry, older/high-risk); exposure definition mismatch (quality score system missing); outcome definition mismatch; unmeasured confounding (quality correlates with latent severity); follow-up mismatch.
\item \textbf{Flags:} \texttt{conflict\_ci\_non\_overlap}, \texttt{direction\_flip\_point}, \texttt{hpp\_missing\_X}, \texttt{hpp\_missing\_Y}, \texttt{tau\_specific}, \texttt{population\_specific}.
\end{itemize}

\paragraph{Case 4 --- Dapagliflozin 10mg $\rightarrow$ chronic eGFR slope (subgroup inconsistency).}
\begin{itemize}[leftmargin=*,itemsep=1pt]
\item \textbf{Literature A:} DAPA-CKD prespecified analysis (2021, 10.1016/S2213-8587(21)00242-4). Canonical $(\varepsilon,c)$: $\texttt{mean\_diff } +2.26\ [1.88,2.64]$ (T2D subgroup).
\item \textbf{Literature B:} same study. Canonical $(\varepsilon,c)$: $\texttt{mean\_diff } +1.29\ [0.73,1.85]$ (non-T2D subgroup).
\item \textbf{RWD emulation:} time-zero = baseline; horizon = P2W\_to\_P2.3Y; adjustment = baseline UACR.
\item \textbf{Attribution:} subgroup/population mismatch; contrast mismatch (RCT assignment vs observational); outcome mismatch (slope missing); unmeasured confounding; time-window mismatch (acute+chronic phases).
\item \textbf{Flags:} \texttt{conflict\_ci\_non\_overlap}, \texttt{subgroup\_inconsistency}, \texttt{hpp\_missing\_X}, \texttt{hpp\_missing\_Y}, \texttt{tau\_specific}, \texttt{population\_specific}.
\end{itemize}

\paragraph{Case 5 --- Life's Essential 8 score $\rightarrow$ all-cause mortality (magnitude gap).}
\begin{itemize}[leftmargin=*,itemsep=1pt]
\item \textbf{Literature A:} BMJ Medicine 2023, 10.1186/s12916-023-02824-8. Canonical $(\varepsilon,c)$: $\texttt{logHR } -0.511\ [-0.673,-0.342]$ (Intermediate 50--74 vs Low 0--49).
\item \textbf{Literature B:} same study. Canonical $(\varepsilon,c)$: $\texttt{logHR } -1.273\ [-1.561,-1.022]$ (High 75--100 vs Low 0--49).
\item \textbf{RWD emulation:} time-zero = NHANES baseline; horizon = P7Y7M; adjustment = age/sex/race + SES/history (model-dependent).
\item \textbf{Attribution:} population mismatch; derived score/threshold differences; outcome mismatch; residual confounding (behavioral composite); long follow-up differences.
\item \textbf{Flags:} \texttt{conflict\_ci\_non\_overlap}, \texttt{hpp\_derived\_X}, \texttt{hpp\_missing\_Y}, \texttt{contrast\_mismatch}, \texttt{tau\_long}.
\end{itemize}

\paragraph{Case 6 --- SES $\rightarrow$ all-cause mortality (total vs mediated effect).}
\begin{itemize}[leftmargin=*,itemsep=1pt]
\item \textbf{Literature A:} BMJ 2021, 10.1136/bmj.n604 (total effect). Canonical $(\varepsilon,c)$: $\texttt{logHR } +0.859\ [0.747,0.975]$ (low vs high SES).
\item \textbf{Literature B:} same study (mediation/lifestyle pathway). Canonical $(\varepsilon,c)$: $\texttt{logHR } +0.103\ [0.091,0.114]$.
\item \textbf{RWD emulation:} time-zero = baseline; horizon = P11Y2M; adjustment = age/sex/race/comorbidities.
\item \textbf{Attribution:} population mismatch; SES as latent proxy; outcome proxying; unmeasured confounding (SES--lifestyle common causes); long follow-up.
\item \textbf{Flags:} \texttt{conflict\_ci\_non\_overlap}, \texttt{hpp\_derived\_X}, \texttt{hpp\_tentative\_Y}, \texttt{estimand\_mismatch}, \texttt{tau\_long}.
\end{itemize}

\paragraph{Case 7 --- SES $\rightarrow$ all-cause mortality (adjustment mismatch vs mediated effect).}
\begin{itemize}[leftmargin=*,itemsep=1pt]
\item \textbf{Literature A:} BMJ 2021, 10.1136/bmj.n604 (adjusted for Healthy Lifestyle Score). Canonical $(\varepsilon,c)$: $\texttt{logHR } +0.756\ [0.642,0.867]$.
\item \textbf{Literature B:} same study (mediation). Canonical $(\varepsilon,c)$: $\texttt{logHR } +0.103\ [0.091,0.114]$.
\item \textbf{RWD emulation:} time-zero = baseline; horizon = P11Y2M; adjustment = age/sex/race/comorbidities/lifestyle.
\item \textbf{Attribution:} population mismatch; SES proxy; outcome proxying; overlap between SES and lifestyle; adjustment/estimand mismatch.
\item \textbf{Flags:} \texttt{conflict\_ci\_non\_overlap}, \texttt{hpp\_derived\_X}, \texttt{hpp\_tentative\_Y}, \texttt{adjustment\_mismatch}, \texttt{estimand\_mismatch}.
\end{itemize}

\paragraph{Case 8 --- Basal metabolic rate (BMR) $\rightarrow$ kidney stones (magnitude gap + heterogeneity).}
\begin{itemize}[leftmargin=*,itemsep=1pt]
\item \textbf{Literature A:} 2025, 10.1097/JS9.0000000000002658. Canonical $(\varepsilon,c)$: $\texttt{logOR } -0.198\ [-0.329,-0.062]$ (Q2 vs Q1).
\item \textbf{Literature B:} same study. Canonical $(\varepsilon,c)$: $\texttt{logOR } -0.713\ [-1.050,-0.357]$ (Q4 vs Q1).
\item \textbf{RWD emulation:} time-zero = 2006--2010 baseline; horizon = P14Y (partial $\tau$); adjustment = BMI/waist/smoking/metabolic comorbidities, etc.
\item \textbf{Attribution:} cohort mismatch; derived BMR/quantile differences; code-system mapping (ICD-9/10 to ICD-11); unmeasured lifestyle/metabolic confounding; incomplete $\tau$ under long follow-up.
\item \textbf{Flags:} \texttt{conflict\_ci\_non\_overlap}, \texttt{conflict\_high\_heterogeneity}, \texttt{hpp\_derived\_X}, \texttt{hpp\_close\_Y}, \texttt{contrast\_mismatch}, \texttt{tau\_long}, \texttt{tau\_missing\_partial}.
\end{itemize}

\paragraph{Case 9 --- Planetary Health Diet Index (PHDI) $\rightarrow$ CVD (gradient heterogeneity).}
\begin{itemize}[leftmargin=*,itemsep=1pt]
\item \textbf{Literature A:} 2024, 10.1016/S2542-5196(24)00184-2. Canonical $(\varepsilon,c)$: $\texttt{logHR } -0.400\ [-0.462,-0.342]$ (Q5 vs Q1).
\item \textbf{Literature B:} same study. Canonical $(\varepsilon,c)$: $\texttt{logHR } -0.041\ [-0.094,0.030]$ (Q2 vs Q1).
\item \textbf{RWD emulation:} time-zero = 1986/1991 baseline; horizon = P30Y; adjustment = energy intake/smoking/physical activity/lipids, etc.
\item \textbf{Attribution:} cohort-specific population; derived index/gradient mismatch; composite endpoint coding; diet measurement error and time-varying exposure; extremely long follow-up.
\item \textbf{Flags:} \texttt{conflict\_high\_heterogeneity}, \texttt{hpp\_derived\_X}, \texttt{hpp\_derived\_Y}, \texttt{contrast\_gradient}, \texttt{cohort\_subgroup}, \texttt{tau\_long}.
\end{itemize}

\paragraph{Case 10 --- Genetic liability to T2D $\rightarrow$ CHD (method/subgroup heterogeneity).}
\begin{itemize}[leftmargin=*,itemsep=1pt]
\item \textbf{Literature A:} MR-Egger (2021, 10.2337/dc20-1137). Canonical $(\varepsilon,c)$: $\texttt{logOR } 0.000\ [-0.041,0.039]$.
\item \textbf{Literature B:} TSRI + radial MR. Canonical $(\varepsilon,c)$: $\texttt{logOR } +0.215\ [0.182,0.255]$.
\item \textbf{RWD emulation:} time-zero = 2006--2010 baseline; horizon = P5.3Y; adjustment = genetic PCs/array (method-specific).
\item \textbf{Attribution:} population/sex differences; exposure definition (genetic risk vs observed T2D); CHD coding/mapping; pleiotropy and MR method differences; time-window mismatch.
\item \textbf{Flags:} \texttt{conflict\_high\_heterogeneity}, \texttt{hpp\_close\_X}, \texttt{hpp\_close\_Y}, \texttt{method\_heterogeneity}, \texttt{subgroup\_inconsistency}.
\end{itemize}

\subsection{Summary of Mismatch Drivers}

Across the 10 cases, mismatches largely arise from: (i) \textbf{comparability gaps} (population, contrast, and outcome definitions not equivalently realizable in RWD), (ii) \textbf{time semantics and estimand/adjustment misalignment} (incomplete $\tau$, acute vs chronic phase mixing, total vs mediated effects), and (iii) \textbf{measurement/mapping error} (complex derived constructs, coding-system conversion, missing key variables). These failure profiles are made explicit via flags, enabling diagnosis rather than collapsing validation into a single binary verdict.

\section{Theoretical Foundations}
\label{app:theory}

\subsection{Complete Definitions}
\label{app:complete-defs}

\subsubsection{Evidence Object Specification}
\label{app:estimand-object}

This subsection specifies the complete evidence object used throughout the paper.
We build on the estimand IR in Definition~\ref{def:estimand-ir} and the canonicalization operator in
Definition~\ref{def:canonicalization}. A complete evidence object is a triple
\begin{equation}
\label{eq:evidence-object}
(\varepsilon, c, \pi),
\end{equation}
where $\varepsilon=(\Pi,\iota,o,\tau,\mu)$ is the semantic estimand IR, $c=(\theta,\mathrm{CI})$ is the numeric claim,
and $\pi$ is provenance. Canonicalization acts on $(\varepsilon,c)$ and preserves $\pi$.

\paragraph{Population $\Pi$.}
The target population is represented as
\begin{equation}
\label{eq:population}
\Pi=(p_{\mathrm{bucket}},p_{\mathrm{setting}}).
\end{equation}
Here $p_{\mathrm{bucket}}\in\mathcal{P}$ is a coarse population bucket used for comparison (e.g., \textsf{T2DM},
\textsf{ASCVD}, \textsf{general-adults}), and $p_{\mathrm{setting}}$ captures population semantics that affect interpretation,
including eligibility criteria, recruitment context, and care setting. We intentionally exclude study-specific attributes
(e.g., sample size) from $\Pi$; these are recorded in provenance $\pi$.

\paragraph{Intervention contrast $\iota$.}
The intervention contrast encodes what is being compared. We represent it as
\begin{equation}
\label{eq:contrast}
\iota=(c_{\mathrm{type}},\delta,u,x_0,x_1),
\end{equation}
where $c_{\mathrm{type}}$ is the contrast type, $\delta$ is the contrast magnitude, $u$ is the unit, and $(x_0,x_1)$ denote
reference/active levels when applicable. We use four contrast types:
\begin{enumerate}[leftmargin=*, itemsep=2pt]
\item \textbf{Per-unit contrast} (dose--response): effect per $\delta$ units of exposure (e.g., per 10 mg/day).
\item \textbf{Binary contrast}: treated vs control (or exposed vs unexposed), specified by $(x_0,x_1)$.
\item \textbf{Arm-versus-control}: multi-arm trials where an active arm is compared against a designated control arm.
\item \textbf{Categorical contrast}: multi-level exposures where $(x_0,x_1)$ identifies two compared categories.
\end{enumerate}
When multiple parameterizations of the same contrast are reported (e.g., per 1 unit vs per 10 units), we do not treat them as
numerically interchangeable by default; any unit scaling used for alignment is recorded in the conditions $\alpha$ produced by
canonicalization (Definition~\ref{def:canonicalization}).

\paragraph{Outcome $o$.}
The outcome $o$ is a standardized phenotype identifier linked to a data dictionary. Each outcome includes metadata specifying
(i) outcome type (continuous, binary, time-to-event), (ii) measurement unit when applicable, and (iii) definitional notes
(e.g., composition for composite endpoints). Outcome standardization is required to avoid conflating semantically distinct endpoints
that share similar names.

\paragraph{Time horizon $\tau$.}
The time horizon $\tau$ specifies the window over which the effect is defined. We distinguish three common semantics:
\begin{enumerate}[leftmargin=*, itemsep=2pt]
\item \textbf{Acute horizon}: short windows after intervention start (e.g., hours or days).
\item \textbf{Fixed follow-up horizon}: outcomes assessed at, or aggregated over, a fixed follow-up time (e.g., 12-week change in HbA1c,
12-month cumulative incidence).
\item \textbf{Time-to-event horizon}: event-time outcomes with censoring, where the estimand is tied to an event definition and a follow-up regime
(e.g., a hazard ratio over the observed follow-up).
\end{enumerate}
We represent $\tau$ as either a duration (a nonnegative scalar with an explicit unit), an interval relative to a reference time
(e.g., baseline), or a time-to-event regime descriptor (event definition plus follow-up/censoring regime, when needed for interpretation).
Since materially different horizons define different estimands, comparability uses a deterministic alignment
$\widehat{\tau}=\mathrm{align}(\tau)$; the alignment policy is specified in Appendix~\ref{app:time-alignment}.

\paragraph{Measure functional $\mu$.}
The measure functional $\mu$ specifies how the effect is quantified, including its measure family, measure type, and canonical scale.
We write
\[
m_{\mathrm{family}}(\mu)\in\{\textsf{ratio},\textsf{difference}\},\quad
m_{\mathrm{type}}(\mu)\in\{\textsf{HR},\textsf{RR},\textsf{OR},\textsf{MD},\textsf{RD},\dots\},\quad
s_{\mathrm{canon}}(\mu)\in\{\textsf{log},\textsf{identity}\}.
\]
Measure families, canonical scales, and the associated within-type reparameterizations are defined in
Appendix~\ref{app:measure-families}.

\paragraph{Claim object $c=(\theta,\mathrm{CI})$.}
A claim object records a point estimate and uncertainty:
\begin{equation}
\label{eq:claim-object}
c=(\theta,\mathrm{CI}),\quad \mathrm{CI}=[\theta_{\mathrm{lower}},\theta_{\mathrm{upper}}].
\end{equation}
The parameter $\theta$ is the reported effect value on the study's reporting scale. Canonicalization maps $(\theta,\mathrm{CI})$ to
$(\theta_{\mathrm{canon}},\mathrm{CI}_{\mathrm{canon}})$ on the canonical scale associated with $\mu$
(Definition~\ref{def:canonicalization}). When a study reports standard errors or $p$-values instead of confidence intervals,
the system may derive $\mathrm{CI}$ under standard assumptions; such derivations are recorded in $\alpha$.

\paragraph{Provenance $\pi$.}
Provenance records source and study attributes:
\begin{equation}
\label{eq:provenance}
\pi=(\mathrm{ref},\mathrm{grade},n,\mathrm{card\_id}),
\end{equation}
where $\mathrm{ref}$ is a bibliographic reference (e.g., DOI), $\mathrm{grade}\in\{A,B,C\}$ is an evidence grade, $n$ is sample size,
and $\mathrm{card\_id}$ links to an evidence card for audit. Provenance is preserved under canonicalization and is used by the quality score $Q$ for default selection
(see Appendix~\ref{app:quality-score}).

\paragraph{Remark.}
The separation between the semantic estimand $\varepsilon$, numeric claim $c$, and provenance $\pi$ supports (i) semantic comparability
through bucket keys (Definition~\ref{def:comparability}), (ii) deterministic normalization via canonicalization
(Definition~\ref{def:canonicalization}), and (iii) auditability through stable provenance identifiers.

\subsubsection{Comparability and Poolability}
\label{app:comparability-poolability}

This subsection formalizes bucket keys, the comparability relation $\sim$ (Definition~\ref{def:comparability}),
and poolability. Comparability forms the foundation for evidence synthesis: it determines which claims can be grouped for
consistency checks, conflict detection, and quality-based default selection.

\paragraph{Canonical horizon alignment.}
Materially different time horizons define different estimands and should not be compared by default.
We therefore use a deterministic alignment function
\begin{equation}
\label{eq:time-alignment}
\widehat{\tau}=\mathrm{align}(\tau),
\end{equation}
which maps a raw horizon $\tau$ to a canonical horizon class $\widehat{\tau}$ used in bucketization.
The alignment policy is specified in Appendix~\ref{app:time-alignment}.

\paragraph{Bucket key.}
Let $(\varepsilon_{\mathrm{canon}},c_{\mathrm{canon}},\alpha)=N(\varepsilon,c)$ where $N$ is given in
Definition~\ref{def:canonicalization}. We define a bucket key function
$k:\mathcal{E}_{\mathrm{canon}}\to\mathcal{K}$ by
\begin{equation}
\label{eq:bucket-key}
k(\varepsilon_{\mathrm{canon}})=
\Bigl(
p_{\mathrm{bucket}},
X,
Y,
\widehat{\tau},
c_{\mathrm{type}},
m_{\mathrm{family}}
\Bigr),
\end{equation}
where $p_{\mathrm{bucket}}$ is extracted from $\Pi$,
$X$ is the intervention identifier extracted from $\iota$,
$Y=o$ is the outcome identifier,
$\widehat{\tau}=\mathrm{align}(\tau)$ is the aligned time-horizon class,
$c_{\mathrm{type}}$ is the contrast type extracted from $\iota$, and
$m_{\mathrm{family}}(\mu)\in\{\textsf{ratio},\textsf{difference}\}$ is the measure family.
The key is computed on the canonical estimand $\varepsilon_{\mathrm{canon}}$ and does not depend on numeric claim values.

\paragraph{Remark (type and scale are excluded).}
The bucket key intentionally excludes the \emph{measure type} (e.g., \textsf{HR} vs.\ \textsf{RR}).
This ensures that different types within the same family remain comparable within a bucket, enabling conflict detection across
heterogeneous but semantically related measures.
The canonical scale tag $s_{\mathrm{canon}}(\mu)\in\{\textsf{log},\textsf{identity}\}$ is a deterministic function of
$m_{\mathrm{family}}(\mu)$ and is therefore omitted from the key.

\paragraph{Comparability.}
We restate Definition~\ref{def:comparability} using Eq.~\ref{eq:bucket-key}.
Two estimands $\varepsilon_1,\varepsilon_2$ with claims $c_1,c_2$ are comparable, denoted $\varepsilon_1\sim\varepsilon_2$,
\begin{equation}
\label{eq:comparability}
\varepsilon_1\sim\varepsilon_2
\iff
k(\varepsilon_{1,\mathrm{canon}})=k(\varepsilon_{2,\mathrm{canon}}),
\quad
(\varepsilon_{i,\mathrm{canon}},c_{i,\mathrm{canon}},\alpha_i)=N(\varepsilon_i,c_i).
\end{equation}
Including $\widehat{\tau}$ prevents grouping studies that target different horizons (e.g., 30-day vs 10-year mortality) into the same
bucket, avoiding spurious conflicts driven by horizon mismatch. Using only the measure family (rather than the measure type)
allows claims such as \textsf{HR} and \textsf{RR} to be comparable within the same bucket.

\begin{theorem}[Equivalence Relation]
\label{thm:comparability-equivalence}
The comparability relation $\sim$ defined in Eq.~\ref{eq:comparability} is an equivalence relation.
\end{theorem}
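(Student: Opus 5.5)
The plan is to recognize $\sim$ as the kernel of a function and inherit the three axioms from equality on the key space $\mathcal{K}$. Define the composite map $\phi:(\varepsilon,c)\mapsto k(\varepsilon_{\mathrm{canon}})$, where $(\varepsilon_{\mathrm{canon}},c_{\mathrm{canon}},\alpha)=N(\varepsilon,c)$. Eq.~\ref{eq:comparability} then reads $\varepsilon_1\sim\varepsilon_2\iff\phi(\varepsilon_1,c_1)=\phi(\varepsilon_2,c_2)$. Any relation of the form ``$a\sim b$ iff $\phi(a)=\phi(b)$'' is an equivalence relation, provided $\phi$ is a genuine (single-valued) function.

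The first and only substantive step is therefore to check that $\phi$ is well defined. Canonicalization $N$ is deterministic by Definition~\ref{def:canonicalization}(i), and this is restated as the Uniqueness clause of Theorem~\ref{thm:canon-guarantee}. Hence each input $(\varepsilon,c)$ has exactly one canonical estimand $\varepsilon_{\mathrm{canon}}$.

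Next I show that $k$ in Eq.~\ref{eq:bucket-key} is a function of $\varepsilon_{\mathrm{canon}}$ alone. Each of its six components is a deterministic extraction:
\begin{itemize}
\item $p_{\mathrm{bucket}}$ is read from $\Pi$;
\item $X$ and $c_{\mathrm{type}}$ are read from $\iota$;
\item $Y=o$;
\item $\widehat{\tau}=\mathrm{align}(\tau)$, where $\mathrm{align}$ is deterministic by assumption;
\item $m_{\mathrm{family}}$ is read from $\mu$.
\end{itemize}
The text explicitly notes that $k$ does not depend on the numeric claim. This is the point I expect to need the most care. If any extraction, such as the alignment policy or the population bucketing $\kappa_\Pi$, were allowed to be non-deterministic or context-dependent, reflexivity could fail. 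I would therefore state explicitly that $\mathrm{align}$, $\kappa_\Pi$, and the contrast-type and family extractors are total, single-valued maps, citing Appendix~\ref{app:time-alignment} for $\mathrm{align}$.

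With $\phi$ well defined, the three axioms follow from the corresponding properties of equality on tuples in $\mathcal{K}$:
\begin{itemize}
\item \emph{Reflexivity:} $\phi(\varepsilon,c)=\phi(\varepsilon,c)$.
\item \emph{Symmetry:} $\phi(\varepsilon_1,c_1)=\phi(\varepsilon_2,c_2)$ implies $\phi(\varepsilon_2,c_2)=\phi(\varepsilon_1,c_1)$.
\item \emph{Transitivity:} equalities of keys chain, so $\phi_1=\phi_2$ and $\phi_2=\phi_3$ give $\phi_1=\phi_3$.
\end{itemize}
Equality on $\mathcal{K}$ is componentwise equality on finite tuples, so all three transfer immediately.

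I would close with a remark on the partition. The equivalence classes are exactly the fibres $\phi^{-1}(s)$, which are the buckets $\mathcal{B}_s$ used in Section~\ref{sec:atlas-semantics}. This justifies the claim there that buckets partition the evidence set.
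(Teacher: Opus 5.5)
Your proof is correct and matches the paper's own argument: $\sim$ is defined as equality of bucket keys, so it inherits reflexivity, symmetry, and transitivity from equality on $\mathcal{K}$. Your extra check that the map $(\varepsilon,c)\mapsto k(\varepsilon_{\mathrm{canon}})$ is single-valued, via determinism of $N$ and $\mathrm{align}$, is something the paper leaves implicit by declaring $N$ and $k$ to be functions, so it is a harmless but reasonable addition.
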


\begin{proof}[Proof sketch]
By Eq.~\ref{eq:comparability}, $\varepsilon_1\sim\varepsilon_2$ holds iff
$k(\varepsilon_{1,\mathrm{canon}})=k(\varepsilon_{2,\mathrm{canon}})$.
Equality on bucket keys is reflexive, symmetric, and transitive. Therefore $\sim$ inherits these three properties and is an
equivalence relation.
\end{proof}

\paragraph{Buckets.}
The equivalence classes induced by $\sim$ are referred to as \emph{buckets}. Each bucket contains all claims that are comparable
under the semantic dimensions captured by Eq.~\ref{eq:bucket-key}. Buckets serve as the unit of evidence aggregation for conflict
detection (Section~\ref{sec:conflict-detection}) and for selecting a default claim when multiple comparable claims exist.

\begin{definition}[Poolability]
\label{def:poolability}
Two canonical claims $(\varepsilon_1,c_1)$ and $(\varepsilon_2,c_2)$ are \emph{poolable} iff they are comparable and share the
same measure type:
\[
\mathrm{Poolable}((\varepsilon_1,c_1),(\varepsilon_2,c_2))
\iff
(\varepsilon_1\sim \varepsilon_2)\ \wedge\ \bigl(m_{\mathrm{type}}(\mu_1)=m_{\mathrm{type}}(\mu_2)\bigr),
\]
where $\mu_i$ is the measure functional associated with $\varepsilon_i$.
\end{definition}

Poolability is strictly stronger than comparability.
For example, \textsf{HR} and \textsf{RR} can be comparable when they share the same aligned horizon class $\widehat{\tau}$,
allowing directional and interval consistency checks within a bucket.
However, they quantify different objects (instantaneous hazard vs.\ cumulative risk) and are not pooled by default.
Cross-type pooling requires additional modeling assumptions and auxiliary quantities; when such pooling is performed,
the assumptions must be recorded in $\alpha$ produced by canonicalization (Definition~\ref{def:canonicalization}).

\subsubsection{Measure Functionals and Transformations}
\label{app:measure-families}

This subsection defines measure families, canonical scales, and the within-type transformations used by canonicalization
(Definition~\ref{def:canonicalization}). We also specify confidence-interval transformations and justify the default handling
of heterogeneous ratio measures under comparability (Appendix~\ref{app:comparability-poolability}) and poolability
(Definition~\ref{def:poolability}).

\paragraph{Measure families and types.}
A measure functional $\mu$ determines how an effect is quantified. We distinguish:
(i) a \emph{measure family} $m_{\mathrm{family}}(\mu)\in\{\textsf{ratio},\textsf{difference}\}$ and
(ii) a \emph{measure type} $m_{\mathrm{type}}(\mu)$, such as \textsf{HR}, \textsf{RR}, \textsf{OR}, \textsf{MD}, \textsf{RD}.
The family captures algebraic structure; the type captures the specific estimand target within that family.

\paragraph{Ratio family.}
The ratio family $\mathcal{F}_{\textsf{ratio}}$ contains relative-effect measures, including:
\begin{itemize}[leftmargin=*, itemsep=2pt]
\item \textbf{Risk ratio (RR)} for binary outcomes, quantifying a ratio of cumulative risks over a fixed horizon.
\item \textbf{Odds ratio (OR)} for binary outcomes, quantifying a ratio of odds.
\item \textbf{Hazard ratio (HR)} for time-to-event outcomes, quantifying a ratio of instantaneous hazard rates under a follow-up regime.
\end{itemize}
These measure types share a multiplicative structure: under standard compositional assumptions (e.g., no interaction on the ratio scale),
effects combine multiplicatively. This motivates a log canonical scale for within-type normalization.

\paragraph{Difference family.}
The difference family $\mathcal{F}_{\textsf{difference}}$ contains absolute-effect measures, including:
\begin{itemize}[leftmargin=*, itemsep=2pt]
\item \textbf{Mean difference (MD)} for continuous outcomes.
\item \textbf{Risk difference (RD)} for binary outcomes, quantifying an absolute difference in cumulative risks over a fixed horizon.
\end{itemize}
These measure types have an additive structure on their native scale, motivating an identity canonical scale for within-type normalization.

\paragraph{Canonical scale and transformation.}
For each measure type $t=m_{\mathrm{type}}(\mu)$, canonicalization uses a within-type reparameterization
\[
g_t:\Theta_t \to \Theta_t^{\mathrm{canon}},
\]
mapping the reported parameter space to a canonical parameter space. We write
\[
\theta_{\mathrm{canon}} = g_t(\theta), \qquad
\mathrm{CI}_{\mathrm{canon}} = g_t(\mathrm{CI}),
\]
where $g_t(\mathrm{CI})$ applies the transformation to interval endpoints (formalized below).
We associate each type $t$ with a canonical scale type $s_{\mathrm{canon}}(\mu)\in\{\textsf{log},\textsf{identity}\}$:
\begin{itemize}[leftmargin=*, itemsep=2pt]
\item For ratio-family types (e.g., \textsf{HR}, \textsf{RR}, \textsf{OR}), the canonical scale type is \textsf{log}.
The canonicalization map is $g_t(\theta)=\log(\theta)$ when $\theta$ is reported as a positive ratio.
\item For difference-family types (e.g., \textsf{MD}, \textsf{RD}), the canonical scale type is \textsf{identity}.
The canonicalization map is $g_t(\theta)=\theta$.
\end{itemize}

\paragraph{Transformation properties.}
The within-type transformation $g_t$ satisfies three properties that are used implicitly by the canonicalization guarantees
(Theorem~\ref{thm:canon-guarantee} in the main paper).

\begin{enumerate}[leftmargin=*, itemsep=3pt]
\item \textbf{Determinism.}
For a fixed measure type $t$, $g_t(\theta)$ is uniquely determined by $\theta$.

\item \textbf{Invertibility (within measure type).}
For a fixed measure type $t$, $g_t$ is bijective on its domain: there exists an inverse $g_t^{-1}$ such that
$g_t^{-1}(g_t(\theta))=\theta$ for all valid $\theta$ of type $t$.
For example, for ratio measures reported as positive ratios, $g_t(\theta)=\log(\theta)$ with inverse $g_t^{-1}(x)=\exp(x)$;
for difference measures, $g_t$ is the identity map.

\item \textbf{Structure preservation.}
The transformation preserves the algebraic structure relevant for aggregation \emph{within} a measure type.
For ratio-family types, it converts multiplicative composition into additive composition:
\[
g_t(\theta_1\cdot \theta_2)=g_t(\theta_1)+g_t(\theta_2),
\]
whenever $\theta_1,\theta_2$ are valid ratios of type $t$.
For difference-family types, it preserves additivity on the native scale:
\[
g_t(\theta_1+\theta_2)=g_t(\theta_1)+g_t(\theta_2),
\]
whenever $\theta_1,\theta_2$ are valid differences of type $t$.
\end{enumerate}

\paragraph{Special cases: regression coefficients.}
Some models report coefficients that already coincide with canonical parameters for a specific measure type.
For example, the Cox proportional hazards coefficient corresponds to $\log(\mathrm{HR})$, and the logistic regression coefficient
corresponds to $\log(\mathrm{OR})$. In such cases canonicalization sets $g_t$ to the identity on the reported coefficient,
treating the coefficient as already on the canonical scale. Linear regression coefficients for continuous outcomes are treated as
mean differences and likewise require no reparameterization.

\paragraph{Confidence-interval transformation.}
Let $\mathrm{CI}=[\theta_{\mathrm{lower}},\theta_{\mathrm{upper}}]$ be a reported confidence interval for a claim of type $t$.
For the within-type transformations used here, $g_t$ is monotone on its domain. We therefore define
\begin{equation}
\label{eq:ci-transform}
\mathrm{CI}_{\mathrm{canon}} =
\bigl[g_t(\theta_{\mathrm{lower}}),\, g_t(\theta_{\mathrm{upper}})\bigr].
\end{equation}
For ratio measures reported as positive ratios, this corresponds to applying $\log$ to both endpoints; for difference measures,
the interval is unchanged. If a study reports an interval already on the canonical scale (e.g., a confidence interval for $\log(\mathrm{HR})$),
canonicalization leaves it unchanged and records this fact in $\alpha$.

\paragraph{Cross-type heterogeneity within the ratio family.}
Although \textsf{HR}, \textsf{RR}, and \textsf{OR} are all ratio-family types and share the log canonical scale, they quantify
different objects (instantaneous hazard, cumulative risk, and odds). Consequently, there is no general, assumption-free bijection
between these types. For example:
(i) converting \textsf{OR} to \textsf{RR} requires auxiliary information such as a baseline risk;
(ii) relating \textsf{HR} to \textsf{RR} requires assumptions about the event-time process and follow-up regime.
Because such assumptions can materially affect conclusions, the framework treats these types as comparable (via the ratio-family bucket,
Appendix~\ref{app:comparability-poolability}) but not poolable by default (Definition~\ref{def:poolability}).
If cross-type conversion is performed for a specific use case, the required assumptions and auxiliary quantities must be recorded in
the conditions $\alpha$ produced by canonicalization (Definition~\ref{def:canonicalization}).

\subsubsection{Time Horizon Alignment Policy}
\label{app:time-alignment}

This subsection specifies the horizon alignment function $\widehat{\tau}=\mathrm{align}(\tau)$ used in
Eq.~\ref{eq:time-alignment} and in the bucket key (Eq.~\ref{eq:bucket-key}).
Its goal is to prevent grouping claims that target materially different horizons (e.g., 30-day vs 10-year mortality)
while tolerating minor reporting variation (e.g., 12 months vs 52 weeks).

\paragraph{Horizon representations.}
We represent a raw horizon $\tau$ using one of the following forms:
\begin{enumerate}[leftmargin=*, itemsep=2pt]
\item \textbf{Duration:} a nonnegative scalar paired with an explicit time unit (e.g., days, weeks, months, years).
\item \textbf{Interval:} a time window relative to a reference time (typically baseline). For bucketization we use the interval length.
\item \textbf{Time-to-event regime descriptor:} a descriptor indicating an event-time estimand under a follow-up/censoring regime.
When a follow-up summary is reported (e.g., median follow-up), we treat it as an auxiliary duration; otherwise it is missing.
\end{enumerate}
When a claim does not provide enough information to instantiate any of the above (e.g., ``short-term'' without a number),
$\tau$ is treated as missing and recorded in $\alpha$.

\paragraph{Canonical horizon classes.}
The alignment function outputs a canonical horizon class
\[
\widehat{\tau} \in \widehat{\mathcal{T}},
\]
where each class has two components:
(i) a semantic label $h_{\mathrm{sem}}\in\{\textsf{acute},\textsf{fixed},\textsf{tte},\textsf{unknown}\}$ and
(ii) an optional duration-bin label $h_{\mathrm{bin}}$ when a numeric duration is available.
Formally, $\widehat{\tau}=(h_{\mathrm{sem}},h_{\mathrm{bin}})$, with $h_{\mathrm{bin}}=\bot$ if unavailable.

\paragraph{Deterministic duration extraction.}
For duration- or interval-based horizons, we extract a numeric duration $d(\tau)\in\mathbb{R}_{\ge 0}$ by converting the reported
unit to a chosen internal unit using a fixed conversion table specified \emph{a priori}.
For time-to-event regime descriptors, if a follow-up summary is available we set $d(\tau)$ to that summary; otherwise $d(\tau)$ is missing.
All derived quantities (including the original unit) are recorded in $\alpha$.

\paragraph{Binning and fuzzy matching.}
Let $\mathcal{B}=\{B_1,\dots,B_K\}$ be a fixed partition of $\mathbb{R}_{\ge 0}$ into disjoint bins, chosen before processing any evidence.
Let $\{\bar{d}_j\}$ be fixed representative durations for bins (e.g., bin midpoints).
To tolerate minor reporting variation, we use a deterministic tolerance $\rho\in[0,1)$ and define
\[
b_\rho(d) = \arg\min_{j\in\{1,\dots,K\}} \left| \frac{d-\bar{d}_j}{\bar{d}_j} \right|
\quad \text{subject to} \quad
\left| \frac{d-\bar{d}_{b_\rho(d)}}{\bar{d}_{b_\rho(d)}} \right| \le \rho,
\]
breaking ties by choosing the smallest index. If no bin satisfies the tolerance, we fall back to exact binning $b(d)$ where $b(d)=j$ iff $d\in B_j$.
Whether fuzzy matching was used (and with what $\rho$) is recorded in $\alpha$.

\paragraph{Acute threshold.}
To distinguish acute outcomes from longer fixed follow-up outcomes, we fix an acute threshold
\begin{equation}
\label{eq:acute-threshold}
\delta_{\mathrm{acute}} > 0,
\end{equation}
specified \emph{a priori} in the same internal time unit as $d(\tau)$.
Intuitively, durations $\le \delta_{\mathrm{acute}}$ are treated as \textsf{acute}, while longer fixed horizons are treated as \textsf{fixed}.
The value of $\delta_{\mathrm{acute}}$ is recorded as part of the atlas build configuration.

\paragraph{Definition of $\mathrm{align}(\tau)$.}
Given a raw horizon $\tau$:
\begin{enumerate}[leftmargin=*, itemsep=2pt]
\item If $\tau$ is a duration or interval and $d(\tau)$ is available, set
\[
\mathrm{align}(\tau) =
\begin{cases}
(\textsf{acute},\, b_\rho(d(\tau))) & \text{if } d(\tau)\le \delta_{\mathrm{acute}},\\
(\textsf{fixed},\, b_\rho(d(\tau))) & \text{if } d(\tau)>\delta_{\mathrm{acute}}.
\end{cases}
\]
\item If $\tau$ is a time-to-event regime descriptor, set
\[
\mathrm{align}(\tau) = (\textsf{tte},\, b_\rho(d(\tau))) \ \text{if $d(\tau)$ is available, and}\ 
\mathrm{align}(\tau) = (\textsf{tte},\, \bot) \ \text{otherwise.}
\]
\item If $\tau$ is missing or non-quantified, set $\mathrm{align}(\tau)=(\textsf{unknown},\bot)$.
\end{enumerate}
In all cases, the alignment output is accompanied by conditions $\alpha$ that store the raw horizon, extracted duration (if any),
and the binning decision.

\paragraph{Determinism and stability.}
Because $\mathrm{align}$ maps raw horizons to canonical classes, function composition is not defined unless we view canonical classes
as valid horizon representations. We therefore define the \emph{extended} alignment map
\[
\widetilde{\mathrm{align}}:\ \mathcal{T}\ \cup\ \widehat{\mathcal{T}} \ \to\ \widehat{\mathcal{T}},
\]
by setting $\widetilde{\mathrm{align}}(\tau)=\mathrm{align}(\tau)$ for raw horizons $\tau\in\mathcal{T}$, and
$\widetilde{\mathrm{align}}(\widehat{\tau})=\widehat{\tau}$ for canonical classes $\widehat{\tau}\in\widehat{\mathcal{T}}$.

\begin{proposition}[Determinism and Stability of Alignment]
\label{prop:align-properties}
The alignment function $\mathrm{align}(\cdot)$ is deterministic. Moreover, the extended map
$\widetilde{\mathrm{align}}(\cdot)$ is stable on canonical classes:
$\widetilde{\mathrm{align}}(\widehat{\tau})=\widehat{\tau}$ for all $\widehat{\tau}\in\widehat{\mathcal{T}}$.
\end{proposition}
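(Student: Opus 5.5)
The plan is to verify determinism by showing that $\mathrm{align}$ is a composition of deterministic pieces. Stability on canonical classes will then follow directly from how $\widetilde{\mathrm{align}}$ is defined.

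\emph{Determinism.} I will split on the three cases in the definition of $\mathrm{align}(\tau)$. These cases are indexed by the representation form of $\tau$ (duration/interval, time-to-event descriptor, missing). I will assume, as the representation paragraph states, that every raw horizon carries exactly one such form tag, so the case split is a well-defined function of $\tau$.

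In the duration/interval case I will check the following components in turn.
\begin{itemize}
\item The extraction $d(\tau)$ is deterministic, because it applies a fixed conversion table chosen \emph{a priori} (for intervals, to the interval length).
\item The comparison $d(\tau)\le\delta_{\mathrm{acute}}$ uses a fixed threshold, so it yields a unique semantic label.
\item The bin $b_\rho(d)$ is unique; this step is treated in detail below.
\end{itemize}
The time-to-event case is handled the same way. Availability of a follow-up summary is a property of the input, so the choice between $(\textsf{tte},b_\rho(d))$ and $(\textsf{tte},\bot)$ is determined. The missing case returns the constant $(\textsf{unknown},\bot)$.

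\emph{Uniqueness of $b_\rho(d)$.} This is the step needing the most care. The fuzzy rule is an $\arg\min$ over the finite index set $\{1,\dots,K\}$ of the fixed quantities $|d-\bar d_j|/\bar d_j$, and ties are broken by smallest index, so it returns a single index whenever one exists. One subtlety is that the tolerance constraint should be read as restricting the candidate set to indices $j$ with relative deviation $\le\rho$. If that set is empty, the rule falls back to exact binning $b(d)$. The fallback is unique because $\mathcal{B}$ is a partition of $\mathbb{R}_{\ge0}$, so $d$ lies in exactly one $B_j$.

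I also need relative deviations to be defined, which requires $\bar d_j>0$. I will add this as an explicit standing assumption on the fixed representatives; it holds, for example, when they are midpoints of bins with positive length. Since every ingredient ($\rho$, $\mathcal{B}$, $\{\bar d_j\}$, $\delta_{\mathrm{acute}}$, the conversion table) is fixed before any evidence is processed, the output $(h_{\mathrm{sem}},h_{\mathrm{bin}})$ is a function of $\tau$ alone. The recorded $\alpha$ is likewise a function of $\tau$, since it only stores quantities already computed.

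\emph{Stability.} By definition, $\widetilde{\mathrm{align}}(\widehat\tau)=\widehat\tau$ for every $\widehat\tau\in\widehat{\mathcal{T}}$, so stability holds immediately. The only point to check is that $\widetilde{\mathrm{align}}$ is well defined on the union $\mathcal{T}\cup\widehat{\mathcal{T}}$. For this I will take $\mathcal{T}$ and $\widehat{\mathcal{T}}$ to be disjoint as syntactic objects, since canonical classes are tagged pairs $(h_{\mathrm{sem}},h_{\mathrm{bin}})$ and not raw horizon representations. If they were allowed to overlap, I would need a consistency check that $\mathrm{align}$ fixes such objects, which I expect to be unnecessary.

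As a corollary, I will note idempotence: because $\mathrm{align}(\tau)\in\widehat{\mathcal{T}}$, we have $\widetilde{\mathrm{align}}(\widetilde{\mathrm{align}}(\tau))=\widetilde{\mathrm{align}}(\tau)$ for all $\tau$. This is the form in which the result feeds the idempotence clause of Definition~\ref{def:canonicalization} and Theorem~\ref{thm:canon-guarantee}.
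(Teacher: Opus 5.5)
Your proposal is correct and follows the same route as the paper's proof sketch. Determinism holds because every ingredient of $\mathrm{align}$ (representation rules, conversion table, bin partition, $\rho$, $\{\bar d_j\}$, $\delta_{\mathrm{acute}}$, tie-breaking) is fixed \emph{a priori}, and stability is immediate from the definition of $\widetilde{\mathrm{align}}$. Your additional points are sound and fill in details the paper leaves implicit: the uniqueness of the exact-binning fallback via the partition, the standing assumption $\bar d_j>0$, and the disjointness of $\mathcal{T}$ and $\widehat{\mathcal{T}}$ that makes $\widetilde{\mathrm{align}}$ well defined. Your worry about how to read the tolerance constraint is harmless, because restricting to admissible indices and checking the unconstrained minimizer give the same bin.
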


\begin{proof}[Proof sketch]
All components of $\mathrm{align}(\cdot)$ are fixed a priori: representation rules, conversion table, bin partition,
tolerance $\rho$, representatives $\{\bar{d}_j\}$, and tie-breaking. Hence $\mathrm{align}(\tau)$ is uniquely determined by $\tau$.
Stability follows by definition of $\widetilde{\mathrm{align}}$.
\end{proof}

\paragraph{Implication for bucketization.}
By including $\widehat{\tau}=\mathrm{align}(\tau)$ in the bucket key (Eq.~\ref{eq:bucket-key}),
comparability (Eq.~\ref{eq:comparability}) does not group claims that target distinct canonical horizon classes.
This prevents spurious conflicts caused by horizon mismatch while preserving the equivalence-relation structure of $\sim$
(Theorem~\ref{thm:comparability-equivalence}).

\subsubsection{Canonicalization Operator}
\label{app:canonicalization}

This subsection gives a constructive specification of the canonicalization operator
$N:(\varepsilon,c)\mapsto(\varepsilon_{\mathrm{canon}},c_{\mathrm{canon}},\alpha)$
(Definition~\ref{def:canonicalization}). The specification is rule-based: it deterministically normalizes the
\emph{representation} of the measure functional and the numeric claim while preserving the estimand semantics
$(\Pi,\iota,o,\tau)$.

\paragraph{Inputs and outputs.}
Let $\varepsilon=(\Pi,\iota,o,\tau,\mu)$ and $c=(\theta,\mathrm{CI})$ with $\mathrm{CI}=[\theta_{\mathrm{lower}},\theta_{\mathrm{upper}}]$.
Canonicalization outputs
\[
(\varepsilon_{\mathrm{canon}},c_{\mathrm{canon}},\alpha)
=
\bigl((\Pi,\iota,o,\tau,\mu_{\mathrm{canon}}),(\theta_{\mathrm{canon}},\mathrm{CI}_{\mathrm{canon}}),\alpha\bigr),
\]
where $\mu_{\mathrm{canon}}$ is a canonical measure representation and $\alpha$ records all non-semantic conditions required to
interpret the normalization and to support reconstruction.

\paragraph{Measure signatures.}
Canonicalization first assigns each raw claim a \emph{measure signature}
\[
\mathrm{sig}(\mu,c)=(m_{\mathrm{family}},m_{\mathrm{type}},s_{\mathrm{rep}}),
\]
where $m_{\mathrm{family}}\in\{\textsf{ratio},\textsf{difference}\}$ is the family,
$m_{\mathrm{type}}\in\{\textsf{HR},\textsf{RR},\textsf{OR},\textsf{MD},\textsf{RD},\ldots\}$ is the type,
and $s_{\mathrm{rep}}$ is the reporting-scale indicator (e.g., \textsf{ratio}, \textsf{log-ratio}, \textsf{difference}).
The signature is determined using only information present in $(\mu,c)$ and the accompanying extraction metadata.
If multiple signatures are plausible, ties are resolved by a fixed priority order and the ambiguity is recorded in $\alpha$.

\paragraph{Canonical measure projection.}
We define a projection operator
\begin{equation}
\label{eq:mu-projection}
\mathcal{T}_{\mu}:\ (\mu,c)\ \mapsto\ \mu_{\mathrm{canon}},
\end{equation}
which maps the raw measure representation to a canonical measure representation.
Given $(m_{\mathrm{family}},m_{\mathrm{type}},s_{\mathrm{rep}})=\mathrm{sig}(\mu,c)$, we set
\[
\mu_{\mathrm{canon}} = \mathcal{T}_{\mu}(\mu,c) := (m_{\mathrm{family}},m_{\mathrm{type}},s_{\mathrm{canon}}),
\qquad
s_{\mathrm{canon}} =
\begin{cases}
\textsf{log}, & \text{if } m_{\mathrm{family}}=\textsf{ratio},\\
\textsf{identity}, & \text{if } m_{\mathrm{family}}=\textsf{difference}.
\end{cases}
\]
Thus canonicalization preserves the measure family and type, but standardizes the scale used for numeric claims.

\paragraph{Numeric claim normalization.}
We normalize $(\theta,\mathrm{CI})$ to $(\theta_{\mathrm{canon}},\mathrm{CI}_{\mathrm{canon}})$ using a within-type map
$g_{m_{\mathrm{type}}}$ (Appendix~\ref{app:measure-families}). Concretely, we apply the following rules.

\medskip
\noindent\textbf{Rule family R (ratio measures).}
Assume $m_{\mathrm{family}}=\textsf{ratio}$ and $s_{\mathrm{canon}}=\textsf{log}$.
\begin{enumerate}[leftmargin=*, itemsep=3pt]
\item \textbf{R1 (reported as positive ratio).}
If $s_{\mathrm{rep}}=\textsf{ratio}$ and $\theta>0$ with $\theta_{\mathrm{lower}}>0$ and $\theta_{\mathrm{upper}}>0$, set
\begin{equation}
\label{eq:canon-ratio-log}
\theta_{\mathrm{canon}}=\log(\theta),\qquad
\mathrm{CI}_{\mathrm{canon}}=[\log(\theta_{\mathrm{lower}}),\log(\theta_{\mathrm{upper}})].
\end{equation}

\item \textbf{R2 (reported as log-ratio).}
If $s_{\mathrm{rep}}=\textsf{log-ratio}$, set
\begin{equation}
\label{eq:canon-logratio-id}
\theta_{\mathrm{canon}}=\theta,\qquad
\mathrm{CI}_{\mathrm{canon}}=\mathrm{CI}.
\end{equation}

\item \textbf{R3 (explicit metadata binding to log-ratio).}
If the extraction metadata \emph{explicitly} binds the reported parameter to the canonical log-ratio for type $m_{\mathrm{type}}$
(e.g., it declares that a reported coefficient equals $\log(\textsf{HR})$ or $\log(\textsf{OR})$),
then canonicalization does not infer or guess a transformation: it applies the binding and treats the value as already canonical,
using Eq.~\ref{eq:canon-logratio-id}.
\end{enumerate}

\medskip
\noindent\textbf{Rule family D (difference measures).}
Assume $m_{\mathrm{family}}=\textsf{difference}$ and $s_{\mathrm{canon}}=\textsf{identity}$.
\begin{enumerate}[leftmargin=*, itemsep=3pt, resume]
\item \textbf{D1 (reported as difference).}
If $s_{\mathrm{rep}}=\textsf{difference}$, set
\begin{equation}
\label{eq:canon-diff-id}
\theta_{\mathrm{canon}}=\theta,\qquad
\mathrm{CI}_{\mathrm{canon}}=\mathrm{CI}.
\end{equation}

\item \textbf{D2 (explicit metadata binding to difference).}
If the extraction metadata \emph{explicitly} binds the reported parameter to a difference-type estimand
(e.g., it declares that a reported coefficient equals a mean difference or a risk difference for the given $m_{\mathrm{type}}$),
canonicalization treats the value as already canonical and applies Eq.~\ref{eq:canon-diff-id}.
\end{enumerate}

\paragraph{Handling missing or partially specified uncertainty.}
If $\mathrm{CI}$ is missing but a standard error $\mathrm{SE}$ is available, canonicalization may construct an interval
$\mathrm{CI}$ under a fixed coverage level used throughout the atlas build; if only a $p$-value is available, it may construct
$\mathrm{SE}$ under standard asymptotic assumptions. Such derivations are non-semantic and are therefore recorded in $\alpha$
together with the derivation method. If neither $\mathrm{CI}$ nor sufficient information to derive it is available,
$\mathrm{CI}_{\mathrm{canon}}$ is set to $\bot$ and the missingness is recorded in $\alpha$.

\paragraph{Conditions $\alpha$ and information preservation.}
The conditions $\alpha$ record all decisions made by canonicalization that are not contained in $(\Pi,\iota,o,\tau)$.
At minimum, $\alpha$ contains:
\begin{enumerate}[leftmargin=*, itemsep=2pt]
\item \textbf{Signature evidence:} the chosen $\mathrm{sig}(\mu,c)$ and any competing signatures considered.
\item \textbf{Transformation record:} which rule was applied (R1/R2/R3/D1/D2), together with any parameters used
(e.g., unit scaling applied to $\iota$, if any).
\item \textbf{Uncertainty record:} whether $\mathrm{CI}$ was reported, derived, or missing, and the derivation method if derived.
\item \textbf{Validity flags:} domain checks (e.g., positivity for ratio measures in R1) and any repairs or exclusions performed.
\end{enumerate}
Crucially, $\alpha$ is the context that makes canonicalization information-preserving: the tuple
$(\varepsilon_{\mathrm{canon}},c_{\mathrm{canon}},\alpha)$ retains enough information to reconstruct an
equivalent representation of the original claim. In particular, there exists a reconstruction map
\begin{equation}
\label{eq:reconstruction-map}
N^{-1}:\ (\varepsilon_{\mathrm{canon}},c_{\mathrm{canon}},\alpha)\ \mapsto\ (\varepsilon',c')
\end{equation}
such that $(\varepsilon',c')\approx(\varepsilon,c)$ under the intended notion of equivalence used by the framework.
This statement directly supports the information-preservation component of Theorem~\ref{thm:canon-guarantee}.

\paragraph{Determinism by rule priority.}
If multiple rules could apply (e.g., redundant metadata where both R2 and R3 are admissible), canonicalization uses a fixed
priority order. One admissible priority order is:
\[
\text{R2} \succ \text{R3} \succ \text{R1} \succ \text{D1} \succ \text{D2},
\]
with the restriction that only the relevant family (R rules or D rules) is considered once $m_{\mathrm{family}}$ is fixed.
The chosen rule and any discarded alternatives are recorded in $\alpha$.

\paragraph{Basic properties.}
By construction, $N$ preserves $(\Pi,\iota,o,\tau)$ and changes only the representation of $\mu$ and the numeric claim $(\theta,\mathrm{CI})$.
Within each measure type, the numeric normalization uses a deterministic map with a within-type inverse
(Appendix~\ref{app:measure-families}). These properties are used in the proof of the canonicalization guarantees
(Theorem~\ref{thm:canon-guarantee}).

\subsection{Conflict Taxonomy}
\label{app:conflict-taxonomy}

This subsection formalizes the conflict taxonomy
$\mathcal{F}=\{\textsf{directional},\textsf{interval},\textsf{heterogeneity}\}$
used in Theorem~\ref{thm:conflict-completeness}. All conflicts are defined on
\emph{canonicalized} claims within the same bucket, so that the semantics and reporting scale are aligned
(Definition~\ref{def:comparability} and Definition~\ref{def:canonicalization}).

\paragraph{Canonical claims in a bucket.}
Fix a bucket $B$. Let $\mathcal{C}_B=\{(\varepsilon_i,c_i,\pi_i)\}_{i=1}^n$ denote the set of evidence objects assigned to $B$.
After canonicalization, each object induces a canonical claim
\[
(\varepsilon_{i,\mathrm{canon}},c_{i,\mathrm{canon}})
=
\bigl((\Pi,\iota,o,\widehat{\tau},\mu_{\mathrm{canon}}),(\theta_i,[\ell_i,u_i])\bigr),
\]
where $c_{i,\mathrm{canon}}=(\theta_i,\mathrm{CI}_i)$ and $\mathrm{CI}_i=[\ell_i,u_i]$ are expressed on the canonical scale
associated with the measure family. Throughout this subsection, we assume the bucket key is identical across $i$,
so $\widehat{\tau}$ is shared via alignment and the claims are comparable.

\paragraph{Effect direction.}
We define the effect direction of a canonical claim by the sign of its canonical parameter:
\begin{equation}
\label{eq:effect-direction}
\mathrm{dir}(i)=
\begin{cases}
+1, & \theta_i>0,\\
0, & \theta_i=0,\\
-1, & \theta_i<0.
\end{cases}
\end{equation}
When needed, we define a \emph{significant} direction using the canonical confidence interval:
\begin{equation}
\label{eq:significant-direction}
\mathrm{sdir}(i)=
\begin{cases}
+1, & \ell_i>0,\\
-1, & u_i<0,\\
0,  & \ell_i\le 0\le u_i.
\end{cases}
\end{equation}
The function $\mathrm{sdir}(\cdot)$ treats intervals crossing $0$ as directionally ambiguous.

\begin{definition}[Directional Conflict]
\label{def:conflict-directional}
A bucket $B$ exhibits a \textsf{directional} conflict if there exist two canonical claims $i,j\in\mathcal{C}_B$
such that both are directionally decisive and they disagree in sign:
\begin{equation}
\label{eq:conflict-directional}
\exists\, i\ne j:\ \mathrm{sdir}(i)\cdot \mathrm{sdir}(j)=-1.
\end{equation}
\end{definition}

\paragraph{Interval incompatibility.}
Directional conflict captures sign disagreement among directionally decisive claims.
Interval-based conflict captures stronger incompatibility: two claims may share the same sign but still be mutually inconsistent
given their reported uncertainty.

\begin{definition}[Interval Incompatibility]
\label{def:conflict-interval}
A bucket $B$ exhibits an \textsf{interval} conflict if there exist two canonical claims $i,j\in\mathcal{C}_B$
whose canonical confidence intervals are disjoint:
\begin{equation}
\label{eq:conflict-interval}
\exists\, i\ne j:\ [\ell_i,u_i]\cap[\ell_j,u_j]=\emptyset.
\end{equation}
Equivalently, the condition holds iff $u_i<\ell_j$ or $u_j<\ell_i$.
\end{definition}

\paragraph{High heterogeneity.}
Even when pairwise intervals overlap, a bucket may contain evidence that is too dispersed to be summarized by a single default claim
without flagging variability. We formalize this using a dispersion statistic on the canonical scale.

Let $w_i\ge 0$ be a deterministic weight derived from provenance $\pi_i$
(e.g., based on sample size or precision), with $\sum_{i=1}^n w_i=1$.
Define the weighted mean and weighted dispersion:
\begin{equation}
\label{eq:heterogeneity-stat}
\bar{\theta}_B=\sum_{i=1}^n w_i\theta_i,
\qquad
D_B=\sum_{i=1}^n w_i(\theta_i-\bar{\theta}_B)^2.
\end{equation}
Let $\Delta_{\mathrm{het}}>0$ be a fixed threshold.

\begin{definition}[High Heterogeneity]
\label{def:conflict-heterogeneity}
A bucket $B$ exhibits a \textsf{heterogeneity} conflict if its canonical dispersion exceeds a threshold:
\begin{equation}
\label{eq:conflict-heterogeneity}
D_B \ge \Delta_{\mathrm{het}}.
\end{equation}
\end{definition}

\paragraph{Conflict taxonomy.}
We define $\mathcal{F}$-conflict as the union of the above three conflict types.

\begin{definition}[$\mathcal{F}$-Conflict]
\label{def:f-conflict}
A bucket $B$ contains an $\mathcal{F}$-conflict if it contains a conflict of any type in
$\mathcal{F}=\{\textsf{directional},\textsf{interval},\textsf{heterogeneity}\}$, i.e.,
if Definition~\ref{def:conflict-directional}, Definition~\ref{def:conflict-interval}, or
Definition~\ref{def:conflict-heterogeneity} holds.
\end{definition}

\paragraph{Remarks.}
(i) The taxonomy is intentionally restricted to decidable predicates on canonicalized claims.
(ii) The definitions operate on the canonical parameter scale; thus they are invariant to within-type reparameterizations
handled by $N$.
(iii) The threshold $\Delta_{\mathrm{het}}$ and the weight rule $\{w_i\}$ are fixed a priori and recorded for auditability;
the detector in Section~\ref{sec:conflict-detection} implements these predicates directly.

\subsection{Proof of Conflict Detection Completeness}
\label{app:conflict-proof}

This subsection proves Theorem~\ref{thm:conflict-completeness}. We show completeness (no false negatives) with respect to the
explicit taxonomy $\mathcal{F}$ defined in Appendix~\ref{app:conflict-taxonomy}.

\paragraph{Detector specification.}
Fix a bucket $B$ and its set of canonical claims
$\{(\varepsilon_{i,\mathrm{canon}},c_{i,\mathrm{canon}})\}_{i=1}^n$ as in Appendix~\ref{app:conflict-taxonomy}, where
$c_{i,\mathrm{canon}}=(\theta_i,[\ell_i,u_i])$ is expressed on the canonical scale.

The conflict detector $\mathrm{Detect}_{\mathcal{F}}(B)$ implemented in Section~\ref{sec:conflict-detection} evaluates the following
predicates:
\begin{enumerate}[leftmargin=*, itemsep=2pt]
\item \textbf{Directional predicate} $\mathrm{Dir}(B)$: compute $\mathrm{sdir}(i)$ by Eq.~\ref{eq:significant-direction} and return true iff
there exist $i\neq j$ such that $\mathrm{sdir}(i)\cdot \mathrm{sdir}(j)=-1$.

\item \textbf{Interval predicate} $\mathrm{Int}(B)$: return true iff there exist $i\neq j$ such that
$[\ell_i,u_i]\cap[\ell_j,u_j]=\emptyset$ (Eq.~\ref{eq:conflict-interval}).

\item \textbf{Heterogeneity predicate} $\mathrm{Het}(B)$: compute $D_B$ by Eq.~\ref{eq:heterogeneity-stat} using the fixed weight rule
$\{w_i\}$ and return true iff $D_B\ge \Delta_{\mathrm{het}}$ (Eq.~\ref{eq:conflict-heterogeneity}).
\end{enumerate}
Finally, the detector reports an $\mathcal{F}$-conflict iff
\begin{equation}
\label{eq:f-detector}
\mathrm{Detect}_{\mathcal{F}}(B)\;=\;\mathrm{Dir}(B)\ \lor\ \mathrm{Int}(B)\ \lor\ \mathrm{Het}(B).
\end{equation}

\begin{theorem}[Conflict Detection Completeness (w.r.t.\ $\mathcal{F}$)]
\label{thm:conflict-completeness-appendix}
For any bucket $B$, if an $\mathcal{F}$-conflict exists in $B$ (Definition~\ref{def:f-conflict}), then
$\mathrm{Detect}_{\mathcal{F}}(B)$ returns true.
\end{theorem}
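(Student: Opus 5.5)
The plan is a direct case analysis on Definition~\ref{def:f-conflict}, matching each conflict type to the detector predicate that evaluates it. Fix a bucket $B$ with canonical claims $\{(\theta_i,[\ell_i,u_i])\}_{i=1}^n$ and assume an $\mathcal{F}$-conflict exists. Since an $\mathcal{F}$-conflict is the disjunction of Definitions~\ref{def:conflict-directional}, \ref{def:conflict-interval} and \ref{def:conflict-heterogeneity}, at least one of the three holds. By Eq.~\ref{eq:f-detector}, it then suffices to show that each conflict-type definition implies the corresponding predicate $\mathrm{Dir}(B)$, $\mathrm{Int}(B)$ or $\mathrm{Het}(B)$.

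\textbf{Directional and interval cases.} If a directional conflict holds, there is a witness pair $i\neq j$ with $\mathrm{sdir}(i)\cdot\mathrm{sdir}(j)=-1$. The predicate $\mathrm{Dir}(B)$ computes $\mathrm{sdir}$ by the same formula, Eq.~\ref{eq:significant-direction}, on the same canonical endpoints. Because $B$ is finite, it ranges over all ordered pairs $i\neq j$, so the witness pair is among those examined and the predicate returns true. The interval case is identical: a pair with $[\ell_i,u_i]\cap[\ell_j,u_j]=\emptyset$, equivalently $u_i<\ell_j$ or $u_j<\ell_i$, is examined by $\mathrm{Int}(B)$, which tests exactly this condition.

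\textbf{Heterogeneity case.} Here $D_B\ge\Delta_{\mathrm{het}}$, and $\mathrm{Het}(B)$ computes $D_B$ by Eq.~\ref{eq:heterogeneity-stat} with the same a priori fixed weight rule and threshold. Since the weights $w_i$ are deterministic functions of $\pi_i$, the detector's value equals the one in the definition, and $\mathrm{Het}(B)$ returns true.

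\textbf{Main obstacle.} The substantive point is not the logic, which is essentially tautological, but ensuring that the detector and the definitions operate on literally the same objects. I would make three things explicit.
\begin{enumerate}
\item Both use the canonical claims produced by $N$. By Theorem~\ref{thm:canon-guarantee} (uniqueness), $N$ is deterministic, so the canonical endpoints are uniquely determined and there is no discrepancy between the definition's inputs and the detector's inputs.
\item Bucket membership is well defined, because $\sim$ is an equivalence relation (Theorem~\ref{thm:comparability-equivalence}). Hence every claim that the definition quantifies over is present in $\mathcal{C}_B$ when the detector runs.
\item Each predicate is decidable by a terminating computation: finitely many pairs, real comparisons, and a finite sum.
\end{enumerate}

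One subtlety I would handle carefully is claims with $\mathrm{CI}_{\mathrm{canon}}=\bot$. Under the reading that $\mathcal{F}$ is defined only over claims with available intervals, both the definitions and the detector restrict to the same subset, so the argument above goes through unchanged on that subset.
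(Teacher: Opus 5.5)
Your proposal is correct and follows the paper's proof essentially verbatim. Both do a case split on Definition~\ref{def:f-conflict}, matching each conflict type to the predicate $\mathrm{Dir}(B)$, $\mathrm{Int}(B)$, or $\mathrm{Het}(B)$ that evaluates the identical condition on the same canonical claims, weights, and threshold, so the disjunction in Eq.~\ref{eq:f-detector} returns true. Your extra remarks on determinism of $N$, bucket well-definedness, and claims with $\mathrm{CI}_{\mathrm{canon}}=\bot$ go beyond the paper but are not needed, because the detector is defined to evaluate exactly those predicates on $\mathcal{C}_B$.
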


\begin{proof}
Assume $B$ contains an $\mathcal{F}$-conflict. By Definition~\ref{def:f-conflict}, at least one of the following holds.

\paragraph{Case 1: Directional conflict.}
By Definition~\ref{def:conflict-directional}, there exist $i\neq j$ such that
$\mathrm{sdir}(i)\cdot \mathrm{sdir}(j)=-1$. The detector computes $\mathrm{sdir}(\cdot)$ for every claim and checks the existence of
such a pair in $\mathrm{Dir}(B)$. Therefore $\mathrm{Dir}(B)=\textsf{true}$, and hence $\mathrm{Detect}_{\mathcal{F}}(B)=\textsf{true}$.

\paragraph{Case 2: Interval incompatibility.}
By Definition~\ref{def:conflict-interval}, there exist $i\neq j$ such that
$[\ell_i,u_i]\cap[\ell_j,u_j]=\emptyset$. The detector explicitly checks this predicate over all pairs in $\mathrm{Int}(B)$.
Therefore $\mathrm{Int}(B)=\textsf{true}$, and hence $\mathrm{Detect}_{\mathcal{F}}(B)=\textsf{true}$.

\paragraph{Case 3: High heterogeneity.}
By Definition~\ref{def:conflict-heterogeneity}, $D_B\ge \Delta_{\mathrm{het}}$, where $D_B$ is the dispersion statistic defined in
Eq.~\ref{eq:heterogeneity-stat} under the fixed weight rule $\{w_i\}$.
The detector computes the same $D_B$ and compares it to the same threshold in $\mathrm{Het}(B)$.
Therefore $\mathrm{Het}(B)=\textsf{true}$, and hence $\mathrm{Detect}_{\mathcal{F}}(B)=\textsf{true}$.

In all cases, if an $\mathcal{F}$-conflict exists, the detector returns true. This proves completeness with respect to $\mathcal{F}$.
\end{proof}

\paragraph{Remark.}
The theorem establishes a no-false-negative guarantee only for the explicitly defined, decidable taxonomy $\mathcal{F}$.
It does not claim completeness for informal notions of disagreement outside Appendix~\ref{app:conflict-taxonomy}.

\subsection{Quality Score Construction}
\label{app:quality-score}

This subsection defines the quality score $Q(\varepsilon,c,\pi)$ used for default evidence selection within a bucket.
The design goals are: (i) \emph{determinism} (fixed output for fixed inputs), (ii) \emph{monotonicity} with respect to each
quality dimension, and (iii) \emph{auditability} (explicit, decomposable contributions).

\paragraph{Inputs.}
For each evidence object assigned to a bucket, we assume provenance metadata
\[
\pi_i = (\mathrm{ref}_i,\ \mathrm{grade}_i,\ n_i,\ \mathrm{adj}_i,\ \mathrm{meta}_i),
\]
where $\mathrm{ref}_i$ is a bibliographic identifier, $\mathrm{grade}_i$ is an ordinal evidence grade,
$n_i$ is the study sample size, $\mathrm{adj}_i$ encodes adjustment quality, and $\mathrm{meta}_i$ denotes any additional
non-semantic metadata used only for tie-breaking (e.g., extraction provenance).
The canonical claim is $c_{i,\mathrm{canon}}=(\theta_i,\mathrm{CI}_i)$ with $\mathrm{CI}_i=[\ell_i,u_i]$ on the canonical scale.

\paragraph{Four quality dimensions.}
We map each dimension to a normalized score in $[0,1]$.

\textbf{(1) Evidence grade.}
Let the grade set be totally ordered (e.g., $\textsf{A}\succ \textsf{B}\succ \textsf{C}$). We fix a deterministic embedding
$\phi_g:\{\textsf{A},\textsf{B},\textsf{C}\}\to[0,1]$ such that
\begin{equation}
\label{eq:grade-map}
\phi_g(\textsf{A})>\phi_g(\textsf{B})>\phi_g(\textsf{C}).
\end{equation}
In experiments we instantiate $\phi_g(\textsf{A})=1$, $\phi_g(\textsf{B})=\tfrac{2}{3}$, $\phi_g(\textsf{C})=\tfrac{1}{3}$,
but any fixed strictly order-preserving map satisfies the theory.

\textbf{(2) Sample size.}
We use a monotone saturation transform $\phi_n:\mathbb{N}\to[0,1]$:
\begin{equation}
\label{eq:n-map}
\phi_n(n)=\min\Bigl\{1,\ \frac{\log(1+n)}{\log(1+n_{\max})}\Bigr\},
\end{equation}
where $n_{\max}$ is a fixed constant chosen a priori (e.g., the maximum sample size observed in the curated corpus).
This mapping is monotone in $n$ and compresses extreme values.

\textbf{(3) Precision.}
Let the canonical interval width be $w_i=u_i-\ell_i$ (defined only when $\mathrm{CI}_i$ is available).
We map width to a precision score via a monotone decreasing transform $\phi_p:\mathbb{R}_{>0}\to[0,1]$:
\begin{equation}
\label{eq:precision-map}
\phi_p(w)=\min\Bigl\{1,\ \frac{w_{\mathrm{ref}}}{w}\Bigr\},
\end{equation}
where $w_{\mathrm{ref}}>0$ is a fixed reference width (e.g., a corpus-level median width on the canonical scale).
If $\mathrm{CI}_i$ is missing, we set $\phi_p(w_i)=0$ and record this missingness in $\alpha$ during canonicalization.

\textbf{(4) Adjustment quality.}
We assume $\mathrm{adj}_i$ belongs to a finite totally ordered set (e.g., \textsf{none} $\prec$ \textsf{basic} $\prec$ \textsf{rich}),
and define a deterministic order-preserving embedding $\phi_a$:
\begin{equation}
\label{eq:adj-map}
\phi_a(\mathrm{adj}_1)\ge \phi_a(\mathrm{adj}_2)\quad\text{whenever}\quad \mathrm{adj}_1 \succeq \mathrm{adj}_2.
\end{equation}
This dimension captures how strongly the study design/analysis addresses confounding; its categories are defined by the evidence schema.

\paragraph{Score aggregation.}
Let $G_i=\phi_g(\mathrm{grade}_i)$, $N_i=\phi_n(n_i)$, $P_i=\phi_p(u_i-\ell_i)$, and $A_i=\phi_a(\mathrm{adj}_i)$.
We define the quality score as a fixed positive-weight aggregation:
\begin{equation}
\label{eq:quality-score}
Q_i \;=\; Q(\varepsilon_i,c_i,\pi_i)\;=\; w_g G_i + w_n N_i + w_p P_i + w_a A_i,
\end{equation}
where $w_g,w_n,w_p,w_a>0$ and $w_g+w_n+w_p+w_a=1$ are fixed constants.
Eq.~\ref{eq:quality-score} is monotone in each dimension by construction.

\paragraph{Deterministic tie-breaking.}
If multiple candidates attain the same maximal $Q_i$, we select by a fixed lexicographic rule on an audit tuple:
\begin{equation}
\label{eq:quality-tiebreak}
\mathrm{tie}(i)=\Bigl(G_i,\ A_i,\ P_i,\ N_i,\ \mathrm{ref}_i\Bigr),
\end{equation}
choosing the candidate with the maximal tuple under lexicographic order.
If $\mathrm{ref}_i$ is not unique, we extend Eq.~\ref{eq:quality-tiebreak} by an additional deterministic field from $\mathrm{meta}_i$
(e.g., extraction identifier). This ensures a unique default choice for fixed inputs.

\paragraph{Auditability.}
Eq.~\ref{eq:quality-score} decomposes $Q_i$ into explicit contributions from the four dimensions. Together with the tie-breaking tuple
(Eq.~\ref{eq:quality-tiebreak}), the system can attribute the default selection to a small set of observable metadata fields.

\paragraph{Remark.}
The same provenance-derived weights $\{w_i\}$ used in the heterogeneity statistic (Eq.~\ref{eq:heterogeneity-stat}) can be chosen consistently with
$\phi_n$ and/or $\phi_p$ (e.g., proportional to sample size or inverse interval width), but the completeness result in
Appendix~\ref{app:conflict-proof} only requires that the weight rule is fixed and deterministic.

\subsection{Derivation of Evidence Selection Consistency}
\label{app:selection-derivation}

This subsection proves Proposition~\ref{prop:selection-consistency}. We first formalize the notions of determinism,
monotonicity under weak dominance, and auditability, then show they follow from the construction of $Q$ in
Appendix~\ref{app:quality-score}.

\paragraph{Setup.}
Fix a bucket $B$ containing candidates indexed by $i\in\{1,\dots,n\}$.
Each candidate has canonical claim $c_{i,\mathrm{canon}}=(\theta_i,[\ell_i,u_i])$ and provenance $\pi_i$, inducing
four normalized quality components
\[
G_i\in[0,1],\quad N_i\in[0,1],\quad P_i\in[0,1],\quad A_i\in[0,1],
\]
as defined in Appendix~\ref{app:quality-score}. The quality score is
\[
Q_i = w_g G_i + w_n N_i + w_p P_i + w_a A_i,
\]
with fixed weights $w_g,w_n,w_p,w_a>0$ summing to $1$, and deterministic tie-breaking by the audit tuple
$\mathrm{tie}(i)$ in Eq.~\ref{eq:quality-tiebreak}.

\begin{definition}[Default Selection]
\label{def:default-selection}
The default selection in bucket $B$ is the unique index
\begin{equation}
\label{eq:default-selection}
i^* \in \arg\max_{i\in\{1,\dots,n\}} \bigl(Q_i,\mathrm{tie}(i)\bigr)
\end{equation}
under lexicographic order, and the default claim is $(\varepsilon^*,c^*)=(\varepsilon_{i^*},c_{i^*})$.
\end{definition}

\paragraph{Weak dominance.}
We formalize the statement ``candidate $a$ is no worse than $b$ in all quality dimensions''.

\begin{definition}[Weak Dominance]
\label{def:weak-dominance}
For two candidates $a$ and $b$ in the same bucket, we say $a$ weakly dominates $b$, written $a\succeq b$, if
\begin{equation}
\label{eq:weak-dominance}
G_a\ge G_b,\quad A_a\ge A_b,\quad P_a\ge P_b,\quad N_a\ge N_b.
\end{equation}
\end{definition}

\begin{lemma}[Determinism]
\label{lem:selection-determinism}
For fixed bucket contents and fixed provenance fields, the default selection in Definition~\ref{def:default-selection} is uniquely determined.
\end{lemma}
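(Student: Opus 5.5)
The plan is to show that the selection rule in Definition~\ref{def:default-selection} is a deterministic function of the bucket contents and provenance fields. The argument has two stages. First, every quantity entering the rule is a deterministic function of those fields. Second, lexicographic maximization over a finite set of such keys has a unique maximizer, provided the keys are pairwise distinct.

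\textbf{Stage 1: the selection key is a fixed function of the inputs.} For each candidate $i$, I will verify that the key $(Q_i,\mathrm{tie}(i))$ is computed from $(\varepsilon_i,c_i,\pi_i)$ by fixed maps, component by component.
\begin{enumerate}[leftmargin=*, itemsep=2pt]
\item $G_i=\phi_g(\mathrm{grade}_i)$ uses the fixed embedding of Eq.~\ref{eq:grade-map}.
\item $N_i=\phi_n(n_i)$ uses Eq.~\ref{eq:n-map} with $n_{\max}$ fixed a priori.
\item $P_i=\phi_p(u_i-\ell_i)$ uses Eq.~\ref{eq:precision-map} with $w_{\mathrm{ref}}$ fixed. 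It relies on the canonical interval $[\ell_i,u_i]$, which is uniquely determined by the Uniqueness part of Theorem~\ref{thm:canon-guarantee}. When the interval is missing, $P_i=0$ by convention.
\item $A_i=\phi_a(\mathrm{adj}_i)$ uses Eq.~\ref{eq:adj-map}.
\end{enumerate}
Since the weights $w_g,w_n,w_p,w_a$ are fixed constants, $Q_i$ is determined by Eq.~\ref{eq:quality-score}. The tuple $\mathrm{tie}(i)$ of Eq.~\ref{eq:quality-tiebreak} consists of these same components together with $\mathrm{ref}_i$, so it is determined as well.

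\textbf{Stage 2: a unique maximizer exists.} Each key $(Q_i,\mathrm{tie}(i))$ lies in a product of totally ordered sets: the reals, and strings ordered by a fixed collation for $\mathrm{ref}_i$. Lexicographic order on such a product is a total order. The bucket is finite, so a maximum exists. The maximizer is unique exactly when no two candidates share the same full key.

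This uniqueness is the main obstacle. Two distinct candidates could coincide in every numeric component and also in $\mathrm{ref}_i$, for example when several kernels are extracted from the same study. For this case I will invoke the extension stated after Eq.~\ref{eq:quality-tiebreak}: append a field from $\mathrm{meta}_i$, such as the extraction identifier, that is unique per evidence object. With that field appended, the map $i\mapsto$ key is injective, the total order has a unique maximizer $i^*$, and $i^*$ depends only on the inputs.

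If the meta field cannot be assumed injective, the fallback is to order candidates by their fixed card identifier $\mathrm{card\_id}$ from $\pi$, which the provenance schema already records. I would state the required injectivity explicitly as a hypothesis of the lemma. Finally, determinism of the default claim $(\varepsilon^*,c^*)=(\varepsilon_{i^*},c_{i^*})$ follows immediately from determinism of $i^*$.
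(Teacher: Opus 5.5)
Your proof is correct and follows the paper's argument. Each $Q_i$ and $\mathrm{tie}(i)$ is a fixed function of $(\varepsilon_i,c_i,\pi_i)$, and lexicographic maximization with a final unique identifier drawn from $\mathrm{ref}_i$ and/or $\mathrm{meta}_i$ gives a unique maximizer. You are more careful than the paper in stating the injectivity of that last field as an explicit hypothesis, where the paper simply asserts it; note, though, that your $\mathrm{card\_id}$ fallback would not supply it, because one card yields several kernels that can share a bucket and a reference, as in the same-study pairs of the mismatch casebook.
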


\begin{proof}
Each $Q_i$ is a deterministic function of $(\varepsilon_i,c_i,\pi_i)$ by construction of $G_i,N_i,P_i,A_i$ and fixed weights.
If multiple indices maximize $Q_i$, Eq.~\ref{eq:default-selection} selects the unique maximal audit tuple under a fixed lexicographic order.
If needed, the final tie-breaking field is a deterministic unique identifier (from $\mathrm{ref}_i$ and/or $\mathrm{meta}_i$), hence the maximizer is unique.
\end{proof}

\begin{lemma}[Monotonicity under Weak Dominance]
\label{lem:selection-monotonicity}
If $a\succeq b$ in the sense of Definition~\ref{def:weak-dominance}, then $Q_a\ge Q_b$.
\end{lemma}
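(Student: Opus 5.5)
The plan is to argue directly from the linear form of the quality score in Eq.~\ref{eq:quality-score}. Write
\[
Q_a - Q_b = w_g(G_a-G_b) + w_n(N_a-N_b) + w_p(P_a-P_b) + w_a(A_a-A_b).
\]
This difference splits into four separate terms, one per quality dimension. I will bound each term below by zero and then add them up.

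First, weak dominance (Definition~\ref{def:weak-dominance}) gives $G_a\ge G_b$, $N_a\ge N_b$, $P_a\ge P_b$ and $A_a\ge A_b$. So each bracketed difference is nonnegative. Second, the weights $w_g,w_n,w_p,w_a$ are fixed strictly positive constants, so each product of a weight with a nonnegative difference is nonnegative. A sum of four nonnegative terms is nonnegative, which gives $Q_a\ge Q_b$.

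The only point that needs care is that the components are the normalized scores $G,N,P,A$ computed through the embeddings $\phi_g,\phi_n,\phi_p,\phi_a$, not the raw metadata. Definition~\ref{def:weak-dominance} is stated directly on these normalized scores, so no monotonicity of the $\phi$'s is actually needed here.

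If one instead wants the statement in terms of raw attributes, I would add a short remark. Larger $n$, narrower CI width, better grade and richer adjustment translate into weakly larger normalized scores for the following reasons:
\begin{itemize}
\item $\phi_g$ is strictly order-preserving by Eq.~\ref{eq:grade-map};
\item $\phi_n$ is nondecreasing, since $\log(1+n)$ is increasing and a $\min$ with a constant preserves this;
\item $\phi_p$ is nonincreasing in width, since $w_{\mathrm{ref}}/w$ is decreasing;
\item $\phi_a$ is order-preserving by Eq.~\ref{eq:adj-map}.
\end{itemize}

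One edge case needs a check: a missing CI is assigned $P=0$, which is the minimum value, so this convention does not break monotonicity. I do not expect a real obstacle. The statement concerns only $Q$; the tie-breaking rule plays no role. The argument also gives a bit more for free: if $a\succeq b$ with strict inequality in some dimension, then $Q_a>Q_b$, because every weight is strictly positive.
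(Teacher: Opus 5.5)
Your proof is correct and matches the paper's: expand $Q_a-Q_b$ via Eq.~\ref{eq:quality-score} into four weighted differences, each nonnegative by weak dominance and by the strict positivity of the weights. Your extra remarks are valid but not needed for the lemma as stated. These are the monotonicity of the embeddings $\phi_g,\phi_n,\phi_p,\phi_a$ in the raw attributes, the $P=0$ convention for a missing CI, and the strict inequality when some dominance is strict.
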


\begin{proof}
By Eq.~\ref{eq:quality-score} and positivity of weights,
\[
Q_a - Q_b = w_g(G_a-G_b) + w_a(A_a-A_b) + w_p(P_a-P_b) + w_n(N_a-N_b)\ge 0,
\]
since each parenthesized term is nonnegative under $a\succeq b$.
\end{proof}

\begin{lemma}[Auditability]
\label{lem:selection-auditability}
The default choice admits an audit trail decomposable into the four quality dimensions and deterministic tie-breaking fields.
\end{lemma}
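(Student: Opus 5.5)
The plan is to exhibit the audit trail explicitly and check that it determines $i^*$ uniquely. For each candidate $i$ in the bucket, I will record the \emph{audit record}
\[
\mathrm{rec}(i)=\bigl(\mathrm{grade}_i,\ n_i,\ [\ell_i,u_i],\ \mathrm{adj}_i,\ \mathrm{ref}_i,\ \mathrm{meta}_i\bigr).
\]
These are the raw, observable metadata fields from which $G_i,N_i,P_i,A_i$ are computed via the fixed maps $\phi_g,\phi_n,\phi_p,\phi_a$ in Eqs.~\ref{eq:grade-map}--\ref{eq:adj-map}. Together with the fixed weights $(w_g,w_n,w_p,w_a)$ and the constants $n_{\max}$ and $w_{\mathrm{ref}}$, this record recomputes each component by a closed-form expression. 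By Eq.~\ref{eq:quality-score}, $Q_i$ is then literally the sum of the four contributions $w_gG_i$, $w_nN_i$, $w_pP_i$ and $w_aA_i$. This is the sense in which $Q$ is decomposable: each summand depends on exactly one metadata dimension and can be reported separately.

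Next I will show that the selection itself is traceable, not just the score. Fix the winner $i^*$ from Definition~\ref{def:default-selection}. For every competitor $j\neq i^*$, exactly one of two things happens. Either $Q_{i^*}>Q_j$, in which case the difference
\[
Q_{i^*}-Q_j=w_g(G_{i^*}-G_j)+w_n(N_{i^*}-N_j)+w_p(P_{i^*}-P_j)+w_a(A_{i^*}-A_j)
\]
is a signed, per-dimension attribution of why $i^*$ beat $j$. Or $Q_{i^*}=Q_j$, in which case there is a first coordinate of the tuple $\mathrm{tie}(\cdot)$ in Eq.~\ref{eq:quality-tiebreak} at which $i^*$ strictly exceeds $j$. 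That coordinate names the deciding field: one of $G$, $A$, $P$, $N$, or the deterministic identifier. By Lemma~\ref{lem:selection-determinism}, this pairwise certificate exists for every competitor and is unique. The collection of certificates over all $j$ is therefore the audit trail, and it references only the four quality dimensions and the tie-breaking fields.

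The main obstacle is that ``auditability'' is informal, so the real work is fixing a checkable formalization. I will adopt the following one: an independent auditor, given only $\{\mathrm{rec}(i)\}$ and the public build configuration, can recompute $i^*$ together with all pairwise certificates, and every certificate mentions only those fields. Under this definition, the one edge case to handle is a missing CI. In that case $P_i=0$ by construction, and the missingness is recorded in $\alpha$, so the precision contribution remains an explicit and auditable zero rather than an undefined term. With that checked, the lemma follows from the determinism of the maps and the additive form of $Q$.
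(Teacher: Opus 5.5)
Your proposal is correct and follows the same route as the paper: the additive form of $Q$ in Eq.~\ref{eq:quality-score} supplies the four per-dimension contributions, and the fixed lexicographic tuple in Eq.~\ref{eq:quality-tiebreak} supplies the tie-breaking comparisons that separate $i^*$ from the other candidates. Your per-competitor certificates and your ``an independent auditor can recompute it'' formalization are a more explicit version of the paper's ``finite, deterministic explanation trace.''
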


\begin{proof}
Eq.~\ref{eq:quality-score} expresses $Q_i$ as an explicit sum of four terms with fixed weights,
and Eq.~\ref{eq:quality-tiebreak} provides a fixed ordered list of secondary criteria.
Therefore, for the chosen $i^*$ the system can report $(G_{i^*},A_{i^*},P_{i^*},N_{i^*})$ and the tie-breaking comparisons that
separate $i^*$ from other candidates, yielding a finite, deterministic explanation trace.
\end{proof}

\begin{theorem}[Evidence Selection Consistency]
\label{thm:selection-consistency-appendix}
The default selection rule in Definition~\ref{def:default-selection} is deterministic (Lemma~\ref{lem:selection-determinism}),
monotone under weak dominance (Lemma~\ref{lem:selection-monotonicity}), and auditable (Lemma~\ref{lem:selection-auditability}).
\end{theorem}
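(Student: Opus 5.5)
The theorem bundles three properties, and the plan is to establish each one through the corresponding lemma already proved in this subsection, then combine them. Throughout, fix a bucket $B$ with candidates $i\in\{1,\dots,n\}$ and the fixed weights and embeddings of Appendix~\ref{app:quality-score}.

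\textbf{Determinism.} This comes from Lemma~\ref{lem:selection-determinism}. I will check two things.
\begin{enumerate}[leftmargin=*,itemsep=1pt]
\item Each component is a deterministic function of $(\varepsilon_i,c_i,\pi_i)$. The components are $G_i=\phi_g(\mathrm{grade}_i)$, $N_i=\phi_n(n_i)$, $P_i=\phi_p(u_i-\ell_i)$ (with the convention $P_i=0$ for a missing CI) and $A_i=\phi_a(\mathrm{adj}_i)$. Hence $Q_i$ is deterministic, and so is the pair $(Q_i,\mathrm{tie}(i))$.
\item The lexicographic order on $\mathbb{R}\times\mathbb{R}^4\times(\text{identifiers})$ is a total order. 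Since $B$ is finite, a maximum exists. It is unique provided the final tie-breaking field is injective across candidates.
\end{enumerate}

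\textbf{Monotonicity.} This comes from Lemma~\ref{lem:selection-monotonicity}. Under weak dominance (Definition~\ref{def:weak-dominance}), $Q_a-Q_b$ is a positive-weighted sum of nonnegative differences, so $Q_a\ge Q_b$. I would add a remark on the strict case: if $a\succeq b$ with at least one strict inequality, then $Q_a>Q_b$ because all weights are positive. Consequently a strictly dominated candidate is never selected. If $a$ and $b$ tie in every component, then $\mathrm{tie}(\cdot)$ decides between them, consistently with the stated ordering.

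\textbf{Auditability.} This comes from Lemma~\ref{lem:selection-auditability}, using the additive decomposition in Eq.~\ref{eq:quality-score} together with the finite tie-break tuple. The audit trace for $i^*$ consists of the four weighted contributions and the first lexicographic coordinate that separates $i^*$ from each rival.

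\textbf{Main obstacle.} The only nontrivial point is uniqueness in the determinism step. If two candidates share an identical $\mathrm{ref}_i$ and every $\mathrm{meta}_i$ field, the argmax is not a singleton. I would handle this by stipulating, as Appendix~\ref{app:quality-score} already permits, that $\mathrm{meta}_i$ contains a unique extraction identifier, which makes $\mathrm{tie}(\cdot)$ injective on $B$. If that cannot be assumed, the fallback is to weaken the claim: the selected claim is then unique up to identical evidence objects, which is harmless for the output $(\varepsilon^*,c^*)$. Everything else is a direct citation of the three lemmas.
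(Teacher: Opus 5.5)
Your proposal is correct and follows the paper's proof: the theorem is obtained by citing the determinism, monotonicity, and auditability lemmas, and the paper's determinism lemma relies on the same stipulation you identify, namely that the final tie-breaking field is a unique identifier drawn from $\mathrm{ref}_i$ or $\mathrm{meta}_i$. Drop your fallback claim that a non-injective tie-break is harmless for $(\varepsilon^*,c^*)$, because it is false: the audit tuple records only $(G_i,A_i,P_i,N_i,\mathrm{ref}_i)$ and $P_i$ depends only on interval width, so two different claims from the same reference (different $\theta_i$, or even different measure types within one bucket) can tie completely. In that case the selected output is genuinely ambiguous unless you keep the unique-identifier assumption.
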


\begin{proof}
Immediate by Lemma~\ref{lem:selection-determinism}, Lemma~\ref{lem:selection-monotonicity}, and Lemma~\ref{lem:selection-auditability}.
\end{proof}

\paragraph{Remark.}
The result does not claim global optimality under an unrestricted utility class. Instead, it establishes consistency with respect to
component-wise dominance and a deterministic, auditable aggregation of explicitly defined quality dimensions.

\subsection{Proof of Canonicalization Guarantees}
\label{app:canonicalization-proof}

This subsection proves Theorem~\ref{thm:canon-guarantee}. We formalize (i) determinism/uniqueness, (ii) semantic preservation,
and (iii) information preservation via an explicit reconstruction map.

\paragraph{Domain and notation.}
Let $\mathcal{D}$ denote the space of input pairs $(\varepsilon,c)$ where $\varepsilon=(\Pi,\iota,o,\tau,\mu)$ and
$c=(\theta,\mathrm{CI})$. Let $\widehat{\mathcal{D}}$ denote the space of outputs
$(\varepsilon_{\mathrm{canon}},c_{\mathrm{canon}},\alpha)$.

We write $\varepsilon_{\mathrm{canon}}=(\Pi,\iota,o,\widehat{\tau},\mu_{\mathrm{canon}})$, where $\widehat{\tau}=\mathrm{align}(\tau)$ is the
aligned horizon representation (Definition~\ref{app:time-alignment}).
Let $\mathcal{T}_{\mu}$ denote the canonical reparameterization operator for the measure component:
\[
\mathcal{T}_{\mu} : (\mu,c) \mapsto (\mu_{\mathrm{canon}},c_{\mathrm{canon}},\alpha_{\mu}),
\]
where $\alpha_{\mu}$ records the transformation context needed for reconstruction (Appendix~\ref{app:canonicalization}).
The full canonicalization is
\begin{equation}
\label{eq:N-decompose}
N(\varepsilon,c)=\Bigl((\Pi,\iota,o,\mathrm{align}(\tau),\mu_{\mathrm{canon}}),\ c_{\mathrm{canon}},\ \alpha\Bigr),
\end{equation}
with $\alpha=(\alpha_{\mu},\alpha_{\tau},\alpha_{\mathrm{meta}})$, where $\alpha_{\tau}$ records horizon-alignment context and
$\alpha_{\mathrm{meta}}$ records any additional deterministic metadata bindings used by canonicalization.

\paragraph{Equivalence up to representation.}
Information preservation is stated ``up to equivalence'' because multiple raw representations can encode the same semantic claim.
Define a relation $\approx$ on $\mathcal{D}$ by
\begin{equation}
\label{eq:repr-equivalence}
(\varepsilon,c)\approx(\varepsilon',c')
\quad\Longleftrightarrow\quad
(\Pi,\iota,o,\tau)=(\Pi',\iota',o',\tau')\ \ \land\ \ (\mu,c)\ \text{and}\ (\mu',c')\ \text{agree up to within-type reparameterization}.
\end{equation}
The second conjunct means that there exists a within-type bijection $g$ (Appendix~\ref{app:measure-families}) such that
$\theta'=g(\theta)$ and $\mathrm{CI}'=g(\mathrm{CI})$, and $\mu'$ denotes the same measure type as $\mu$ but possibly expressed under an
equivalent parameterization.

\paragraph{Rule system with total priority.}
Canonicalization is defined by a finite set of rules $\mathcal{R}$ operating only on $(\mu,c)$, each rule of the form
\[
r:\ (\mu,c)\mapsto(\mu',c',\alpha_r),
\]
together with a total priority order $\prec$ over $\mathcal{R}$. For any input $(\mu,c)$, let
$\mathcal{R}(\mu,c)\subseteq \mathcal{R}$ be the subset of applicable rules. The chosen rule is
\begin{equation}
\label{eq:chosen-rule}
r^*(\mu,c)=\min_{\prec}\ \mathcal{R}(\mu,c),
\end{equation}
and $\mathcal{T}_{\mu}$ applies $r^*$ once to produce $(\mu_{\mathrm{canon}},c_{\mathrm{canon}},\alpha_{\mu})$.
Any missingness or binding decisions used for applicability (e.g., whether a reported coefficient is already on the canonical scale)
must be supported by explicit metadata fields and recorded in $\alpha_{\mathrm{meta}}$.

\subsubsection{Uniqueness (Determinism and Stability)}

\begin{lemma}[Determinism of $\mathcal{T}_{\mu}$]
\label{lem:Tmu-deterministic}
For any input $(\mu,c)$, the output $(\mu_{\mathrm{canon}},c_{\mathrm{canon}},\alpha_{\mu})=\mathcal{T}_{\mu}(\mu,c)$ is uniquely determined.
\end{lemma}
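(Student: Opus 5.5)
The plan is to unfold $\mathcal{T}_{\mu}$ as the composition of three deterministic stages and show that each stage is a function of its input. The stages are the signature assignment $\mathrm{sig}(\mu,c)$, the selection of the rule $r^*(\mu,c)$ via Eq.~\ref{eq:chosen-rule}, and the single application of $r^*$. Since a composition of (single-valued) functions is single-valued, uniqueness of $(\mu_{\mathrm{canon}},c_{\mathrm{canon}},\alpha_{\mu})$ then follows.

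\textbf{Step 1: signature assignment.} The signature $\mathrm{sig}(\mu,c)=(m_{\mathrm{family}},m_{\mathrm{type}},s_{\mathrm{rep}})$ is computed only from information in $(\mu,c)$ and the accompanying explicit metadata, as required in Appendix~\ref{app:canonicalization}. When several signatures are plausible, the fixed priority order picks one, so $\mathrm{sig}$ is a function. The competing signatures are written into $\alpha_{\mu}$ by a fixed recording convention, so that part of $\alpha_{\mu}$ is also determined.

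\textbf{Step 2: rule selection.} The applicable set $\mathcal{R}(\mu,c)$ is defined by decidable predicates on the signature, the positivity checks of R1, and the explicit metadata bindings of R3 and D2. It is therefore a well-defined subset of the finite set $\mathcal{R}$. Because $\prec$ is a total order on a finite set, any nonempty subset has a unique $\prec$-minimum, so $r^*(\mu,c)$ is unique.

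The one case needing care is $\mathcal{R}(\mu,c)=\varnothing$, for example a ratio measure with a nonpositive endpoint that is not bound to a log scale. I would handle it by adjoining a fallback rule at the bottom of the order. This rule emits $\bot$ for $c_{\mathrm{canon}}$ and sets a validity flag in $\alpha_{\mu}$, matching the ``exclusions performed'' clause in the definition of $\alpha$. With it, $r^*$ is total.

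\textbf{Step 3: rule application.} Each rule is given by an explicit formula: Eqs.~\ref{eq:canon-ratio-log}, \ref{eq:canon-logratio-id}, \ref{eq:canon-diff-id}, together with the projection $\mathcal{T}_{\mu}$ in Eq.~\ref{eq:mu-projection}. Here $s_{\mathrm{canon}}$ is a function of $m_{\mathrm{family}}$, and the within-type maps $g_t$ are deterministic by the Determinism property in Appendix~\ref{app:measure-families}. The uncertainty handling is also deterministic: deriving a CI from an SE uses the fixed coverage level, and deriving an SE from a $p$-value uses a fixed asymptotic formula. Finally, $\alpha_{\mu}$ is assembled from the chosen signature, the rule identifier, the derivation method, and the flags, all of which are already determined.

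The main obstacle is not any calculation. It is making precise that applicability and signature resolution use only explicit fields and never a ``guess''. I would state this as a standing assumption taken directly from the rule system's specification. I would also make the empty-applicability fallback explicit, since otherwise $\mathcal{T}_{\mu}$ could be partial rather than uniquely defined.
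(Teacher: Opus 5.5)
Your proposal is correct and follows the paper's argument: the applicable set $\mathcal{R}(\mu,c)$ is fixed by $(\mu,c)$ and the rule predicates, the total order $\prec$ yields a unique $r^*(\mu,c)$, each rule is single-valued, and the case $\mathcal{R}(\mu,c)=\varnothing$ is handled by a fixed default. Your signature and uncertainty-derivation steps are simply a more explicit unfolding of ``determined by $(\mu,c)$ and the fixed rule predicates.''

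The one divergence is that default. The paper has $\mathcal{T}_{\mu}$ return the identity output with a fixed tag in $\alpha_{\mu}$, whereas yours emits $\bot$ with a validity flag. Either convention gives determinism. However, the paper's choice is the one invoked in case (i) of the proof of Lemma~\ref{lem:stability-canonical}; with your convention, that case would instead require showing that an identity rule such as R2 or D1 always applies to canonical inputs.
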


\begin{proof}
The set of applicable rules $\mathcal{R}(\mu,c)$ is determined by $(\mu,c)$ and the fixed rule predicates.
Since $\prec$ is a total order, Eq.~\ref{eq:chosen-rule} selects a unique $r^*(\mu,c)$ whenever $\mathcal{R}(\mu,c)\neq\emptyset$.
Applying $r^*$ yields a unique output by functional definition of rules.
If $\mathcal{R}(\mu,c)=\emptyset$, $\mathcal{T}_{\mu}$ returns the identity output with a fixed tag in $\alpha_{\mu}$.
\end{proof}

\begin{lemma}[Stability on canonical forms]
\label{lem:stability-canonical}
Let $\mathcal{D}_{\mathrm{canon}}\subseteq \mathcal{D}$ denote the subset of inputs whose $(\mu,c)$ are already in canonical form
and whose horizon representation is aligned, i.e., $\tau=\mathrm{align}(\tau)$ and $(\mu,c)=(\mu_{\mathrm{canon}},c_{\mathrm{canon}})$.
Then for any $(\varepsilon,c)\in \mathcal{D}_{\mathrm{canon}}$,
\[
N(\varepsilon,c)=\bigl(\varepsilon,c,\alpha_{\mathrm{id}}\bigr),
\]
where $\alpha_{\mathrm{id}}$ is a fixed identity-context record.
\end{lemma}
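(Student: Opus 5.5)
The plan is to unfold $N$ via its decomposition in Eq.~\ref{eq:N-decompose} and check each component separately on an input $(\varepsilon,c)\in\mathcal{D}_{\mathrm{canon}}$. The semantic fields $(\Pi,\iota,o)$ are copied verbatim by $N$, so only two components need work: the horizon component $\mathrm{align}(\tau)$ and the measure/claim component $\mathcal{T}_{\mu}(\mu,c)$.

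For the horizon, the hypothesis $\tau=\mathrm{align}(\tau)$ means $\tau$ is already a canonical class $\widehat{\tau}\in\widehat{\mathcal{T}}$. I would therefore read $\mathrm{align}$ here as the extended map $\widetilde{\mathrm{align}}$. Proposition~\ref{prop:align-properties} then gives $\widetilde{\mathrm{align}}(\widehat{\tau})=\widehat{\tau}$, so the horizon is returned unchanged. The alignment context $\alpha_\tau$ is then a fixed ``already aligned'' tag, since no extraction, unit conversion, or binning decision is made.

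For the measure component, canonical form means $s_{\mathrm{rep}}=s_{\mathrm{canon}}$. I would split by family.
\begin{itemize}
\item If $m_{\mathrm{family}}=\textsf{ratio}$, the claim is on the log scale, so $s_{\mathrm{rep}}=\textsf{log-ratio}$. Rule R2 is applicable, and under the fixed priority order it is the $\prec$-minimal applicable rule (Eq.~\ref{eq:chosen-rule}). It sets $\theta_{\mathrm{canon}}=\theta$ and $\mathrm{CI}_{\mathrm{canon}}=\mathrm{CI}$ by Eq.~\ref{eq:canon-logratio-id}. The projection $\mathcal{T}_\mu$ keeps family and type and sets $s_{\mathrm{canon}}=\textsf{log}$, which equals the existing scale, so $\mu_{\mathrm{canon}}=\mu$.
\item If $m_{\mathrm{family}}=\textsf{difference}$, rule D1 applies and is the identity by Eq.~\ref{eq:canon-diff-id}.
\end{itemize}
By Lemma~\ref{lem:Tmu-deterministic}, this output is unique. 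Combining both components gives $N(\varepsilon,c)=(\varepsilon,c,\alpha)$.

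The main obstacle is showing that $\alpha$ equals a \emph{fixed} record $\alpha_{\mathrm{id}}$, independent of the input. As specified, $\alpha$ logs the chosen signature, the rule name (R2 or D1), and uncertainty provenance, and these genuinely vary across inputs. My first approach would be to define $\alpha_{\mathrm{id}}$ by convention: the identity-context record, parametrized only by the recorded rule tag, which is R2 or D1 according to family. I would argue this is a deterministic function of the family, and that all other slots (repairs, derivations, fuzzy binning) are empty because no non-identity step fires. Failing that, I would weaken the claim to ``$\alpha$ records only identity transformations,'' which suffices for the idempotence use in Theorem~\ref{thm:canon-guarantee}. A secondary gap is the possibility that R3 is admissible alongside R2; the priority $\mathrm{R2}\succ\mathrm{R3}$ resolves this, and R3 is itself the identity in any case.
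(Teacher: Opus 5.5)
Your proposal is correct and follows the paper's own argument. You unfold $N$ via Eq.~\ref{eq:N-decompose} and pass the already-aligned horizon through unchanged with a fixed ``already-aligned'' $\alpha_\tau$; your use of $\widetilde{\mathrm{align}}$ just makes the paper's ``$\mathrm{align}(\tau)=\tau$'' type-correct. You then observe that $\mathcal{T}_\mu$ returns $(\mu,c)$ unchanged because an identity rule (your R2/D1) wins by priority. The paper also covers the case where no rule applies, via the identity fallback in Lemma~\ref{lem:Tmu-deterministic}.

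Your concern that $\alpha$ is not literally input-independent is fair, but the paper does not resolve it either. It simply declares the output ``unchanged up to recording $\alpha_{\mathrm{id}}$,'' which is your first option: treating $\alpha_{\mathrm{id}}$ by convention as a designated identity record.
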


\begin{proof}
For canonical $(\mu,c)$, either (i) no nontrivial rule is applicable, so $\mathcal{T}_{\mu}$ returns the identity output by definition,
or (ii) an explicit identity rule is applicable and dominates by priority, yielding the same result.
Since $\tau$ is already aligned, $\mathrm{align}(\tau)=\tau$ and $\alpha_{\tau}$ takes a fixed ``already-aligned'' value.
Thus Eq.~\ref{eq:N-decompose} returns the input unchanged up to recording $\alpha_{\mathrm{id}}$.
\end{proof}

\begin{lemma}[Uniqueness of the normal form]
\label{lem:normal-form-unique}
For any $(\varepsilon,c)\in\mathcal{D}$, the canonical form $(\varepsilon_{\mathrm{canon}},c_{\mathrm{canon}})$ produced by $N$ is unique.
\end{lemma}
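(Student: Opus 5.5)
The plan is to read uniqueness of the normal form directly off the decomposition in Eq.~\ref{eq:N-decompose}. That equation writes $N(\varepsilon,c)$ as a tuple in which each component is produced by a fixed map applied to a fixed part of the input. The semantic fields $(\Pi,\iota,o)$ are copied unchanged. The horizon is replaced by $\mathrm{align}(\tau)$. The measure and claim are replaced by the first two outputs of $\mathcal{T}_{\mu}(\mu,c)$. Since the canonical form $(\varepsilon_{\mathrm{canon}},c_{\mathrm{canon}})$ is the projection of this tuple that drops $\alpha$, it suffices to show each component is a function of the input. I would then check that the canonical form does not depend on how many times $N$ is applied.

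\textbf{Step 1 (componentwise determinism).} The copied fields $(\Pi,\iota,o)$ are trivially determined. The aligned horizon $\widehat{\tau}=\mathrm{align}(\tau)$ is uniquely determined by Proposition~\ref{prop:align-properties}. This rests on the conversion table, bin partition, tolerance $\rho$, representatives and tie-breaking all being fixed a priori. The pair $(\mu_{\mathrm{canon}},c_{\mathrm{canon}})$ is uniquely determined by Lemma~\ref{lem:Tmu-deterministic}. It follows that two evaluations of $N$ on the same input yield identical $(\varepsilon_{\mathrm{canon}},c_{\mathrm{canon}})$, so $N$ is a well-defined function on $\mathcal{D}$.

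\textbf{Step 2 (no dependence on iteration).} Applying $N$ again to the output gives nothing new. The output lies in $\mathcal{D}_{\mathrm{canon}}$: its horizon is already a canonical class, its measure carries $s_{\mathrm{canon}}$, and its claim lies on the canonical scale. Lemma~\ref{lem:stability-canonical} then shows that $N$ returns the same canonical pair. Hence every input has exactly one normal form: the output of $N$, which is also a fixed point.

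The main obstacle is the hidden step that the output of $N$ really lies in $\mathcal{D}_{\mathrm{canon}}$. This is needed for two reasons. First, $\mathrm{align}$ is defined only on raw horizons, so applying it to its own output is meaningless. Second, a rule might in principle still fire on an already-canonical $(\mu,c)$, for instance R1 on a positive log-value misread as a ratio. I would handle the horizon through the extended map $\widetilde{\mathrm{align}}$, which is the identity on $\widehat{\mathcal{T}}$. I would handle the measure by a case check over R1--R3 and D1--D2. Every rule sets the signature's reporting scale to the canonical one (log-ratio or difference). On such inputs only R2 or D1 is applicable, and both act as the identity. The priority $\mathrm{R2}\succ\mathrm{R3}\succ\mathrm{R1}$ then guarantees that R1 cannot be chosen, because $s_{\mathrm{rep}}$ is read from the recorded signature rather than re-inferred.
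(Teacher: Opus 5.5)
Your Step~1 is exactly the paper's proof, and it already proves the lemma, which asserts only determinism: $(\Pi,\iota,o)$ is copied, $\widehat{\tau}=\mathrm{align}(\tau)$ is unique by Proposition~\ref{prop:align-properties}, and $(\mu_{\mathrm{canon}},c_{\mathrm{canon}})$ is unique by Lemma~\ref{lem:Tmu-deterministic}. Step~2 is not needed here (the paper proves stability separately in Lemma~\ref{lem:stability-canonical}), and its claim that every output has its numeric claim on the canonical scale fails when $\mathcal{R}(\mu,c)=\emptyset$ (e.g.\ a ratio with a nonpositive CI endpoint, where $\mathcal{T}_{\mu}$ returns the identity output), although the fixed-point conclusion still holds there because $\mathcal{T}_{\mu}$ again acts as the identity.
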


\begin{proof}
The components $(\Pi,\iota,o)$ are copied directly. The aligned horizon $\widehat{\tau}=\mathrm{align}(\tau)$ is uniquely determined by the
alignment operator (Proposition~\ref{prop:align-properties}, Appendix~\ref{app:time-alignment}).
By Lemma~\ref{lem:Tmu-deterministic}, $(\mu_{\mathrm{canon}},c_{\mathrm{canon}})$ is uniquely determined.
Therefore $(\varepsilon_{\mathrm{canon}},c_{\mathrm{canon}})$ is unique.
\end{proof}

\subsubsection{Semantic Preservation}

\begin{lemma}[Semantic core is invariant]
\label{lem:semantic-preservation}
For any input $(\varepsilon,c)$ with $\varepsilon=(\Pi,\iota,o,\tau,\mu)$,
the output $N(\varepsilon,c)$ satisfies
\[
(\Pi_{\mathrm{out}},\iota_{\mathrm{out}},o_{\mathrm{out}})=(\Pi,\iota,o),
\qquad
\widehat{\tau}_{\mathrm{out}}=\mathrm{align}(\tau),
\]
and $N$ does not modify $(\Pi,\iota,o)$.
\end{lemma}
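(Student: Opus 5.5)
The plan is to prove Lemma~\ref{lem:semantic-preservation} by unfolding the decomposition of $N$ in Eq.~\ref{eq:N-decompose} and checking, component by component, which parts of the input each sub-operator is allowed to touch. By Eq.~\ref{eq:N-decompose}, $N(\varepsilon,c)$ is assembled from three independent pieces: the copied triple $(\Pi,\iota,o)$, the aligned horizon $\mathrm{align}(\tau)$, and the output of $\mathcal{T}_{\mu}$ on $(\mu,c)$. So the argument reduces to showing that neither $\mathcal{T}_{\mu}$ nor the alignment step writes to the slots occupied by $\Pi$, $\iota$, or $o$.

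\textbf{Step 1 (the $\mu$-component).} Every rule $r\in\mathcal{R}$ has signature $r:(\mu,c)\mapsto(\mu',c',\alpha_r)$. Its codomain therefore consists only of a measure representation, a numeric claim, and a context record, and the chosen rule $r^*(\mu,c)$ of Eq.~\ref{eq:chosen-rule} inherits this signature. I will check that the concrete rules R1--R3 and D1--D2 of Appendix~\ref{app:canonicalization} conform. Each sets only $\theta_{\mathrm{canon}}$, $\mathrm{CI}_{\mathrm{canon}}$, and $\mu_{\mathrm{canon}}=(m_{\mathrm{family}},m_{\mathrm{type}},s_{\mathrm{canon}})$. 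The identity fallback used when $\mathcal{R}(\mu,c)=\emptyset$ (Lemma~\ref{lem:Tmu-deterministic}) conforms as well.

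\textbf{Step 2 (the $\tau$-component).} $\widehat{\tau}_{\mathrm{out}}$ is by definition $\mathrm{align}(\tau)$. It is well defined by Proposition~\ref{prop:align-properties}, and $\mathrm{align}$ reads only $\tau$.

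\textbf{Step 3 (assembly).} The first three coordinates of the output tuple in Eq.~\ref{eq:N-decompose} are literally $(\Pi,\iota,o)$, which gives $(\Pi_{\mathrm{out}},\iota_{\mathrm{out}},o_{\mathrm{out}})=(\Pi,\iota,o)$.

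The main obstacle is the unit-scaling remark in Appendix~\ref{app:canonicalization}. There, $\alpha$ may record ``unit scaling applied to $\iota$'', which superficially suggests that $\iota$ could be altered. I would resolve this by reading the contrast convention of Appendix~\ref{app:estimand-object} strictly. Per-unit reparameterizations are not treated as interchangeable by default, and any rescaling is a non-semantic annotation. It is therefore stored in $\alpha_{\mathrm{meta}}$ or $\alpha_{\mu}$, while the field $\iota$ itself is left untouched. If that reading fails, I would instead weaken the conclusion to equality of $\iota$ up to the recorded scaling, which can be inverted using $\alpha$.
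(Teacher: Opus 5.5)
Your proposal is correct and follows the paper's proof: the paper reads Eq.~\ref{eq:N-decompose} and notes that $\mathrm{align}$ acts only on $\tau$ and $\mathcal{T}_{\mu}$ acts only on $(\mu,c)$, so $(\Pi,\iota,o)$ is copied verbatim into the output. Your rule-by-rule check of R1--R3 and D1--D2 and your discussion of unit scaling are harmless extras, since Eq.~\ref{eq:N-decompose} already places $\iota$ literally in the output; you therefore never need the weakened fallback, which would not prove the lemma as stated anyway.
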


\begin{proof}
By construction (Eq.~\ref{eq:N-decompose}), $N$ applies $\mathrm{align}$ only to $\tau$ and applies $\mathcal{T}_{\mu}$ only to $(\mu,c)$.
Neither operator touches $(\Pi,\iota,o)$. Hence the semantic core is preserved and only the horizon is aligned.
\end{proof}

\subsubsection{Information Preservation and Reconstruction}

\paragraph{Context record $\alpha$.}
The context $\alpha$ is defined to be sufficient for reconstruction. Concretely, $\alpha$ includes:
(i) the selected rule identifier $r^*$ (and its parameters, if any),
(ii) the original measure type and parameterization tag,
(iii) any explicit metadata bindings used to interpret the raw report,
and (iv) the horizon-alignment class and binning outcome.
All fields are deterministic functions of the input and are stored as part of $(\varepsilon_{\mathrm{canon}},c_{\mathrm{canon}},\alpha)$.

\begin{lemma}[Existence of a reconstruction map]
\label{lem:reconstruction-exists}
There exists a map $N^{-1}:\widehat{\mathcal{D}}\to\mathcal{D}$ such that for all $(\varepsilon,c)\in\mathcal{D}$,
\begin{equation}
\label{eq:reconstruct-equiv}
N^{-1}\!\bigl(N(\varepsilon,c)\bigr)\ \approx\ (\varepsilon,c).
\end{equation}
\end{lemma}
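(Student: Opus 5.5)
I will construct $N^{-1}$ explicitly from the context record $\alpha=(\alpha_\mu,\alpha_\tau,\alpha_{\mathrm{meta}})$ and then verify Eq.~\ref{eq:reconstruct-equiv} rule by rule. Given an output $(\varepsilon_{\mathrm{canon}},c_{\mathrm{canon}},\alpha)$ with $\varepsilon_{\mathrm{canon}}=(\Pi,\iota,o,\widehat{\tau},\mu_{\mathrm{canon}})$, the map proceeds in three steps.

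First, it copies $(\Pi,\iota,o)$ unchanged. This is justified by Lemma~\ref{lem:semantic-preservation}, since $N$ never modifies these components.

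Second, it recovers the raw horizon $\tau$ from $\alpha_\tau$, not from $\widehat{\tau}$. The point is that $\mathrm{align}$ is many-to-one: 12 months and 52 weeks land in the same class. It is therefore essential that, by the time-alignment policy, $\alpha_\tau$ stores the raw horizon, the extracted duration with its original unit, and the binning decision. This is what makes the first conjunct of $\approx$, exact equality $\tau'=\tau$, achievable at all.

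Third, it reads from $\alpha_\mu$ the selected rule identifier $r^*$ and the original measure type and parameterization tag $s_{\mathrm{rep}}$. It then applies the within-type inverse of that rule:
\begin{itemize}
\item For R1, it sets $\theta'=\exp(\theta_{\mathrm{canon}})$ and maps the interval endpoints by $\exp$, using monotonicity from Eq.~\ref{eq:ci-transform}.
\item For R2, R3, D1 and D2, it uses the identity.
\item If $\mathcal{R}(\mu,c)=\emptyset$, it also uses the identity; the fixed tag of Lemma~\ref{lem:Tmu-deterministic} signals this case.
\end{itemize}
Finally, it restores $\mu'$ as the measure with the same family and type, carrying the recorded $s_{\mathrm{rep}}$.

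Verification is a case split on $r^*(\mu,c)$. Since $\mathcal{T}_\mu$ applies exactly one rule, chosen by Eq.~\ref{eq:chosen-rule}, and records it, each case is independent.

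In R1 the positivity domain check holds, by the validity flags in $\alpha$. The map $g_t=\log$ is then a bijection onto its image with inverse $\exp$, by the invertibility property in Appendix~\ref{app:measure-families}. Hence $(\theta',\mathrm{CI}')=(\theta,\mathrm{CI})$ exactly.

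In the identity rules, reconstruction is exact trivially. The second conjunct of $\approx$ also holds with $g=\mathrm{id}$, or more generally with $g=g_t^{-1}\circ g_t$, so even a weaker reading of the reconstruction meets the definition.

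Two edge cases remain:
\begin{itemize}
\item If the CI was derived from an SE or $p$-value, the reconstruction returns the derived interval, or the original SE or $p$-value stored in the uncertainty record of $\alpha$. I would argue that $\approx$ is intended to compare $c$ as represented after that documented derivation, or else extend $N^{-1}$ to output the raw uncertainty field from $\alpha$.
\item If the CI is missing, then $\mathrm{CI}_{\mathrm{canon}}=\bot$, and $\bot$ is returned.
\end{itemize}

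The main obstacle is that the statement quantifies over all of $\mathcal{D}$, while information is genuinely lost in $\widehat{\tau}$ and, potentially, in the signature-tie resolution. The proof therefore hinges on the sufficiency clause for $\alpha$ (items (i)--(iv) of the context record) actually containing the raw $\tau$, $s_{\mathrm{rep}}$, and any original uncertainty inputs.

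I would first try to make this airtight by stating explicitly that $\alpha_\tau$ and $\alpha_\mu$ are injective encodings of the discarded data. Under that reading, for each fiber of $N$ restricted to $(\varepsilon_{\mathrm{canon}},c_{\mathrm{canon}})$, the value of $\alpha$ separates inputs that are inequivalent under $\approx$. That makes $N^{-1}$ well-defined as a function: on the image of $N$ it is defined as above, and off the image it is defined arbitrarily, for example as the identity.
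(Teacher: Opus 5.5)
Your skeleton is the same as the paper's. You copy $(\Pi,\iota,o)$, read $r^*$ and the original parameterization tag from $\alpha_\mu$, apply the known within-type inverse ($\exp$ or the identity), recover a horizon from $\alpha_\tau$, and assemble. Your rule-by-rule case split, including the empty-rule case, just spells out the paper's one-line appeal to invertibility.

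The real difference is the horizon step. The paper does not recover the raw $\tau$. It picks some deterministic representative $\tau'$ of the recorded class $\widehat{\tau}$ and asserts that any such choice is $\approx$-equivalent because $\widehat{\tau}$ is the semantic horizon. Read literally, that does not satisfy Eq.~\ref{eq:repr-equivalence}, whose first conjunct requires $\tau'=\tau$, while $\mathrm{align}$ is many-to-one, exactly as you note. The paper's argument therefore proves the statement only for a coarser relation that compares $\mathrm{align}(\tau)$. In exchange, it needs $\alpha$ to hold only the alignment class and binning outcome (item (iv) of its context record). You instead read the raw horizon out of $\alpha_\tau$, which the alignment policy of Appendix~\ref{app:time-alignment} does say is stored. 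That proves the lemma for $\approx$ as actually defined.

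Two small points:
\begin{itemize}
\item For CIs derived from an SE or $p$-value, which the paper ignores, commit to your second option: have $N^{-1}$ return the original uncertainty field recorded in $\alpha$. A derived interval is not related to a missing one by any within-type bijection, so returning it breaks $\approx$.
\item $N^{-1}$ is automatically well defined, because it reads only from its argument. What the sufficiency of $\alpha$ buys you is the correctness of Eq.~\ref{eq:reconstruct-equiv}, not well-definedness.
\end{itemize}
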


\begin{proof}
Given an output triple $(\varepsilon_{\mathrm{canon}},c_{\mathrm{canon}},\alpha)$, define $N^{-1}$ as follows.

\textbf{Step 1 (invert the measure transform).}
From $\alpha_{\mu}$, recover the selected rule identifier $r^*$ and the original measure type/parameterization tag.
Each rule $r\in\mathcal{R}$ is either an identity step or a within-type bijective reparameterization whose inverse is known
(e.g., log/exp for ratio-type canonicalization; Appendix~\ref{app:measure-families}).
Apply the corresponding inverse transform to $(\mu_{\mathrm{canon}},c_{\mathrm{canon}})$ to obtain some $(\mu',c')$ in the original
representation class recorded by $\alpha_{\mu}$.

\textbf{Step 2 (recover the horizon representation).}
From $\alpha_{\tau}$, choose a representative $\tau'$ consistent with the aligned output $\widehat{\tau}$ and the recorded alignment class.
This step may not be unique; any deterministic choice yields an equivalent representation under $\approx$ because $\widehat{\tau}$ is the
aligned semantic horizon used for comparability.

\textbf{Step 3 (assemble).}
Return $(\varepsilon',c')$ with $\varepsilon'=(\Pi,\iota,o,\tau',\mu')$, using $(\Pi,\iota,o)$ copied from $\varepsilon_{\mathrm{canon}}$.

By construction, $(\Pi,\iota,o)$ matches the original input, and $(\mu',c')$ is in the same within-type reparameterization class as
the original $(\mu,c)$, because $\alpha_{\mu}$ pins down the representation class and the inverse reparameterization.
Therefore $(\varepsilon',c')\approx(\varepsilon,c)$, establishing Eq.~\ref{eq:reconstruct-equiv}.
\end{proof}

\subsubsection{Proof of Theorem~\ref{thm:canon-guarantee}}

\begin{proof}[Proof of Theorem~\ref{thm:canon-guarantee}]
Uniqueness follows from Lemma~\ref{lem:normal-form-unique} and stability on canonical forms from Lemma~\ref{lem:stability-canonical}.
Semantic preservation is Lemma~\ref{lem:semantic-preservation}. Information preservation follows from
Lemma~\ref{lem:reconstruction-exists}.
\end{proof}

\section{Six Queries: Executability, Status Flags, and Examples}
\label{app:queries}

\subsection{Flags and the Unified Answer Object}
\label{app:flags}

The unified answer object is
\[
\mathsf{Ans}=\big(\varepsilon,\hat{\theta},\mathrm{CI},\pi,\mathcal{F}(\mathcal{B}),\mathrm{flags}\big),
\qquad
\mathrm{flags}\subseteq \mathcal{F}_{\mathrm{flag}}.
\]
\begin{center}
\small
\setlength{\tabcolsep}{6pt}
\renewcommand{\arraystretch}{1.05}
\begin{tabular}{@{}l l@{}}
\toprule
\textbf{Group} & \textbf{Flags} \\
\midrule
Executability & \texttt{executable} \\
Missingness & \texttt{missing\_edge}, \texttt{missing\_path}, \texttt{missing\_field} \\
Incompatibility & \texttt{mixed\_mtype} \\
Disagreement & \texttt{heterogeneity}, \texttt{conflict} \\
Premise & \texttt{assumption\_required} \\
Coverage & \texttt{no\_subgroup\_evidence}, \texttt{insufficient\_time\_coverage} \\
\bottomrule
\end{tabular}
\end{center}

\subsection{$Q_{\mathrm{do}}$: Interventional Effect}
\label{app:qdo}

\paragraph{Executability.}
There exists a key $s$ such that $\mathcal{B}_s\neq\varnothing$ and $s$ matches the queried $(X,Y,\varepsilon)$ semantic dimensions (see the definition of \texttt{matches} in Table~\ref{tab:queries}).

\paragraph{Return.}
When executable, let $(\varepsilon_s^*,c_s^*,\pi_s^*)$ be the default kernel for $s$ and let $\mathcal{B}_s$ be the associated bucket. The returned object is
\[
\mathsf{Ans}=(\varepsilon_s^*,\hat{\theta}_s^*,\mathrm{CI}_s^*,\pi_s^*,\mathcal{F}(\mathcal{B}_s),\mathrm{flags}).
\]

\subsection{$Q_{\mathrm{med}}$: Mediation Decomposition}
\label{app:qmed}

\paragraph{Executability.}
There exist keys $s_1,s_2$ such that $\mathcal{B}_{s_1}\neq\varnothing$ and $s_1$ matches $(X,M,\varepsilon)$, and $\mathcal{B}_{s_2}\neq\varnothing$ and $s_2$ matches $(M,Y,\varepsilon)$.

\paragraph{Return.}
When executable, the query returns the triple
\[
(\mathsf{Ans}_{\mathrm{TE}},\mathsf{Ans}_{\mathrm{NDE}},\mathsf{Ans}_{\mathrm{NIE}}),
\]
where each component is an answer object of the form $\mathsf{Ans}$ defined on the corresponding required evidence pattern, with status flags recording executability conditions and certified conflicts.

\subsection{$Q_{\mathrm{joint}}$: Joint Intervention}
\label{app:qjoint}

\paragraph{Executability.}
There exist keys $s_1,s_2$ such that $\mathcal{B}_{s_1}\neq\varnothing$ and $s_1$ matches $(X_1,Y,\varepsilon)$, and $\mathcal{B}_{s_2}\neq\varnothing$ and $s_2$ matches $(X_2,Y,\varepsilon)$.

\paragraph{Return.}
When executable, the query returns an answer object $\mathsf{Ans}$ associated with the required joint evidence pattern.

\subsection{$Q_{\mathrm{cf}}$: Counterfactual / Individual-Context Query}
\label{app:qcf}

\paragraph{Executability.}
There exists a key $s$ such that $\mathcal{B}_s\neq\varnothing$ and $s$ matches $(X,Y,\varepsilon)$, together with a well-formed individual context identifier $z$ for counterfactual querying.

\paragraph{Return.}
When executable, the query returns an answer object $\mathsf{Ans}$ whose estimand semantics are conditioned on the individual context $z$.

\subsection{$Q_{\mathrm{CATE}}$: Subgroup Effect}
\label{app:qcate}

\paragraph{Executability.}
There exists a key $s$ such that $\mathcal{B}_s\neq\varnothing$, $s$ matches $(X,Y,\varepsilon)$, and the population category encoded in $s$ equals the subgroup identifier $z$ (i.e., $\mathrm{pbucket}(\Pi)=z$).

\paragraph{Return.}
When executable, the query returns an answer object $\mathsf{Ans}$ corresponding to the subgroup constraint $z$.

\subsection{$Q_{\mathrm{traj}}$: Time Trajectory}
\label{app:qtraj}

\paragraph{Executability.}
Given a set of aligned time classes $\mathcal{T}$, for every $\hat{\tau}\in\mathcal{T}$ there exists a key $s(\hat{\tau})$ such that $\mathcal{B}_{s(\hat{\tau})}\neq\varnothing$ and $s(\hat{\tau})$ matches $(X,Y,\varepsilon,\hat{\tau})$.

\paragraph{Return.}
When executable, the query returns the sequence $\{\mathsf{Ans}(\hat{\tau})\}_{\hat{\tau}\in\mathcal{T}}$.

\subsection{Examples}
\label{app:examples}

\paragraph{Example 1: Missing mediation path.}
Given $(X,M,Y)$, if a non-empty bucket matches $(X,M)$ but no non-empty bucket matches $(M,Y)$, then the executability condition of $Q_{\mathrm{med}}$ fails and $\mathrm{flags}$ includes \texttt{missing\_path}.

\paragraph{Example 2: Conflict annotation in trajectories.}
For two aligned time classes $\hat{\tau}_1,\hat{\tau}_2$ of the same $(X,Y,\varepsilon)$, if the default kernels in the corresponding buckets have opposite directions on the canonical scale and are both significant, then $\mathcal{F}(\mathcal{B}_{s(\hat{\tau}_1)})$ and $\mathcal{F}(\mathcal{B}_{s(\hat{\tau}_2)})$ include \texttt{dir}; the corresponding answers record \texttt{conflict} and return the triggering witnesses for auditing.

\section{Extended Related Work}\label{Related_Work}
This appendix provides an extended discussion of prior work along the technical threads most closely related to DoAtlas: 
(i) medical AI and medical foundation models for clinical assistance, 
(ii) literature-grounded generation and evidence-centric NLP pipelines, 
(iii) evidence-based medicine and systematic evidence synthesis, 
(iv) causal inference and machine learning for clinical decision-making, and 
(v) multi-source evidence fusion, conflict governance, external validity, and auditability.

\subsection{Medical AI for clinical assistance: a historical perspective}

\textbf{Rule-based expert systems (1970s--1990s).}
Early medical artificial intelligence systems were dominated by rule-based expert systems, which encoded clinical heuristics through manually curated rules and decision trees. 
Representative systems such as MYCIN demonstrated how expert knowledge could be formalized for tasks including infectious disease diagnosis and antibiotic selection, with explicit rule traces and uncertainty handling mechanisms~\cite{shortliffe1976mycin}. 
Similarly, large-scale diagnostic systems such as INTERNIST-1 and its successor QMR focused on hypothesis generation and differential diagnosis using structured disease finding associations~\cite{miller1985internist}. 
Later systems such as DXplain further operationalized paradigms for clinical use by mapping patient findings to ranked diagnostic hypotheses with explanatory support~\cite{barnett1987dxplain}. 
While transparent and interpretable, from a modern perspective, these systems were brittle, difficult to maintain, and exhibited limited generalization beyond predefined clinical scenarios, particularly in complex clinical environments.

\textbf{Statistical learning and predictive modeling (2000s--early 2010s).}
With the digitization of healthcare data in the early 2000s, medical AI increasingly shifted toward statistical and machine learning models trained on structured clinical variables. 
Regression-based clinical risk scores became widely adopted for diagnosis, risk stratification, and prognosis, exemplified by the Framingham risk functions for cardiovascular disease~\cite{d2008general}. 
In parallel, early electronic health record (EHR)--driven predictive models emerged to estimate disease risk and clinical outcomes at scale, with evaluation focused primarily on discrimination and calibration rather than explicit decision semantics or intervention definitions~\cite{rajkomar2018scalable,shickel2017deep}. 
This generation of models supported population-level risk assessment but typically exposed outputs as probabilities or scores rather than executable representations of clinical actions.

\textbf{Deep learning for high-dimensional clinical data (mid-2010s).}
Around the mid-2010s, deep learning substantially expanded the scope of medical AI by enabling end-to-end modeling of high-dimensional clinical inputs, including medical images, physiological waveforms, and free-text clinical notes. 
Systems achieved near-expert or expert-level performance on multiple diagnostic tasks, particularly in medical imaging, as demonstrated by large-scale convolutional models for skin cancer classification~\cite{esteva2017dermatologist}. 
At the same time, deep neural architectures were increasingly applied to longitudinal EHR data for outcome prediction and patient risk modeling~\cite{shickel2017deep}. 
Despite these advances, most deep learning systems continued to operate within a predictive paradigm, leaving clinical decision-making and intervention interpretation to downstream human judgment, without explicit causal or decision-theoretic representations.

\textbf{Foundation models and natural-language clinical interfaces (2020s--present).}
Since approximately 2020, foundation models and large language models have been explored as general-purpose interfaces for clinical assistance. 
General-purpose LLMs demonstrated strong performance on medical question answering and professional examination benchmarks, highlighting the feasibility of natural-language clinical interaction at scale~\cite{nori2023capabilities}. 
Domain-adapted medical LLMs, such as Med-PaLM, further advanced performance through targeted training and evaluation on medical reasoning tasks~\cite{singhal2025toward}. 
In parallel, industry-developed medical LLMs including HuatuoGPT, DoctorGLM, and Baichuan-M1 emphasized medical dialogue, domain knowledge understanding, and clinician-facing assistance~\cite{zhang2305huatuogpt,xiong2023doctorglm,wang2025baichuan}. 
Across this generation, retrieval-augmented generation techniques have been widely adopted to ground responses in biomedical literature and clinical guidelines~\cite{lewis2020retrieval}. 
While these systems prioritize interaction and synthesis, clinical decision support is still predominantly operationalized through narrative outputs rather than explicit, recomputable representations of interventional effects.
Beyond the historical evolution of medical AI models, several parallel research threads have sought to improve reliability, evidence usage, and decision support. We briefly summarize these directions below, focusing on their relationship to executable causal decision-making.

\subsection{Literature-grounded generation and evidence-centric NLP}
Recent work has explored grounding language model outputs in external knowledge sources to improve factual reliability and reduce hallucination. Retrieval-augmented generation (RAG) combines parametric language models with non-parametric document retrieval, enabling responses to be conditioned on retrieved corpora rather than relying solely on internal parameters~\cite{lewis2020retrieval}. In clinical settings, RAG-style systems have been instantiated for guideline- and literature-backed assistance (e.g., retrieving medical references for treatment recommendations)~\cite{zakka2024almanac} and systematically benchmarked in medical QA, where toolkits such as MedRAG highlight both accuracy gains and failure modes (e.g., retrieval sensitivity and “lost-in-the-middle”)~\cite{xiong2023doctorglm}.

Beyond retrieval, a growing line of work aims to make grounding explicit through citation-aware generation and fine-grained attribution. “Attributed QA” formalizes the requirement that generated claims be supported by specific evidence spans~\cite{bohnet2022attributed}, while datasets and evaluation protocols (e.g., expert-judged claim–evidence pairs) operationalize supportiveness and citeworthiness as measurable targets~\cite{malaviya2024expertqa}. Complementary methods improve citation behavior through prompt- and reasoning-based control (e.g., generating with citations via structured reasoning)~\cite{ji2024chain}, or by designing end-to-end medical citation generation and evaluation pipelines~\cite{wang2025medcite}. Post-hoc attribution and verification further retrofit grounding onto black-box generators by searching for supporting sources and revising unsupported statements~\cite{gao2023rarr}, and recent work has proposed automated pipelines to assess whether cited sources truly support model claims in health-related queries.

While these approaches improve transparency and verifiability at the text level—often reducing hallucinations or enabling claim-level evidence checking~\cite{manakul2023selfcheckgpt}—they typically treat evidence as narrative justification rather than as structured, executable objects. Retrieved studies are surfaced as references or rationales, but their estimand definitions (contrast, population, time horizon), effect scales/units, and uncertainty semantics are rarely normalized across sources, limiting downstream use for accountable decision-making and interventional planning.

\subsection{Evidence-based medicine and systematic evidence synthesis}
Evidence-based medicine (EBM) provides a principled framework for evaluating clinical knowledge through hierarchies of evidence, randomized controlled trials, and systematic reviews~\cite{guyatt1992evidence,sackett1996evidence}.
Meta-analysis and clinical guideline development aim to aggregate results across studies to support evidence-based recommendations, with increasing emphasis on transparency, reproducibility, and methodological rigor~\cite{higgins2008cochrane}.
However, traditional evidence synthesis pipelines remain largely manual and slow to update, and typically focus on population-average conclusions rather than individualized or context-specific effects~\cite{greenhalgh2014evidence}.
Effect heterogeneity, conflicting findings, and limitations in external validity across populations are often summarized qualitatively, rather than represented in machine-readable or executable forms, posing challenges for integration with computational clinical decision support systems, particularly for real-time or adaptive decision-making~\cite{kent2010assessing,ioannidis2016most}.

\subsection{Causal inference and causal machine learning for clinical decision-making}
\label{Causal_inference}
Causal inference provides formal tools for estimating interventional effects, including structural causal models, potential outcomes, propensity-based adjustment, instrumental variables, mediation analysis, and heterogeneous treatment effect estimation~\cite{pearl2000models,hernan2010causal,vanderweele2015explanation}.
These frameworks support principled answers to counterfactual and interventional queries central to clinical decision-making, and have been widely applied in epidemiology and health services research—for example, using propensity score methods to estimate treatment effects from observational EHR data~\cite{austin2011introduction}, instrumental variables to address unmeasured confounding in pharmacoepidemiology~\cite{greenland2000introduction}, and mediation analysis to decompose mechanisms of treatment effects~\cite{vanderweele2015explanation}.
More recently, causal machine learning methods have improved scalability and flexibility, enabling estimation of heterogeneous treatment effects and non-linear causal relationships in high-dimensional settings~\cite{athey2016recursive,chernozhukov2018double}.
Despite these advances, causal analyses in medicine typically operate within individual datasets or single-study contexts, with estimands, populations, and effect scales defined ad hoc. As a result, causal estimates are rarely standardized across studies or compiled into reusable representations, limiting their integration with broader evidence ecosystems and decision-support infrastructures~\cite{hernan2010causal}. This limitation becomes critical in large-scale evidence clinical deployment.

\subsection{Multi-source evidence fusion, conflict governance, and auditability}
\label{Multi-source}
Combining evidence across heterogeneous sources raises challenges in effect standardization, conflict resolution, and uncertainty propagation~\cite{ioannidis2016most,ioannidis2005most}.
Prior work has explored ensemble learning, Bayesian model averaging, and sensitivity analysis to address model uncertainty and robustness, but systematic mechanisms for governing evidential conflicts across studies remain limited~\cite{hoeting1999bayesian,greenland2005multiple}.
In clinical settings, auditability and external validity are critical for safe deployment~\cite{topol2019high,amann2020explainability}.
Existing medical AI systems rarely provide explicit provenance, executable effect definitions, or validation signals against real-world data, making it difficult to assess reliability, adjudicate conflicting claims, or prevent unsafe recommendations at scale~\cite{kelly2019key}.

This part details a systematic methodological   framework for constructing a structured, evidence-based medical dataset and establishing computable mappings from abstract literature variables to specific fields within the local Human Phenotype Project (HPP) database, ultimately achieving automated empirical verification of medical mechanisms and pathways. The dataset construction encompasses a closed-loop pipeline involving literature metadata cleaning, structured extraction of causal logic, semantic alignment with local data dictionaries, and empirical testing of statistical models. The primary objective is to verify that medical pathways identified in the literature exhibit consistent directionality, non-zero effects, and plausible magnitudes within the local HPP dataset.

\subsection{Construction of Structured Evidence Dataset}
\label{HPP}
\subsubsection{Literature discovery and classification system and corresponding overview}
This study established a systematic pipeline for literature discovery and hierarchical processing. This pipeline aims to extract high-quality research with clear methodological characteristics from massive amounts of unstructured biomedical text and map them to structured data templates using a rigorous classification framework.

To ensure the scientific rigor and timeliness of the source data, a strict retrieval strategy and hard guardrails were implemented during the construction phase of this dataset. In terms of time dimension, research published between January 1, 2020 and December 31, 2025 were included to ensure that the extracted evidence reflects the latest developments in the relevant field. For model utilization, Gemini 3 Pro and Grok 4.1 Thinking were employed for preliminary literature screening and extraction, Claude Sonnet 4.5 Thinking was utilized to read, extract, and summarize evidence cards, and to map literature nodes to the local HPP dataset. Regarding publication source quality, candidate literature was restricted to journals with an Impact Factor $\ge$ 10, serving as a proxy for rigorous study design and result credibility. To mitigate the risk of hallucinated citations associated with generative models, we implemented a strict metadata validation mechanism. All candidate papers were required to possess a resolvable DOI with authoritative metadata verified via Crossref. Furthermore, valid landing pages for these DOIs must be identifiable on PubMed, PubMed Central (PMC), or domains within a trusted publisher whitelist. Entries lacking a DOI, failing to yield consistent Crossref metadata, or lacking association with a whitelisted domain were automatically excluded. Complementing this automated validation, a rigorous manual review process was executed, in which all final included data underwent human verification to ensure completeness, authenticity, and accuracy.

Based on distinct research foci, this dataset establishes a mutually exclusive and exhaustive taxonomy comprising four evidence categories: Interventional, Causal, Mechanistic, and Associational.

\textbf{Interventional}
The primary objective of interventional studies is to confirm the effectiveness of interventions, evaluating the net causal effect of a specific exposure or treatment on clinical outcomes under strictly controlled experimental conditions. These studies constitute the primary benchmark within the overall framework. The classification criteria for this category are exclusive, relying primarily on PubMed publication types explicitly tagged as "Randomized Controlled Trial" or "Clinical Trial," as well as methodological keywords such as "randomized," "double-blind," "placebo," or the presence of clinical trial registration numbers (e.g., NCT identifiers). In the localization validation of the HPP dataset, interventional studies require the reproduction of statistical significance, with a specific emphasis on verifying the directional consistency of the Average Treatment Effect (ATE) between the intervention and control groups.

\textbf{Causal}
Causal studies utilize specific statistical methods to control for confounding within observational data to estimate the causal effect between exposure and outcome. These studies mitigate the limitations of randomized controlled trials due to sample scarcity and high costs. The criteria for systematically identifying such studies are whether they use explicit causal identification strategies, including Mendelian randomization, instrumental variables, difference-in-differences, propensity score matching/IPTW, and target trial emulation. This classification distinguishes studies that employ causal identification strategies from those that only perform conventional regression adjustments.

\textbf{Mechanistic}
Mechanism studies focus on analyzing how exposure factors influence outcomes through intermediate variables, delving into the internal  transmission pathways and biological mechanisms of the study. Within the overall framework, the objective of this category is to validate the existence of biological pathways. The core criteria for evaluating such studies are the presence of explicit mediation analysis or indirect effect calculations, and whether a complete $X \to M \to Y$ chain is constructed in the research hypotheses. Key metrics include the statistical significance of path coefficients and the proportion of the effect explained by the mediator. In HPP data validation, the emphasis is on verifying the existence and validity of specific biological pathways in the HPP population.

\textbf{Associational}
Associational studies characterize statistical correlations between variables, typically used to capture phenomenological covariation to generate preliminary scientific hypotheses. This category is defined via exclusion, encompassing cross-sectional and cohort studies that report adjusted Hazard Ratios (HR), Odds Ratios (OR), or regression coefficients, but do not employ randomization or the aforementioned specific causal inference methods. Distinct from the validation of causal effects, the validation process for this category primarily assesses the direction and magnitude of observed correlations.

\subsubsection{Structured extraction of evidence cards}
Following literature retrieval and classification, the task extraction phase employs a prompt engineering framework to transform unstructured natural language text into machine readable JSON objects. The resulting data structure integrates general modules with category specific components, utilizing distinct extraction templates for each of the four evidence types. The extraction process was performed locally based on the main text and supplementary materials of the paper's PDF document. The extraction task is decomposed into fine-grained field-filling operations, requiring the precise identification of key variables (e.g., exposures, outcomes, mediators, covariates), numerical parameters (e.g., point estimates, confidence intervals, $p$-values), sample characteristics (e.g., inclusion/exclusion criteria, demographics), and critical study design attributes (e.g., blinding, follow-up duration), ultimately converting the natural language description into standardized JSON key-value pairs. The extraction process required mandatory labeling of the evidence source; each extracted data point must be accompanied by a specific figure number or page index, enabling content backtracking from structured data to the original literature and providing audit clues for subsequent manual review and automated verification.



This dataset establishes a general architecture comprising eleven core modules. Table ~\ref{tab:categories} details the definition and function of each module:

\begin{table}[H]
\small
\centering
\renewcommand{\arraystretch}{1.2}
\caption{The four categories share a common subcategory overview.}
\label{tab:categories}
\begin{tabularx}{\textwidth}{@{} >{\bfseries}l X @{}} 
\toprule
Module & \multicolumn{1}{c}{Definition and Standards} \\ \midrule

Paper & 
The paper's title and abstract, journal, and year are used as basic bibliographic information, and PMID and DOI are extracted to establish digital anchors; the registration number of clinical research records is used to associate the trial protocol. \\

Provenance & 
Establish a foundation for data auditing by creating a physical mapping between structured data and unstructured raw text. This includes a list of specific chart numbers and page ranges for core conclusions, as well as relevant information for confirming supplementary materials. \\

Design & 
Documenting the methodological context in which the evidence was generated provides context for assessing the strength of the evidence. This includes the type of research structure (e.g., "RCT", "Cohort", "Cross-sectional", "Case-control"), statistical analysis methods (e.g., "Cox Proportional Hazards", "Linear Mixed Model", "Logistic Regression", "Difference-in-Differences"), total sample size, total number of study groups, blinding implementation (e.g., "Double-blind", "Single-blind", "None", "Open-label"), randomization scheme (e.g., "Block", "Stratified", "Cluster", "None"), and estimation objectives (e.g., "ITT" for intention-to-treat analysis, "Per-protocol" for per-protocol analysis, and "ATE" for average treatment effect). \\

Population & 
Structured fields are used to characterize the biological boundaries of research subjects, serving as a direct basis for screening the HPP local validation cohort. These include descriptions of the data population's age, sex composition, specific disease or health conditions (such as "Type 2 Diabetes patients", "Healthy volunteers", "Obstructive Sleep Apnea"), and descriptions of key inclusion and exclusion criteria.\\

Transport Signature & 
Record the research center, data collection period, geographical location, and medical setting to assess the external validity of the evidence and its transferability from the source population to the target population. Data includes the name of the research center where the data was collected, the specific start and end dates or year range of data collection, the geographical region, the medical setting (e.g., "Community", "Hospital", "Clinic", "Laboratory"), and the nature of the data source (e.g., "Clinical Trial", "Registry", "EHR", "Cohort Study").\\

Variables & 
A "node\_role" separation design is adopted to define the physical properties of variables and their functions in the causal network. It mainly consists of variable nodes and their roles. Variable nodes are defined by an identifier, their name in the paper, data type, unit of measurement, and the UCUM standard code for the unit. Role assignments consist of the variable's role in the current evidence pathway, generally including exposure variables (X), outcome variables (Y), mediating variables (M), causal variables (C), covariates (Z), and environmental or instrumental variables (E).\\

Time Semantics & 
Based on the ISO 8601 standard, time parameters are defined to accurately distinguish the temporal characteristics of cross-sectional associations and prospective causal effects. These include the duration of exposure or measurement window (e.g., "P14D" indicates 14 days), the duration of the baseline observation window, the specific time point array for outcome assessment, the time offset of the record relative to the baseline, the total follow-up duration, the lag between the occurrence of exposure and the manifestation of the effect (e.g., "P1Y" indicates a lag of one year), the duration of effect maintenance, and the temporal granularity description (e.g., "Coarse", "Fine", "Cross-sectional").\\

Effects & 
The core of the evidence card is computationally readable, storing specific statistical inference results. This includes effect edge identifiers, effect indicator types (e.g., "HR", "OR", "RR", "Beta", "MD", "SMD"), point estimates of effect sizes, arrays of confidence intervals, statistical p-values, the specific statistical model name that generated the effect, names of all covariates, effective sample size, and detailed descriptions of the exposure comparison objects, including the comparison type (e.g., "per\_unit") and specific difference descriptions .\\

Dose Response & 
For continuous exposures, describe the nonlinear relationship between the exposure and the outcome, along with relevant parameters. This includes a functional description of the dose-response relationship (e.g., "Linear", "U-shaped", "J-shaped", "Threshold"), the object structure, records specific parameters describing the curve shape (e.g., slope "Slope" or spline nodes "Knots"), and the associated effect edge IDs, pointing to the corresponding entries in the Effects module.\\

Measurement & 
Record the physical measurement equipment, scoring algorithms, or derived calculation rules for key variables to quantify potential biases introduced by measurement errors. This includes the names of the physical equipment used and the calculation rules for derived variables.\\

Governance & 
Based on the research design and risk of bias assessment, the evidence is classified into Tier A/B/C, and suggestions for fusion strategies are provided when there is conflict among multiple sources of evidence.\\ \bottomrule
\end{tabularx}
\end{table}


While the aforementioned general modules form the underlying foundation of the data, this extraction protocol incorporates highly customized, proprietary data modules for four different types of evidence cards to accurately capture the core logic of various research paradigms. The design of these modules strictly adheres to the methodological standards of their respective fields.

\paragraph{Interventional Specifics}
For Randomized Controlled Trials (RCTs) and clinical intervention studies, the extraction architecture defines two core objects: \texttt{arms} and \texttt{adherence}. The \texttt{arms} module documents specific design parameters for intervention and control groups, including \texttt{dose\_intensity}, \texttt{frequency}, \texttt{duration}, and \texttt{delivery\_mode} (e.g., in-person vs. remote). To evaluate attrition bias, the protocol enforces a strict distinction between the number of randomized participants (\texttt{n\_randomized}) and those included in the intention-to-treat analysis (\texttt{n\_analyzed\_itt}). Regarding equation formulation, this category employs an \texttt{estimand\_equation}, mathematically defined as the contrast of potential outcomes $E[Y(1) - Y(0)]$ under the randomization assumption. Distinct from observational settings, the formulation here prioritizes mitigating bias from non-adherence. Consequently, extraction targets Intention-to-Treat (ITT) results—analyzing subjects based on randomized assignment regardless of adherence or intercurrent events—to ensure a conservative and robust assessment of causal efficacy.

\textbf{Causal Specifics}
The central challenge of this category lies in identifying core medical causal effects, which is approximating ground-truth causality under non-randomized conditions. The extraction protocol incorporates \texttt{balance\_diagnostics} and \texttt{iv\_diagnostics} modules. For Propensity Score (PS) methods, the system extracts the scoring model (\texttt{ps\_model}), maximum standardized mean difference (\texttt{max\_smd}), and overlap metrics (\texttt{overlap}). For Instrumental Variable (IV) or Mendelian Randomization (MR) studies, instrument strength and independence tests (\texttt{pleiotropy\_tests}) are recorded. Regarding equation formulation, the \texttt{estimand\_equation} retains the form $E[Y(1) - Y(0)]$; however, unlike in interventional studies, its validity relies on untestable structural assumptions (e.g., "no unmeasured confounding" or "exclusion restrictions"). Consequently, extraction focuses on the identification strategy—documenting the specific methodological path (e.g., Inverse Probability Weighting, Instrumental Variables, or Difference-in-Differences) used to transform statistical associations into causal parameters, thereby mathematically eliminating confounding bias to simulate the counterfactual contrasts of a target trial.

\textbf{Mechanistic Specifics}
This category aims to elucidate biological transduction pathways
($X \rightarrow M \rightarrow Y$), characterized fundamentally by a
topological chain of variables.
The variable definition module enforces the identification of an ordered
list of mediators (\texttt{M}).
The \texttt{inference} module requires the explicit enumeration of the
pathway string (\texttt{pathway}, e.g., \texttt{Sleep -> IL-6 -> CVD}). Unique to this category, the \texttt{mechanism\_equation} focuses on pathway decomposition rather than solely on endpoint effects. Mathematically grounded in Structural Equation Modeling (SEM), typically
comprising a mediator model ($M = aX + \ldots$) and an outcome model
($Y = c'X + bM + \ldots$), the protocol mandates a clear distinction between the Natural Direct Effect (NDE, $c'$) and the Natural Indirect Effect (NIE, typically $a \times b$), alongside the calculation of the proportion mediated. This shifts the validation focus from intervention efficacy to whether a specific biological process is effective, constructing an explanatory evidence chain from phenomenon to mechanism.

\textbf{Associational Specifics}
Serving as the foundational layer of the evidence hierarchy, the extraction protocol for associational studies prioritizes the accurate recording of statistical correlations while rigorously preventing causal over-interpretation. This category excludes all diagnostic fields related to causal identification, instead introducing modules for the \texttt{association\_equation} and \texttt{descriptives}.Distinct from the preceding categories, the mathematical logic of the \texttt{association\_equation} is defined by the conditional expectation
\(E[Y \mid X, W]\), representing the distribution of outcome \(Y\) within a population defined by observed features \(W\). Here, control variables are strictly defined as covariates; the equation functions fundamentally as curve fitting rather than counterfactual inference. The \texttt{descriptives} module mandates the extraction of outcome distribution data stratified by exposure category, facilitating baseline characteristic comparisons and trend visualization analysis. Finally, the
\texttt{association\_nature} module explicitly records \texttt{residual\_confounding\_risk}, strictly confining the evidentiary scope to prediction and description to preclude misleading inferences regarding interventional guidance.

\begin{figure*}[h]
    \centering
    \includegraphics[width=1\linewidth]{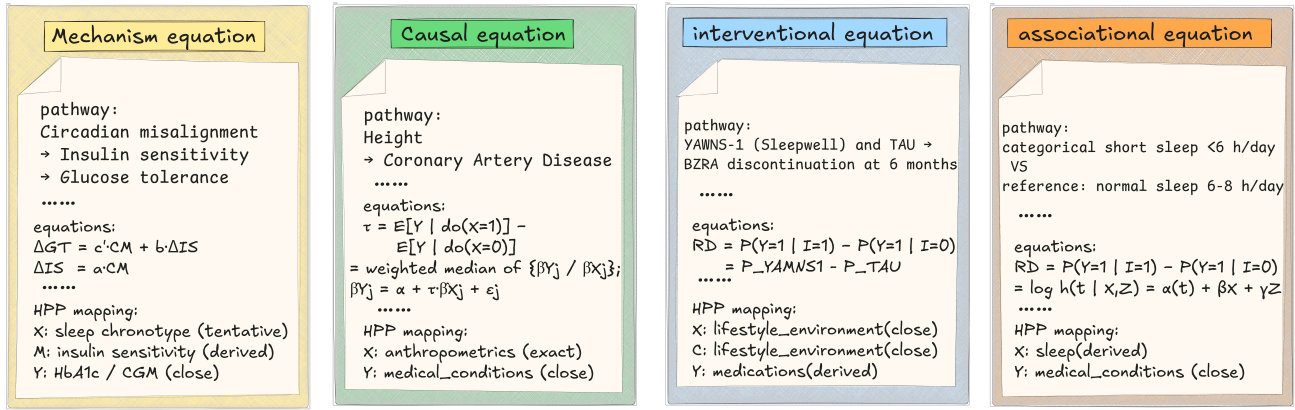}
    \vspace{-1em}
    \caption{Examples of evidence card files for each category.}
    \label{fig:evidence_card_temple}
    \vspace{-0.5em}
\end{figure*}

\begin{table*}[h]
\centering
\small
\resizebox{\linewidth}{!}{
\begin{tabular}{llll}
\toprule
\textbf{Card type} & \textbf{Component} & \textbf{Parameter} & \textbf{Value / Definition} \\
\midrule

\multirow{3}{*}{Mechanism}
& Glucose tolerance equation
& $c'$
& $6.0\%$ increase \\

& Glucose tolerance equation
& $b$
& $-1.0$ (directional) \\

& Insulin sensitivity equation
& $a$
& $-1.0$ (directional) \\

\midrule

\multirow{2}{*}{Causal}
& MR-weighted median
& $\tau$
& $0.113$ \\

& MR-Egger
& $\tau$
& $0.095$ \\

\midrule

\multirow{3}{*}{Interventional}
& Risk difference
& RD
& $P(Y{=}1\mid I{=}1) - P(Y{=}1\mid I{=}0)$ \\

& Intervention arm
& $P_{\text{YAWNS1}}$
& $0.262$ / $0.294$ \\

& Control arm
& $P_{\text{TAU}}$
& $0.075$ / $0.076$ \\

\midrule

\multirow{6}{*}{Associational}
& Cox proportional hazards model
& $\beta$
& $\log(1.21) \approx 0.1906$ \\

& Cox proportional hazards model
& $\gamma$
& 3.2436 \\

& Exposure definition
& $X$
& Indicator: short sleep $<6$ h/day (ref: 6--8 h/day) \\

& Covariate set
& $Z$
& 18 baseline covariates (demographic, lifestyle, metabolic, comorbidity) \\

& Cox proportional hazards model
& $\alpha(t)$
& Estimated non-parametrically \\

& Spline specification
& Knots
& 5th, 35th, 65th, 95th percentiles \\

\bottomrule
\end{tabular}
}
\caption{Registry of key parameters and hyperparameters across evidence card categories. Structural equations are summarized in Figure~\ref{fig:evidence_card_temple}; this table reports numerical parameters, operational thresholds, and implementation-specific settings used in each evidence card.}
\label{tab:evidence_card_temple}
\end{table*}

\subsection{HPP Field Mapping}
Building upon the evidence cards detailed in the preceding subsection, we establish a bridge between unstructured natural language descriptions from the literature and the specific local Human Phenotype Project (HPP) cohort data. This integration is achieved through deep alignment across semantic consistency, measurement methodology, and unit standardization.

\subsubsection{Mapping rule}

This study establishes a precise mapping rule grounded in a metadata dictionary. The core objective of this rule is to anchor abstract variable nodes extracted from evidence cards, including exposures ($X$), outcomes ($Y$), mediators ($M$), and instrumental variables ($IV$) to specific physical fields within the HPP database. The entire mapping system comprises a two-layer architecture featuring unique identifier locking and confidence grading. This framework first establishes variable node directionality via a strict data dictionary benchmark, and subsequently quantifies the match quality of each mapping relationship through a standardized status classification system.

The sole Ground Truth for this mapping process is restricted to the official Data Dictionary of the HPP platform \href{[https://knowledgebase.pheno.ai/participant_journey.html](https://knowledgebase.pheno.ai/participant_journey.html)}{Pheno AI Knowledgebase}, which serves as the definitive metadata source for all subdatasets.  Specifically, mappings must point strictly to the \texttt{tabular\_field\_name} within the database, which functions as the unique, persistent identifier bridging literature concepts and local data. The extraction protocol mandates that the mapping results must be spelled exactly as in the data dictionary, including the preservation of case sensitivity and underscore formatting. The use of non-standardized generic names or fabricated created field names is strictly prohibited to ensure that subsequent automated validation scripts can accurately call the corresponding column data. Acknowledging the inevitable heterogeneity between variable definitions in medical literature and the actual measurement methods of the local HPP cohort, this study introduces state classification to quantify the matching accuracy and confidence of each mapping relationship. Based on the degree of consistency in the definition, the system divides the mapping results into the following four mutually exclusive categories: Exact, Close, Derived, and Tentative. Examples of abridged evidence cards are shown in Figure~\ref{fig:evidence_card_temple} and Table~\ref{tab:evidence_card_temple}. The criteria and operating procedures for determining each mapping state are as follows:






\textbf{Exact}
Indicates that the variable from the literature is identical to the HPP field in definition, unit, and measurement methodology. For example, "Systolic BP (mmHg)" in the literature maps directly to \texttt{sbp\_mmHg} in HPP. This category offers the highest validation validity.

\textbf{Close}
Indicates high conceptual consistency between the literature variable and the HPP field, despite non-essential differences in specific device models, questionnaire versions, or unit magnitudes. These differences are judged not to fundamentally alter the biological significance of the variable. For instance, "AHI measured by PSG" in the literature versus "pAHI measured by portable devices" in HPP. In such cases, discrepancies must be explicitly documented in the \texttt{notes} field to facilitate bias assessment during validation.

\textbf{Derived}
Indicates that the literature variable does not correspond to a single raw field in HPP but can be obtained via mathematical calculation, logical aggregation, or thresholding of existing fields. Common scenarios include unit conversion (e.g., mg/dL to mmol/L), multi-field synthesis (e.g., calculating BMI from height and weight), or discretization of continuous variables (e.g., generating a categorical variable by applying a "<6 hours" threshold to a continuous sleep duration field). Such mappings require the explicit recording of the calculation formula or derivation logic in the \texttt{notes} field.

\textbf{Tentative}
Indicates that no clear corresponding field exists in the HPP data dictionary, or that ambiguous candidates exist without sufficient information to confirm consistency. These mappings are retained solely as potential leads for validation and are typically excluded from the automated validation of core mechanisms. The source of uncertainty and potential candidate fields must be noted in the \texttt{notes} field.

\subsubsection{Variable conversion and alignment}
Following basic field anchoring, we implemented a standardized variable transformation and alignment protocol for variables classified as Close or Derived to eliminate systematic bias arising from discrepancies in units, granularity, or definition thresholds. For cases where the units reported in the literature are inconsistent with the native units of the local HPP database, the system employs a conversion mechanism based on the UCUM (Unified Code for Units of Measure) standard, mandating the recording of precise conversion formulas in the remarks field (e.g., the literature uses the conventional unit mg/dL to report blood glucose, while HPP uses the international unit mmol/L), ensuring that the values are in the same dimensional space before entering the statistical validation model. For composite variables not representable by a single physical field, the protocol defines multi-field aggregation rules. Multiple source fields are linked via pipe separators, with synthesis algorithms—such as summation, averaging, or logical derivation—explicitly recorded to construct physiological metrics consistent with the literature (e.g., calculating Total Energy Intake by summing specific nutrients, or BMI from height and weight). For complex derived variables, such as "energy balance" inferred from daily weight changes and Dual-energy X-ray Absorptiometry (DXA) data, the system requires derivation formulas and intermediate variables are required to ensure logical alignment with the source text. Given that medical literature often uses specific cutoff values to convert continuous indicators into categorical variables, the alignment protocol mandates the explicit definition of classification thresholds and reference groups. These are marked as Derived states, with discretization rules specified to facilitate subsequent data binning. During validation, the system dynamically applies these rules to bin or binarize raw HPP data, reconstructing exposure and control groups homogeneous to those in the literature to ensure the comparability of categorical effect sizes. Finally, the system enforces strict temporal anchoring, aligning measurement time points from the literature with specific local data collection waves (e.g., mapping "Visit 1" to the corresponding HPP wave). Where literature employs means over specific time windows (e.g., "average sleep duration over the past 7 nights"), the protocol requires aggregating repeated measures from the corresponding local timeframe. This mechanism effectively prevents interference from time-varying confounding factors, ensuring that the temporal logic of exposure and outcome conforms to the basic assumptions of causal inference.


\subsection{Structural equation construction}
\subsubsection{Verification environment definition}
To enable the automated validation and causal inference of medical mechanism pathways on the local HPP dataset, we established a standardized validation environment and variable domain. Given the complexity of causal inference and the sparsity of longitudinal data, the current validation framework adopts a "Single Time-point Slice" model. This means that all variables exposure $X$, outcome $Y$, mediator $M$, and covariates $Z$, are anchored within a specific temporal window surrounding the participant's enrollment baseline (default: baseline $\pm$ 90 days). This setting treats each pathway to be validated, including the involved exposure $X$, outcome $Y$, potential mediator $M$, and adjustment set $Z$, as static observations within the same time slice, focusing on validating the causal structure of the mechanism pathway in the cross-sectional or short-term context. The construction of the validation cohort strictly follows the inclusion and exclusion criteria defined in the \texttt{population.eligibility\_signature} field of the evidence card. The automated system dynamically queries the HPP database to identify sub-cohorts matching specified age ranges, gender distributions, and disease histories, subsequently applying standardized preprocessing pipelines to key variables. For missing data, the system employs imputation strategies based on multiple imputation or full case analysis; for outliers in continuous variables, it uses interquartile range-based truncation. These definitions transform heterogeneous literature evidence into homogeneous, data-complete, static statistical tasks on the local dataset.

\subsubsection{Unified Structural Equations (EL-GSE) Specification}
To achieve a standardized transformation from unstructured documentary evidence to computable causal objects, this study defines and establishes the Edge-Local Generalized Structural Equation (EL-GSE) specification. EL-GSE is a general mathematical abstraction template designed to encapsulate each causal path extracted from the evidence card into an atomic unit with independent computational capabilities. For any candidate causal edge $X \to Y$, its EL-GSE formal definition is as follows:
$$g(\mathbb{E}[Y \mid X, Z]) = \alpha + \beta \cdot T_X(X) + \gamma^\top T_Z(Z)$$
In this equation, $g(\cdot)$ represents the link function, the form of which is strictly determined by the data distribution type of the outcome variable $Y$; $T_X(\cdot)$ is a contrastive coding function for the exposure variable $X$ (including standardization, quantile contrast, or treatment/control coding) used to mathematically align the effect definitions reported in the literature; $T_Z(\cdot)$ is the vectorized coded representation of the adjustment set $Z$; $\beta$ is the core causal parameter to be validated, corresponding to the effect size in the evidence card; $\gamma$ is the regression coefficient vector of the adjustment set; and $\alpha$ is the intercept term.

Based on the distribution characteristics of the outcome variables, the EL-GSE specification further derives two standard sub-forms to adapt to actual data: For continuous outcomes, the equation adopts the identity link function and the Gaussian noise assumption, degenerating into a linear structure equation $Y := \alpha + \beta T_X(X) + \gamma^\top T_Z(Z) + \epsilon$, where $\epsilon \sim \mathcal{N}(0, \sigma^2)$. For binary outcomes, the equation adopts the Logit link function and the Bernoulli distribution assumption, transforming into a logistic structure equation $Y := \mathrm{Bernoulli}(\pi)$, where $\text{logit}(\pi) = \alpha + \beta T_X(X) + \gamma^\top T_Z(Z)$.

The core advantage of the EL-GSE specification lies not only in its support for statistical reproducibility at the parameter level (such as verifying the direction and significance of $\beta$), but more importantly, in its inherent support for causal computation (Do-calculus). Under the premise of a single time point and identifiable backdoor path, this equation can be directly used as the execution carrier of the \texttt{do()} operator, utilizing the G-computation formula $\mathbb{E}[Y \mid do(X=x)] = \mathbb{E}_Z [g^{-1}(\alpha + \beta T_X(x) + \gamma^\top T_Z(Z))]$ to calculate the average intervention effect at the population level, or to calculate personalized counterfactual prognoses based on the covariate $Z=z$ for a specific individual. Finally, these equations and their parameters are serialized into a standard JSON object containing \texttt{form}, \texttt{parameters}, and \texttt{do\_support}, completing the transition from static literature knowledge to a dynamic, executable causal model.

\end{document}